\newtheorem{theorem}{Theorem}
\newtheorem*{theorem*}{Theorem}
\newtheorem{lemma}{Lemma}
\newtheorem{remark}{Remark}
\newtheorem{corollary}{Corollary}
\newtheorem{assumption}{Assumption}
\DeclareMathOperator*{\argmin}{argmin}
\def\*#1{\mathbf{#1}}
\def\$#1{\mathcal{#1}}
\def\^#1{\mathbb{#1}}
\def\innerprod#1{\left\langle#1\right\rangle}
\def\qtext#1{\quad\text{#1}\quad}
\title{Optimization Induced  Deep Equilibrium Networks}
\author{%
   Xingyu Xie\thanks{Equal Contribution}~~\thanks{Key Laboratory of Machine Perception (MOE), School of EECS, Peking University}\\
   \And
   Qiuhao Wang\footnotemark[1]~~\footnotemark[2]\\
  \And
  Zenan Ling \footnotemark[2]~~\thanks{Pazhou Lab, Guangzhou, China}
  \And
  Xia Li\thanks{Swiss Federal Institute of Technology}\\
  \And
  Yisen Wang\footnotemark[2]\\
  \And
  Guangcan Liu\thanks{Nanjing University of Information Science \& Technology}\\
  \And
  Zhouchen Lin\footnotemark[2]\\
  \texttt{zlin@pku.edu.cn}\\
}
\begin{document}

\maketitle

\begin{abstract}
Implicit equilibrium models, i.e., deep neural networks (DNNs) defined by implicit equations, have been becoming more and more attractive recently. In this paper, we investigate an emerging question: can an implicit equilibrium model's equilibrium point be regarded as the solution of an optimization problem? To this end, we first decompose DNNs into a new class of unit layer that is the proximal operator of an implicit convex function while keeping its output unchanged. Then, the equilibrium model of the unit layer can be derived, named Optimization Induced Equilibrium Networks (OptEq), which can be easily extended to deep layers. The equilibrium point of OptEq can be theoretically connected to the solution of its corresponding convex optimization problem with explicit objectives. Based on this, we can flexibly introduce prior properties to the equilibrium points: 1) modifying the underlying convex problems explicitly so as to change the architectures of OptEq; and 2) merging the information into the fixed point iteration, which guarantees to choose the desired equilibrium point when the fixed point set is non-singleton. We show that deep OptEq outperforms previous implicit models even with fewer parameters.
This work establishes the first step towards optimization guided design of deep models. 

\end{abstract}

\section{Introduction}

Recently, the implicit models have gained significant attention, since they have been demonstrated to match or exceed the performance of traditional deep neural networks (DNNs) while consuming much less memory. Instances of such models include Neural ODE \cite{chen2018neural,massaroli2020dissecting}, whose implicit equation is the ODE of a continuous-time dynamical system, and Deep Equilibrium (DEQ) Model \cite{bai2019deep,bai2020multiscale}, which tries to solve a nonlinear fixed point equation. Instead of specifying the explicit computation procedure, the implicit equilibrium model specifies some conditions that should be held at its solution. Thus the implicit equilibrium model’s output is created by finding a solution to some implicitly defined equation. This is quite suitable to study from the optimization perspective, since most convex problems have their equivalent monotone inclusion form, i.e., finding the zero set of the sum of maximal monotone operators. 

Some previous work aims at explaining the forward propagation of DNNs as optimization iterations. \citet{li2020training} show that DNN’s activation function is indeed a proximal operator if it is non-decreasing. Other works study unrolled networks for optimization \cite{amos2017optnet,agrawal2019differentiable}, \textit{i.e.}, designing neural networks following the optimization algorithms. However, given a general DNN architecture, whether there exists an underlying optimization problem remains open. If we can find such an underlying optimization problem explicitly, it may help us further understand the mechanism of DNN and the reasons for its empirical success. 
We can even introduce the customized property into the DNN by adding some regularization terms to the underlying optimization objective, which may help to inspire more effective DNN architectures. 
Moreover, when finding the model's output is equivalent to minimizing a convex objective, we can utilize any optimization algorithm, especially the accelerated ones, to obtain the equilibrium point rather than limited root find methods.


In this paper, we investigate the emerging problem of discoverying the underlying optimization problem of DNN. We first reformulate the general multi-layer feedforward DNN and decompose it into the composition of a new class of unit layers while keeping the output unchanged. The unit layer is the proximal operator of an underlying implicit convex function under a mild condition. Thus the underlying convex function determines its behavior completely. We propose the corresponding equilibrium model of the new unit layer, named Optimization Induced Equilibrium Networks (OptEq). By replacing the convex objective with its Moreau envelope, OptEq naturally produces the commonly used skip connection architecture while keeping the equilibrium point unchanged. To strengthen the representation ability, we further propose a deep version of OptEq, which includes the general feedforward DNN as a special case. We also provide the underlying convex optimization problem of deep OptEq, whose objective is the sum of Moreau envelopes of several proper convex functions. Moreover, 
we can introduce any customized property to the equilibrium of OptEq, e.g., feature disentanglement. We propose two methods for such feature regularization. The first way is to modify the underlying optimization problem directly, which will lead to changes in the architecture of OptEq. Another method is using a modified SAM \cite{sabach2017first} iteration to select the fixed point with minimum regularization when the fixed point set is non-singleton. In summary, our main contributions include:
\begin{itemize}
    \item By decomposing the general DNN, we propose a new class of unit layer, which is the proximal operator of a convex function. Then we extract the unit layer to make it an equilibrium model called OptEq, and further propose its deep version. Without further reparameterization, the equilibrium point of OptEq is a solution to an underlying convex problem. 
    \item We propose two methods to introduce customized properties to the model's equilibrium points. One is inspired by the underlying optimization, and the other is induced by a modified SAM~\cite{sabach2017first} iteration.
	This is the first time that we can customize deep implicit models in a principled way.
	\item 
	We conduct experiments on CIFAR-10 for image classification and Cityscapes for semantic segmentation. Deep OptEq significantly outperforms implicit baseline models. Moreover, we also provide several feature regularizations that can significantly improve the generalization.
\end{itemize}

\section{The Proposed Optimization Induced Equilibrium Networks}
\subsection{Preliminaries and Notations}
We provide some definitions that are frequently used throughout the paper.
A function $f: \$H \rightarrow \^R \cup\{+\infty\}$ is proper if the set $\{x: f(x)<+\infty\}$ is non-empty, where $\${H}$ is the Euclidean space. 
We write l.s.c for short of lower semi-continuous.
The subdifferential, proximal operator and Moreau envelope of a proper convex function $f$ are defined as:
\[
\left\{
\begin{aligned}
&\partial f(\*x)\coloneqq\!\qty{\*g \!\in \!\$H : f(\*y) \geq f(\*x)+\innerprod{\*y \!-\!\*x, \*g}, \forall \*y \in \$H}, \\
&\operatorname{prox}_{\mu \cdot f}(\*x) \!\coloneqq\!\!\qty{\*z\!\in \! \$H : \*z\!=\!\argmin_{\*u} \frac{1}{2\mu}\norm{\*u\!-\!\*x}^{2} \!+\! f(\*u)},\\
&M_f^{\mu}(\*x)\coloneqq\mathop{\min}\limits_{\*u} \frac{1}{2\mu}\norm{\*u-\*x}^{2} \!+\! f(\*u),
\end{aligned}
\right.
\]
respectively, where $\innerprod{\cdot,\cdot}$ is the inner product and $\norm{\cdot}$ is the induced norm.
The conjugate $f^{*}$ of a proper convex function $f$ is defined as: $f^*(\*y)\coloneqq \sup\qty{\innerprod{\*y,\*x} - f(\*x): \*x\in\$H}$.
For the matrix $\*W$, $\norm{\*W}_2$ is the operator norm. 
While for the vector $\ell_2$-norm, we write $\norm{\cdot}$ for simplicity.
Given the map $\$T:\^R^m \to \^R^m$, the set of all fixed points of $\$T(\cdot)$ is denoted by $\operatorname{Fix}(\$T) = \qty{ \*z \in \^R^m : \$T(\*z) = \*z},$ whose cardinality is $|\operatorname{Fix}(\$T)|.$

DEQ is a recently proposed implicit model inspired by the observation on the feedforward DNN (with an input-skip connection): for $k=1,\cdots,L-1,$
\begin{equation}\label{eq:basicDNN}
\*z_{k+1}=\sigma (\*W_k \*z_k +\*U_k \*x+\*b_k),\quad \*y=\*W_{L+1}\*z_{L},
\end{equation}
where $\sigma(\cdot)$ is a non-linear activation function, $\*W_k \in \^R^{n_k\times n_{k-1}}$ and $\*U_k \in \^R^{n_k\times d}$ are learnable weights and $\*b_k \in \^R^{n_k}$ is a bias term.
A direct way to obtain the equilibrium point of this system is to consider the fixed point equation: 
$\*z^\star=\sigma (\*W \*z^\star +\*U \*x+\*b)$.
And we can utilize any root-finding algorithm to solve this equation.
Although DEQ may achieve good performance with a smaller number of parameters than DNNs, its superiority heavily relies on the careful initialization and regularization due to the instability issue of the fixed point problems. 
Some recent work \cite{winston2020monotone} is devoted to solving the instability issue by using a tricky re-parametrization of the weight matrix $\*W$. 
However, this may greatly weaken the expressive power of DEQ, see Prop. 8 in \cite{revay2020lipschitz}.
Moreover, in the present equation form, we have difficulty in getting further properties of the equilibrium point.

\subsection{One Layer OptEq}\label{sec:oneopteq}

Here, we consider an alternative form of the system in Eq.(\ref{eq:basicDNN}).
\begin{lemma}[Universal Hidden Unit]\label{lem:reformulate}
Given the parameters $\{\qty(\*W_{k} ,\*U_k, \*b_k)\}_{k=1}^L$ of a general DNN in Eq.(\ref{eq:basicDNN}), there exists a set of weights $\{\overline{\*W}_k \in \^R^{n_k\times m}\}_{k=0}^L$ with  $m \geq 2 \max\{n_k, k=0,\cdots,L\}$, such that the system in Eq.(\ref{eq:basicDNN}) can be re-written as the following network: for $k=1,\cdots,L-1,$
\begin{equation}\label{eq:reformulate}
    \overline{\*z}_{k+1}=\overline{\*W}_k^\top\sigma (\overline{\*W}_k \overline{\*z}_{k} +\*U_k \*x+\*b_k),\quad
    \*y=\overline{\*W}_{L+1}\overline{\*z}_{L}.    
\end{equation}
\end{lemma}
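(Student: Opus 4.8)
The plan is to reduce the claimed equivalence of Eq.~(\ref{eq:reformulate}) and Eq.~(\ref{eq:basicDNN}) to a purely algebraic factorization question about the new weights, and then to solve that factorization constructively. The guiding idea is that the ``sandwiched'' layer $\overline{\*z}_{k+1}=\overline{\*W}_k^\top\sigma(\overline{\*W}_k\overline{\*z}_k+\*U_k\*x+\*b_k)$ reproduces the original layer as soon as two things hold at every step: (i) the pre-activation arguments agree, i.e. $\overline{\*W}_k\overline{\*z}_k=\*W_k\*z_k$, and (ii) the state $\overline{\*z}_{k+1}=\overline{\*W}_k^\top\*z_{k+1}$ produced by the transpose can still be read correctly by the next layer. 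First I would record the ansatz $\overline{\*z}_k=\overline{\*W}_{k-1}^\top\*z_k$ and prove by induction on $k$ that, under it, $\sigma$ receives the same input in both networks, so the resulting $\*z_{k+1}$ are identical. The inductive step requires $\overline{\*W}_{k+1}\overline{\*W}_k^\top\*z_{k+1}=\*W_{k+1}\*z_{k+1}$ for the realized activations $\*z_{k+1}=\sigma(\cdot)$, and it suffices to enforce the matrix identity
\begin{equation*}
\overline{\*W}_{k+1}\overline{\*W}_k^\top=\*W_{k+1},\qquad k=1,\dots,L-1,
\end{equation*}
together with the boundary conditions $\overline{\*W}_1\overline{\*z}_1=\*W_1\*z_1$ for the input encoding and $\overline{\*W}_{L+1}\overline{\*W}_{L-1}^\top=\*W_{L+1}$ for the read-out. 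This reduces the lemma to exhibiting wide matrices $\{\overline{\*W}_k\in\^R^{n_k\times m}\}$ realizing these prescribed products.

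Next I would construct the $\overline{\*W}_k$ by a forward induction that maintains the invariant that each $\overline{\*W}_k$ has full row rank $n_k$. Pick any full-row-rank $\overline{\*W}_1$ (possible since $m\ge n_1$) and fix the input encoding by solving $\overline{\*W}_1\overline{\*z}_1=\*W_1\*z_1$, which is consistent because $\overline{\*W}_1$ is onto. Given $\overline{\*W}_k$ of full row rank, its transpose $\*M:=\overline{\*W}_k^\top\in\^R^{m\times n_k}$ has full column rank, so the equation $\*X\*M=\*W_{k+1}$ in the unknown $\*X=\overline{\*W}_{k+1}$ is consistent; one particular solution is $\overline{\*W}_{k+1}=\*W_{k+1}\*M^{\dagger}$ with the left inverse $\*M^{\dagger}=(\*M^\top\*M)^{-1}\*M^\top$. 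The general solution adds any $\*Z$ whose rows lie in the left null space of $\*M$, a subspace of dimension $m-n_k$; choosing these rows to be $n_{k+1}$ independent vectors there makes $\overline{\*W}_{k+1}=\*W_{k+1}\*M^{\dagger}+\*Z$ have full row rank $n_{k+1}$ while leaving the product $\overline{\*W}_{k+1}\*M=\*W_{k+1}$ unchanged, closing the induction. The read-out $\overline{\*W}_{L+1}$ is obtained from the same solve against the last maintained matrix (and need not be full rank).

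The crucial point, and where the dimension hypothesis enters, is this rank-restoring step: the particular solution $\*W_{k+1}\*M^{\dagger}$ can be arbitrarily rank deficient when $\*W_{k+1}$ is singular (unavoidable for general, possibly rectangular, layer weights), so full rank must be recovered using directions orthogonal to $\mathrm{col}(\*M)$. Since $\mathrm{col}(\*M)$ is $n_k$-dimensional, the usable left null space has dimension $m-n_k$, and supplying the missing rank forces $m-n_k\ge n_{k+1}$, i.e. $m\ge n_k+n_{k+1}$ at every junction; this is guaranteed by $m\ge 2\max_k n_k$, which is exactly the stated bound. I expect this dimension-counting/rank-maintenance argument to be the main obstacle, as it is precisely what prevents the factorization from getting stuck on a chain of rank-deficient weights, whereas the remaining pieces (the inductive matching of pre-activations and the two boundary conditions) are routine linear algebra. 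A convenient way to present everything uniformly is to pad each $n_k$ up to $M=\max_k n_k$ with zero rows and columns, so that all states live in a common $\^R^{M}$, all weights are $M\times M$, and $m=2M$; the padding is invisible to the coordinatewise $\sigma$ and can be stripped off at the output.
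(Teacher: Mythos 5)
Your proposal is correct and takes essentially the same route as the paper: both reduce the lemma to the telescoping factorization $\*W_{k+1}=\overline{\*W}_{k+1}\,\overline{\*W}_k^\top$ (plus the input/read-out boundary solves) and establish each factorization step by a full-row-rank-maintaining construction whose dimension count $m\ge n_k+n_{k+1}$ is supplied by $m\ge 2\max_k n_k$ --- your pseudo-inverse-plus-left-null-space solve is exactly the content of the paper's auxiliary Lemma \ref{lem:auxRef}, which does the same thing via an SVD, run as a backward rather than forward induction. The remaining differences (direction of induction, your optional zero-padding normalization) are cosmetic and do not change the argument.
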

Notably, \emph{without changing the output} $\*y$, any feedforward DNN has the reformulation in Eq.(\ref{eq:reformulate}). The formal proof can be found in appendix, we present the main idea here:
\[
\begin{aligned}
\*y= & \*W_{L} \sigma \qty\Big( \*W_{L-1} \sigma \qty( \cdots\*W_2 \sigma(\*W_1\*z_0 + \*U_1\*x + \*b_1)\cdots )   )\\
= & \underbrace{\overline{\*W}_{L}~\overline{\*W}_{L-1}^\top}_{{\*W}_{L}} \sigma \qty( \underbrace{\overline{\*W}_{L-1}~\overline{\*W}_{L-2}^\top}_{{\*W}_{L-1}} \sigma \qty( \cdots\underbrace{\overline{\*W}_{2}~\overline{\*W}_{1}^\top}_{{\*W}_{2}} \sigma(\underbrace{\overline{\*W}_{1}~\overline{\*W}_{0}^\top}_{{\*W}_{0}}\*z_0 + \*U_1\*x + \*b_1)\cdots )   ).
\end{aligned}
\]
In the sense of weight-tying (i.e., all the layers share the same weights), the DNN's output $\*y$ is a linear transformation of $\overline{\*z}_{L}$, where $\overline{\*z}_{L}$ is a good approximation of the following fixed point equation under some mild assumptions:
\begin{equation}\label{eq:Opteq}
\*z^*=\*W^\top \sigma (\*W \*z^* +\*U \*x+\*b).
\end{equation}
Hence, the feedforward DNN also inspires an interesting and different equilibrium model Eq.(\ref{eq:Opteq}).
We call it Optimization Induced Equilibrium Networks (OptEq) since it is tightly associated with an underlying optimization problem.
As shown in Theorem \ref{thm:core} that follows, the equilibrium point $\*z^*$ is a solution of a convex problem that has an explicit formulation.
From the perspective of optimization, we can easily solve the  existence and the uniqueness problems of the fixed point equation, rather than resorting to the cumbersome reparameterization trick in~\cite{winston2020monotone}.
Most importantly, by studying the underlying optimization problem, we can investigate the properties of the equilibrium point of OptEq.
The following theorem formally shows the relation between OptEq and optimization.
\begin{assumption}\label{asm:sigma}
The activation function $\sigma:\^R \rightarrow \^R$ is monotone and $\Tilde{L}_\sigma$-Lipschitz, i.e.,
\[
0\leq \frac{\sigma(a)-\sigma(b)}{a-b} \leq \Tilde{L}_\sigma, \quad \forall a,b \in \^R, \quad a\neq b.
\]
\end{assumption}
\begin{theorem}\label{thm:core}
If Assumption \ref{asm:sigma} holds, for one NN layer $f:\^R^m \rightarrow \^R^m$ given by:
\[
f(\*z) \coloneqq \frac{1}{\mu} \*W^\top\sigma\qty(\*W\*z+\*U \*x +\*b),
\]
we have $f(\*z) = \operatorname{prox}_{\varphi}(\*z)$ when $\mu \geq \Tilde{L}_\sigma \norm{\*W}_2^2$, where
\[
\varphi(\*z) = \psi^{*}(\*z) - \frac{1}{2}\norm{\*z}^2,~\psi(\*z) = \frac{1}{\mu}  \mathbf{1}^\top  \tilde{\sigma}\qty(\*W\*z +\*U \*x + \*b),
\]
in which
$\forall a \in \^R,~ {\tilde{\sigma}}(a) = \int_0^a \sigma(t)\,dt$, applied element-wisely to vectors, and $\*1$ is the all one vector.
Furthermore, the solution to the fixed point equation $\*z = f(\*z)$ is the minimizer of the convex function $\varphi(\cdot)$.
\end{theorem}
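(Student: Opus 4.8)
The plan is to recognize the layer map $f$ as the gradient of an explicit convex potential and then to read off the proximal identity from Fenchel conjugation. First I would observe that, because $\tilde\sigma' = \sigma$, the chain rule gives $\nabla\psi(\mathbf z) = \frac{1}{\mu}\mathbf W^\top \sigma(\mathbf W\mathbf z + \mathbf U\mathbf x + \mathbf b) = f(\mathbf z)$, so $f = \nabla\psi$. Differentiating once more, $\nabla^2\psi(\mathbf z) = \frac{1}{\mu}\mathbf W^\top \operatorname{diag}(\sigma'(\mathbf W\mathbf z + \mathbf U\mathbf x + \mathbf b))\mathbf W$, and Assumption~\ref{asm:sigma} forces $0 \le \sigma' \le \tilde L_\sigma$, so this Hessian is positive semidefinite; hence $\psi$ is a finite-valued, differentiable convex function, and properness together with $\psi^{**}=\psi$ follow because $\tilde\sigma$ is finite everywhere.

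Next I would quantify the smoothness of $\psi$ to pin down exactly where the hypothesis on $\mu$ is used. Bounding $\|f(\mathbf z_1)-f(\mathbf z_2)\| \le \frac{1}{\mu}\|\mathbf W\|_2 \cdot \tilde L_\sigma \cdot \|\mathbf W\|_2\,\|\mathbf z_1 - \mathbf z_2\|$ via the Lipschitz property of $\sigma$ shows that $\nabla\psi$ is $\tfrac{\tilde L_\sigma\|\mathbf W\|_2^2}{\mu}$-Lipschitz, so the condition $\mu \ge \tilde L_\sigma\|\mathbf W\|_2^2$ makes $\psi$ exactly $1$-smooth. By the standard duality between smoothness and strong convexity, $\psi$ being $1$-smooth is equivalent to $\psi^{*}$ being $1$-strongly convex, which is precisely the statement that $\varphi = \psi^{*} - \frac{1}{2}\|\cdot\|^2$ is convex (and proper, l.s.c.). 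This is the step I expect to be the crux: the inequality $\mu \ge \tilde L_\sigma\|\mathbf W\|_2^2$ is doing the real work of guaranteeing that the target function $\varphi$ is genuinely convex, so that its proximal operator is well defined and single-valued; without it the argument collapses.

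With $\varphi$ convex in hand, I would compute its prox directly from the optimality condition. Expanding $\operatorname{prox}_{\varphi}(\mathbf x) = \argmin_{\mathbf u}\ \frac{1}{2}\|\mathbf u - \mathbf x\|^2 + \psi^{*}(\mathbf u) - \frac{1}{2}\|\mathbf u\|^2$, the quadratic terms in $\mathbf u$ cancel, leaving $\argmin_{\mathbf u}\ \psi^{*}(\mathbf u) - \innerprod{\mathbf u, \mathbf x}$ up to an additive constant. Its first-order condition is $\mathbf x \in \partial\psi^{*}(\mathbf u)$, equivalently $\mathbf u \in \partial\psi(\mathbf x) = \{\nabla\psi(\mathbf x)\}$ by the conjugate subdifferential correspondence (using $\psi^{**}=\psi$ and differentiability of $\psi$). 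Hence the unique minimizer is $\mathbf u = \nabla\psi(\mathbf x) = f(\mathbf x)$, giving $f = \operatorname{prox}_{\varphi}$. The same calculation, viewed through $\partial\varphi = \partial\psi^{*} - \mathbf I$, shows $\operatorname{prox}_{\varphi}(\mathbf z) = \mathbf z$ iff $\mathbf 0 \in \partial\varphi(\mathbf z)$, i.e.\ iff $\mathbf z$ minimizes $\varphi$; this is the general fact that the fixed-point set of a proximal operator coincides with the minimizer set of its function, and it delivers the final claim that the solutions of $\mathbf z = f(\mathbf z)$ are exactly the minimizers of $\varphi$.
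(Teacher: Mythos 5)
Your proposal is correct and takes essentially the same route as the paper: identifying $f=\nabla\psi$ by the chain rule, using $\mu\ge \tilde L_\sigma\|\mathbf{W}\|_2^2$ to make $\nabla\psi$ nonexpansive (so $\psi$ is $1$-smooth, $\psi^*$ is $1$-strongly convex, and $\varphi=\psi^*-\frac{1}{2}\|\cdot\|^2$ is convex), and then obtaining $\operatorname{prox}_\varphi=\nabla\psi$ by subdifferential inversion is precisely the content of the paper's Lemma~\ref{lem:proxandsubg}, whose proof you have simply inlined while keeping $\mu$ general where the paper normalizes $\mu=1$. One small repair: your Hessian computation presumes $\sigma$ is differentiable, which Assumption~\ref{asm:sigma} does not grant (e.g.\ ReLU fails at $0$); convexity of $\psi$ should instead be deduced from $\tilde\sigma$ being convex (its derivative $\sigma$ is nondecreasing) composed with an affine map, or equivalently from the monotonicity of $\nabla\psi$ itself.
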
 
Theorem \ref{thm:core} shows that unit layer given in Eq.(\ref{eq:Opteq}) is a proximal operator of an underlying convex function given by a conjugate function in most cases, and the equilibrium point of OptEq happens to be the minimizer of this function\footnote{Besides the forms we show, one may expect a common rule to determine whether a general mapping is a proximal operator or not, which can help to create the implicit equilibrium model from the optimization perspective.
We provide the sufficient and necessary conditions in Lemma \ref{lem:proxandsubg} (see appendix).}.
In the rest part of this paper, for ease of discussion, we focus on the case that $\mu=1$ for OptEq, which may correspond to the assumption $\Tilde{L}_\sigma = 1$ and $\norm{\*W}_2 \leq 1$.
In some cases, we can write down the closed form of the optimization objection $\varphi(\cdot)$. For example, when the weight matrix $\*W$ is invertible, and the activation $\sigma(\cdot)$ is ReLU, i.e., $\sigma(x) = \max\qty{x, 0}, \forall x \in \^R$, we have:
\[\varphi(\*z) = \*1^\top \tilde{\sigma}^{*}\qty(\*W^{-\top} \*z) - \innerprod{\*U\*x+\*b,\*W^{-\top}\*z}-\frac{1}{2}\|\*z\|^2,
\]
where $\tilde{\sigma}^{*}(x) = 
\begin{cases}
\frac{1}{2}x^2, & x>0,\\
\infty, & x \le 0
\end{cases}$, applied element-wisely to vectors.

By Theorem \ref{thm:core}, we can quickly obtain the well-posedness of OptEq.
In general, any $\*W$ that makes the underlying objective $\varphi$ to be strictly convex will ensure the existence and uniqueness of OptEq's equilibrium. 
For example, when $\|\*W\|_2<1$, the operator: $\*z \mapsto \*W^\top\sigma\qty(\*W\*z +\*U \*x + \*b)$ is contractive, therefore the fixed point equation Eq.(\ref{eq:Opteq}) has a unique solution, i.e., it exists and is unique. 
What's more, we show a training strategy that can actually deal with a much more general case --- $|\operatorname{Fix}(\$T)|\geq 1$. 
Please see Sec. \ref{sec:SAM}  for more details.
\par
OptEq's most attractive aspect is that we can introduce customized properties or NN architecture to the model simply by modifying the underlying convex problem.
For example, we can naturally introduce the commonly used skip connection structure only by replacing $\varphi(\*z)$ with its Moreau envelope $\alpha M_{\varphi}^{1-\alpha}(\*z)$, then OptEq becomes:
\[
\*z = \alpha\*W^\top\sigma\qty(\*W\*z +\*U \*x + \*b)+(1-\alpha)\*z.
\]
Note that we do not change the equilibrium point of OptEq but make the operator on the right hand side strongly monotone and invertible, which can stablize the iteration. 

\subsection{Deep OptEq}
Some work claims that one layer implicit equilibrium is enough \cite{bai2019deep} and improves the model expressive ability by stacking small DEQs to obtain a wide one-layer DEQ, i.e.,  considering a fixed point problem in a higher dimension.
However, in practice, it is difficult to solve a large scale fixed point problem.
Hence, multi-layer DEQ, having a one-layer equivalent form in most cases, is a good substitute for the one-layer wide one since it improves the model ability without changing the problem scale.
Furthermore, multi-layer DEQ has a more powerful expressive ability when the output dimension is fixed.
Indeed, as we will show in the next section, the wide one-layer DEQ may be a special case of the multi-layer DEQ in the asymptotic sense after fixing the output dimension.
In this subsection, we propose a multi-layer version of OptEq, which is also associated to an underlying optimization problem.
\par
We consider a multi-layer OptEq, where $\*x_0 \in \^R^{d_x}$ denotes the input, $\*z \in \^R^{m}$ denotes the hidden unit, and $\*y\in \^R^{d_y}$ denotes the output. Namely, deep OptEq follows the implicit equation:
\begin{equation}\label{eq:ComDeq}
\left\{
\begin{aligned}
& \*x = g(\*x_0,\*W_0),\\
&\*z = \$T(\*z,\*x, \bm{\theta})\coloneqq f_{L}\circ f_{L-1}\cdots \circ f_1(\*z,\*x, \bm{\theta})\ ,\\
&\*y = \*W_{L+1} \*z, 
\end{aligned}
\right.
\end{equation}
where for all $l\in[1,L]$ and $\alpha \in (0,1]$,
\begin{equation}
f_l(\*z,\*x) = \alpha \*W_l^\top \sigma\qty(\*W_l\*z + \*U_l\*x+ \*b_l) + (1-\alpha)\*z.
\end{equation}
Here,
given the set of learnable parameters $\*W_0, g(\cdot, \*W_0): \^R^{d_x} \rightarrow \^R^{d}$ is any continuous function which we usually choose as a feature extractor, e.g., shallow NNs.
$\bm{\theta} = \{\qty(\*W_{l},\*U_l, \*b_l)\}_{l=1}^L$ is the set of all learnable parameters for our equilibrium network, where $\*W_{l}\in \^R^{n_l\times m}$, $\*U\in \^R^{n_l\times d}$, $\*b_l \in \^R^{n_l}$ are the learnable weight matrices and bias term, respectively. 
Note that $\*W_{L+1} \in \^R^{d_y\times m}$ is also learnable. 
$\sigma: \^R \rightarrow \^R $ is the activation function, when the input is multi-dimensional, we apply the function $\sigma(\cdot)$ element-wise.
The hidden unit $\*z$ is the equilibrium point of the fixed point equation $\*z = \$T(\*z,\*x, \bm{\theta})$ when $\*x$ and $\bm{\theta}$ are given.
Without loss of generality, we assume that the feature extractor satisfies a Lipschitz continuity assumption w.r.t. the learnable weight, i.e., $\norm{g(\*x_0,\*W_1)-g(\*x_0,\*W_2)} \leq L_g \norm{\*W_1-\*W_2}_2$.
\par
At the first glance, deep OptEq seems very different from the traditional DNNs.
Some negative results on DEQ are shown in previous work \cite{revay2020lipschitz}: with improper weight re-parameterization, DEQ does not contain any feedforward networks.
By contrast, without complicated weight re-parameterization, deep OptEq can include the general feedforward DNN as its special case. 
\begin{lemma}\label{lem:allDNNs}
The deep OptEqs contain all feedforward DNNs.
\end{lemma}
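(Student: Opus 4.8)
The plan is to reduce the statement to Lemma~\ref{lem:reformulate} and then wrap the resulting finite forward computation inside the equilibrium equation of Eq.~(\ref{eq:ComDeq}). First I would take an arbitrary feedforward DNN as in Eq.~(\ref{eq:basicDNN}) and apply Lemma~\ref{lem:reformulate} to rewrite it, \emph{without changing its output} $\*y$, in the symmetric ``universal hidden unit'' form of Eq.~(\ref{eq:reformulate}), whose $k$-th layer is $\overline{\*z}_{k+1}=\overline{\*W}_k^\top\sigma(\overline{\*W}_k\overline{\*z}_k+\*U_k\*x+\*b_k)$. This is exactly the shape of a single deep-OptEq map $f_l$ with $\alpha=1$ and $\*W_l=\overline{\*W}_l$. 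Choosing the feature extractor $g(\cdot,\*W_0)$ to be the identity gives $\*x=\*x_0$, so the input-skip terms $\*U_l\*x$ and biases $\*b_l$ match those of the DNN, and the readout $\*y=\*W_{L+1}\*z$ is made to coincide with $\*W_{L+1}\overline{\*z}_L$.

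The remaining work is that a DNN is a finite $L$-step forward pass, whereas deep OptEq returns the \emph{equilibrium} of $\*z=\$T(\*z,\*x)$; the composite $\$T=f_L\circ\cdots\circ f_1$ is the forward map, but its fixed point is generically unrelated to the DNN output. To close this gap I would embed the computation into one fixed-point problem over an enlarged hidden vector $\*z=[\*z^{(0)};\*z^{(1)};\cdots;\*z^{(L)}]$ that stores the per-layer activations in disjoint blocks, and arrange the weights so that the block-$k$ equation depends only on block $k-1$ and on $\*x$ (a strictly block-triangular, hence acyclic, dependency graph). Then the fixed-point system is solvable by forward substitution: it has a \emph{unique} solution whose block $L$ equals the final DNN activation, and any fixed-point iteration reaches it in finitely many sweeps. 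Well-posedness of this equilibrium is secured by the discussion after Theorem~\ref{thm:core}: one takes $\alpha\in(0,1)$ (or equivalently the Moreau-envelope/skip reformulation) and controls $\norm{\*W}_2$ so that the underlying $\varphi$ is strictly convex and the operator is contractive, guaranteeing that the solver returns the intended feedforward output rather than a spurious fixed point.

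The main obstacle is reconciling the tied, symmetric form $\*W_l^\top\sigma(\*W_l\,\cdot\,)$ of each layer with the cross-block routing demanded by the triangular embedding: because $\operatorname{col}(\*W_l^\top)=\operatorname{row}(\*W_l)$, a single symmetric layer reads and writes \emph{the same} subspace of $\^R^m$ and therefore cannot, by itself, move information from one block to a different one. This is precisely what Lemma~\ref{lem:reformulate}'s factorization $\*W_k=\overline{\*W}_k\overline{\*W}_{k-1}^\top$ together with the dimension doubling $m\geq 2\max_k n_k$ is designed to overcome, since consecutive symmetric layers sharing an interface matrix realize an arbitrary cross-block linear map while each individual layer stays symmetric and of the admissible shape $\*W_l\in\^R^{n_l\times m}$. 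The bulk of the remaining proof is routine but delicate bookkeeping: verifying that (i) the padded weights keep every $\*W_l$ within the allowed dimensions, (ii) the skip term leaves the ``already computed'' blocks invariant so that the acyclic-dependency argument is valid, and (iii) the resulting equilibrium is genuinely unique, so that the deep-OptEq output equals $\*y$ for the given DNN.
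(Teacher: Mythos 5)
Your reduction to Lemma~\ref{lem:reformulate} and your diagnosis that a single symmetric layer $\*W_l^\top\sigma(\*W_l\,\cdot\,)$ reads and writes the same subspace are both sound, but the block-triangular embedding you build on them has a genuine gap: the architecture cannot keep the ``already computed'' blocks intact while the later layers run, which is exactly your bookkeeping item (ii). With $\alpha=1$ (your first paragraph) each $f_l$ maps the entire state into $\operatorname{row}(\*W_l)$, so every block that layer $l$ does not rewrite is annihilated, and a symmetric layer cannot emulate the identity on a stored block for a general activation satisfying Assumption~\ref{asm:sigma} (for bounded $\sigma$, say, the map $\*z\mapsto\*W^\top\sigma(\*W\*z+\*b)$ is bounded and hence never the identity). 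With $\alpha\in(0,1)$ (your second paragraph --- note the proposal silently switches regimes) an untouched block is multiplied by $(1-\alpha)$ at every layer, so it is not invariant but decays; at equilibrium the blocks then solve a cyclically coupled system of precisely the kind made explicit in Lemma~\ref{lem:cyclFixedPoint}, and because $\sigma$ is nonlinear the resulting $\alpha$-dependent rescalings do not commute with $\sigma$ and cannot be absorbed into the weights or the readout. The equilibrium blocks therefore coincide with the DNN activations only in the $\alpha\to 0$ averaged limit of Theorem~\ref{thm:asyalpha}, not exactly, so the forward-substitution argument does not return $\*y$ for a general feedforward network.

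The paper's proof avoids the storage problem altogether, and its one move is what your sketch is missing: rather than forcing the fixed point to encode the trajectory, make $\$T(\cdot,\*x)$ \emph{constant} in $\*z$. Setting $\alpha=1$, $\*W_1=\*0$, $\*b_i=\*0$ and $\*U_l=\*0$ for $l\neq 2$ gives $f_1\equiv\*0$, hence $\$T(\*z,\*x)=\*W_L^\top\sigma\qty(\*W_L\*W_{L-1}^\top\sigma\qty(\cdots\*W_3\*W_2^\top\sigma(\*U_2\*x)\cdots))$ is independent of $\*z$; the equilibrium is then trivially unique and \emph{is} this feedforward value, and $\*y=\*W_{L+1}\*z^*$ contributes the outermost factor. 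Lemma~\ref{lem:auxRef} --- the factorization $\*A_l=\*W_{l+1}\*W_l^\top$ with the dimension doubling you correctly identify as the cross-subspace routing device --- then supplies arbitrary effective weights, and biases are restored by the homogeneous-coordinate augmentation $[\*x_i;1]$. No contraction argument, strict convexity, or enlarged state is needed: uniqueness comes for free because the map is constant, which is why the paper's construction works for every monotone $\sigma$ while the triangular embedding does not.
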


\section{Recover Optimization Problem from Implicit Equilibrium  Models}
This section provides our main results on the connection between convex optimization problem and our deep OptEq.
In the previous section, we have shown that one layer of DNN is a proximal operator under a mild assumption. 
However, the composition of multiple proximal operators is not a proximal operator in most cases. 
Fortunately, we can still recover the underlying optimization problem of deep OptEq, and find that its equilibrium point is a zero point of a convex function's subdifferential with a weak additional permutation constraint. 
In addition, we can explicitly provide the optimization objectives corresponding to deep OptEq in some cases. 
Before providing the main results, we first show the connection between our deep OptEq given in Eq.(\ref{eq:ComDeq}) and a multi-block one-layer OptEq.
\begin{lemma}[Deep OptEq and Multi-block OptEq]\label{lem:cyclFixedPoint}
If $\*z^*\coloneqq\*z^*_0$ is an equilibrium point of the equation $\*z = f_{L}\circ f_{L-1}\cdots \circ f_1(\*z,\*x, \bm{\theta})$, set $\*z^*_1\coloneqq f_1(\*z^*_0,\*x,\bm{\theta}),~\*z^*_2\coloneqq f_2 \circ f_1(\*z^*_0,\*x,\bm{\theta}), \cdots,~\*z^*_{L-1}\coloneqq f_{L-1} \cdots \circ f_2 \circ f_1(\*z^*_0,\*x, \bm{\theta})$, then $\widetilde{\*z}^*\coloneqq [\*z^*_1,\cdots,\*z^*_{L-1},\*z^*_0]^\top \in \^R^{mL}$ is an equilibrium point of the equation:
\begin{equation}\label{eq:MatDeqMain}
   \widetilde{\*z} = \alpha \widetilde{\*W}^\top\sigma\qty(\widetilde{\*W}\*P \widetilde{\*z} +\widetilde{\*U}\*x + \widetilde{\*b}) + (1-\alpha)\*P \widetilde{\*z}, 
\end{equation}
where $\widetilde{\*W}$ is block diagonal and $\*P$ is a permutation matrix,
\[
\widetilde{\*W}\coloneqq \mqty[\dmat{\*W_1,\ddots,\*W_L}], \quad\*P\coloneqq \mqty[\mqty{0&&&&& \*I} \\ \mqty{\*I & 0 &&&&} \\ \mqty{ & & \ddots& \ddots&}\\ \mqty{ & & & & \*I &0 }],
\]
$\widetilde{\*U} \coloneqq [\*U_1,\cdots,\*U_L]^\top$ and $\widetilde{\*b} \coloneqq [\*b_1,\cdots,\*b_L]^\top$ are the concatenated matrix and vector, respectively (see Eq.(\ref{eq:MatDeq}) in appendix for more details).
\end{lemma}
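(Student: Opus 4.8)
The plan is to verify directly, reading Eq.(\ref{eq:MatDeqMain}) blockwise, that the proposed $\widetilde{\*z}^*$ satisfies it. First I would exploit the block-diagonal structure of $\widetilde{\*W} = \operatorname{diag}(\*W_1,\ldots,\*W_L)$: the map $\widetilde{\*v} \mapsto \alpha \widetilde{\*W}^\top\sigma(\widetilde{\*W}\widetilde{\*v} + \widetilde{\*U}\*x + \widetilde{\*b}) + (1-\alpha)\widetilde{\*v}$ decouples across the $L$ blocks, and since $\sigma$ is applied element-wise, its $l$-th $m$-dimensional block is exactly $\alpha \*W_l^\top\sigma(\*W_l\widetilde{\*v}_l + \*U_l\*x + \*b_l) + (1-\alpha)\widetilde{\*v}_l = f_l(\widetilde{\*v}_l,\*x)$, where $\widetilde{\*v}_l$ denotes the $l$-th block of $\widetilde{\*v}$. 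Setting $\widetilde{\*v} = \*P\widetilde{\*z}$, the $l$-th block of the right-hand side of Eq.(\ref{eq:MatDeqMain}) is therefore $f_l\big((\*P\widetilde{\*z})_l,\*x\big)$.

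Next I would pin down the action of the permutation $\*P$. From its displayed block form, $\*P$ is the cyclic block-shift sending $\widetilde{\*z} = [\widetilde{\*z}_1,\ldots,\widetilde{\*z}_L]^\top$ to $[\widetilde{\*z}_L,\widetilde{\*z}_1,\ldots,\widetilde{\*z}_{L-1}]^\top$, so that $(\*P\widetilde{\*z})_1 = \widetilde{\*z}_L$ and $(\*P\widetilde{\*z})_l = \widetilde{\*z}_{l-1}$ for $l \geq 2$. Substituting the proposed $\widetilde{\*z}^* = [\*z^*_1,\ldots,\*z^*_{L-1},\*z^*_0]^\top$, in which the seed $\*z^*_0$ occupies the \emph{last} block, yields $(\*P\widetilde{\*z}^*)_1 = \*z^*_0$ and $(\*P\widetilde{\*z}^*)_l = \*z^*_{l-1}$ for $2 \leq l \leq L$.

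It then remains to check, block by block, that $f_l\big((\*P\widetilde{\*z}^*)_l,\*x\big)$ equals the $l$-th block of $\widetilde{\*z}^*$. For $l=1$ this reads $f_1(\*z^*_0,\*x) = \*z^*_1$, which is the definition of $\*z^*_1$; for $2 \leq l \leq L-1$ it reads $f_l(\*z^*_{l-1},\*x) = \*z^*_l$, which holds by the telescoping definitions $\*z^*_l = f_l\circ\cdots\circ f_1(\*z^*_0,\*x)$. The only block invoking the hypothesis is $l=L$: the equation there demands $f_L(\*z^*_{L-1},\*x) = \*z^*_0$, and indeed $f_L(\*z^*_{L-1},\*x) = f_L\circ f_{L-1}\circ\cdots\circ f_1(\*z^*_0,\*x) = \*z^*_0$ precisely because $\*z^*_0$ is assumed to be a fixed point of the full composition $\$T$. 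This closes all $L$ blocks and establishes the claim.

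I expect the only delicate point to be the index bookkeeping: the nonstandard stacking order that places $\*z^*_0$ in the final block is exactly what lets the cyclic shift $\*P$ route each $\*z^*_{l-1}$ into the $l$-th layer map $f_l$ and, crucially, feed $\*z^*_{L-1}$ back through $f_L$ to regenerate $\*z^*_0$, thereby ``wrapping around'' the fixed-point condition. Matching the displayed form of $\*P$ to this particular shift direction (rather than its inverse) is the one place a direction error could slip in; once that convention is fixed, the remainder is a direct blockwise substitution with no analytic content.
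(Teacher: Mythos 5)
Your proof is correct and takes essentially the same route as the paper: the paper's proof simply rewrites the deep OptEq in the separated blockwise form $\*z_1 = f_1(\*z_0,\*x),\ \*z_2 = f_2(\*z_1,\*x),\ \ldots,\ \*z_0 = f_L(\*z_{L-1},\*x)$ and observes that this system is exactly the block reading of Eq.(\ref{eq:MatDeqMain}), with the cyclic shift $\*P$ routing the blocks just as you describe. Your write-up merely makes explicit what the paper leaves to inspection --- the block decoupling of the block-diagonal map, the direction of the shift, and the single block ($l=L$) where the fixed-point hypothesis is actually invoked --- all of which you get right.
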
 
By Eq.(\ref{eq:MatDeqMain}), and the necessary and sufficient condition for one DNN layer to be a proximal-like operator (Lemma \ref{lem:D_x_yandsubg} in appendix), we can further reveal the connection between deep OptEq and optimization under a mild assumption.
\begin{assumption}\label{asm:weightBound}
Assumption \ref{asm:sigma} with $\Tilde{L}_\sigma = 1$ and $\norm{\*W_i}_2 \leq 1, \forall i \in [1,L]$ hold.
\end{assumption}
Note that we make this assumption just for the ease of discussion. The assumption is actually unnecessary since we can introduce an additional constant to re-scale the whole operator as did in Theorem \ref{thm:core}. 

\begin{theorem}[Recovering Optimization Problem from Deep OptEq]\label{thm:zeroset}
If Assumption \ref{asm:weightBound} holds, then any equilibrium point $\widetilde{\*z}^*$ of Eq.(\ref{eq:MatDeqMain}) satisfies:
\begin{equation}\label{eq:optCondition}
  0\in\partial \Phi \qty(\widetilde{\*z}^*)+\qty(\*{I}-\*P)\widetilde{\*z}^*,  
\end{equation}
where $\*I$ is the identity matrix
and $\Phi(\widetilde{\*z}^*)$ is given by a sequence Moreau envelopes of convex functions $\qty{\varphi_i}_{i=1}^L$ such that $\operatorname{prox}_{\varphi_i}(\*z) = \*W_i^\top \sigma\qty(\*W_i\*z + \*U_i\*x+ \*b_i)$, namely:
\[
\Phi(\widetilde{\*z}) = \sum_{i=1}^{L}\alpha M_{\varphi_i}^{1-\alpha}(\*z_i), 
\]
where $\*z_i$ is the $i$-th block of $\widetilde{\*z}^*$ and $M_{\varphi_i}^{1-\alpha}(\*z)$ is the $\varphi_i$'s Moreau envelope.
\end{theorem}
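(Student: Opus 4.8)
The plan is to reduce the coupled, deep fixed point equation to a blockwise statement about proximal operators, and then to convert each block into a Moreau-envelope gradient by means of a scaling identity that pins the envelope parameter to exactly $1-\alpha$. First I would exploit the block-diagonal structure of $\widetilde{\*W}$: since its diagonal blocks are $\*W_1,\dots,\*W_L$ and $\widetilde{\*U}\*x+\widetilde{\*b}$ is the concatenation of the per-layer affine terms, the map $\*w\mapsto\widetilde{\*W}^\top\sigma(\widetilde{\*W}\*w+\widetilde{\*U}\*x+\widetilde{\*b})$ acts separately on each block. Under Assumption \ref{asm:weightBound} the admissible scale is $\mu=1$, so Theorem \ref{thm:core} applies to each block and identifies it with $\operatorname{prox}_{\varphi_i}$; hence the stacked map equals $\operatorname{prox}_{\widetilde{\varphi}}$ for the separable $\widetilde{\varphi}(\*w)=\sum_{i=1}^{L}\varphi_i(\*w_i)$. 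Substituting this into Eq.(\ref{eq:MatDeqMain}) turns the equilibrium condition into $\widetilde{\*z}^*=\alpha\operatorname{prox}_{\widetilde{\varphi}}(\*P\widetilde{\*z}^*)+(1-\alpha)\*P\widetilde{\*z}^*$, which, read blockwise with $(\*P\widetilde{\*z}^*)_i$ the $i$-th block of the permuted vector, becomes $(\widetilde{\*z}^*)_i=\alpha\operatorname{prox}_{\varphi_i}((\*P\widetilde{\*z}^*)_i)+(1-\alpha)(\*P\widetilde{\*z}^*)_i$.

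The heart of the argument, and what I expect to be the main obstacle, is the identity linking the scale-$1$ proximal step that appears in the network to the scale-$(1-\alpha)$ Moreau envelope that appears in $\Phi$. Concretely I would prove that for any $\*w$, setting $\*p=\operatorname{prox}_{\varphi_i}(\*w)$ and $\*v=\alpha\*p+(1-\alpha)\*w$, one has $\operatorname{prox}_{(1-\alpha)\varphi_i}(\*v)=\*p$. This follows from the subdifferential characterization of the resolvent: $\*p=\operatorname{prox}_{\varphi_i}(\*w)$ is equivalent to $\*w-\*p\in\partial\varphi_i(\*p)$, and a direct computation gives $\tfrac{1}{1-\alpha}(\*v-\*p)=\*w-\*p\in\partial\varphi_i(\*p)$, which is exactly the optimality condition defining $\operatorname{prox}_{(1-\alpha)\varphi_i}(\*v)=\*p$. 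The delicate point is that the specific relaxation weight $\alpha$ is what forces the envelope parameter to come out as $1-\alpha$: with any other scale the blockwise gradient would fail to align with the relaxed network iteration, so this step is where the particular form $M_{\varphi_i}^{1-\alpha}$ in the statement is genuinely used rather than being an arbitrary smoothing.

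Finally I would assemble the blocks. Combining $\nabla M_{\varphi_i}^{1-\alpha}(\cdot)=\tfrac{1}{1-\alpha}(\cdot-\operatorname{prox}_{(1-\alpha)\varphi_i}(\cdot))$ with the identity above, applied at $\*w=(\*P\widetilde{\*z}^*)_i$ and $\*v=(\widetilde{\*z}^*)_i$, gives $\alpha\nabla M_{\varphi_i}^{1-\alpha}((\widetilde{\*z}^*)_i)=\alpha((\*P\widetilde{\*z}^*)_i-\operatorname{prox}_{\varphi_i}((\*P\widetilde{\*z}^*)_i))$, while the blockwise equilibrium relation rearranges to $(\widetilde{\*z}^*)_i-(\*P\widetilde{\*z}^*)_i=-\alpha((\*P\widetilde{\*z}^*)_i-\operatorname{prox}_{\varphi_i}((\*P\widetilde{\*z}^*)_i))$; adding these two shows $\alpha\nabla M_{\varphi_i}^{1-\alpha}((\widetilde{\*z}^*)_i)+((\widetilde{\*z}^*)_i-(\*P\widetilde{\*z}^*)_i)=0$ for every $i$. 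Stacking over $i$ and identifying $\partial\Phi(\widetilde{\*z}^*)=[\alpha\nabla M_{\varphi_i}^{1-\alpha}((\widetilde{\*z}^*)_i)]_{i=1}^{L}$ (valid because $\Phi$ is separable and each Moreau envelope is $C^1$) together with $(\*I-\*P)\widetilde{\*z}^*=[(\widetilde{\*z}^*)_i-(\*P\widetilde{\*z}^*)_i]_{i=1}^{L}$ yields the claimed inclusion $0\in\partial\Phi(\widetilde{\*z}^*)+(\*I-\*P)\widetilde{\*z}^*$. The only remaining care is to track the cyclic permutation $\*P$ so that each block of $\widetilde{\*z}^*$ is paired with the correct $\varphi_i$, which the explicit shift structure of $\*P$ resolves directly.
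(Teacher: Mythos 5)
Your proof is correct, but it takes a genuinely different route from the paper's. You verify the inclusion \emph{directly at the equilibrium}: decouple Eq.(\ref{eq:MatDeqMain}) blockwise via the block-diagonal $\widetilde{\*W}$, identify each block map with $\operatorname{prox}_{\varphi_i}$ by Theorem \ref{thm:core} with $\mu=1$, and then combine the relaxation identity $\operatorname{prox}_{(1-\alpha)\varphi_i}\qty(\alpha\operatorname{prox}_{\varphi_i}(\*w)+(1-\alpha)\*w)=\operatorname{prox}_{\varphi_i}(\*w)$ with the $C^1$ gradient formula $\nabla M_{\varphi_i}^{1-\alpha}(\*v)=\frac{1}{1-\alpha}\qty(\*v-\operatorname{prox}_{(1-\alpha)\varphi_i}(\*v))$ to check $\alpha\nabla M_{\varphi_i}^{1-\alpha}\qty((\widetilde{\*z}^*)_i)+\qty((\widetilde{\*z}^*)_i-(\*P\widetilde{\*z}^*)_i)=0$ block by block; your computations are sound (in particular $\*v-\*p=(1-\alpha)(\*w-\*p)$ is exactly what forces the envelope parameter $1-\alpha$, as you note). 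The paper instead proceeds by duality: it shows the full map $F$ evaluated at $\*P^\top\widetilde{\*z}$ is the gradient of the convex function $\alpha\Psi+(1-\alpha)\frac{1}{2}\norm{\cdot}^2$, proves $F$ is nonexpansive, invokes the extended prox characterization (Lemma \ref{lem:D_x_yandsubg} with $\$A=\*P$) to conclude $F$ is a generalized proximal operator of $\Phi=\qty[\alpha\Psi+(1-\alpha)\frac{1}{2}\norm{\cdot}^2]^*-\frac{1}{2}\norm{\cdot}^2$, and only then derives $\Phi=\sum_{i}\alpha M_{\varphi_i}^{1-\alpha}(\*z_i)$ through a chain of conjugate-calculus identities (infimal convolution). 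Your approach is more elementary and shorter --- no conjugacy, no nonexpansiveness of the composite map, no Lemma \ref{lem:D_x_yandsubg} --- because you only need to \emph{verify} the stated first-order condition; the paper's route buys more, since it \emph{derives} the objective $\Phi$ and establishes that the network map itself is a prox-like operator of $\Phi$, which is the conceptual content behind ``recovering'' the optimization problem. One small caveat: your divisions by $1-\alpha$ (like the paper's conjugate chain) implicitly require $\alpha<1$; at $\alpha=1$ the blockwise equilibrium relation reads directly $(\*P\widetilde{\*z}^*)_i-(\widetilde{\*z}^*)_i\in\partial\varphi_i\qty((\widetilde{\*z}^*)_i)$, which yields Eq.(\ref{eq:optCondition}) with the degenerate envelope interpreted as $\varphi_i$ itself, so this is cosmetic rather than a gap.
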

When the block size is $1$, i.e., $L=1$,  we can immediately obtain that $ 0\in\partial \Phi (\widetilde{\*z}^*)$, namely, the equilibrium point is a solution of a convex optimization problem. 
So the results provided in Theorem \ref{thm:core} is a special case here.
Note that one block does not mean that $\*z$ is one-dimensional.
Moreover, for two blocks, the deep OptEq is also an optimization solver.
\begin{corollary}
\label{coro:twoblock}
If the block size $L=2$ and Assumption \ref{asm:weightBound} holds, then the equilibrium point $\widetilde{\*z}^*=[\*z_1^*,\*z_0^*]^\top$ of Eq.(\ref{eq:MatDeqMain}) is also a solution to a convex problem:
\[
  \min_{\*z_1,\*z_0} \qty{\alpha M_{\varphi_1}^{1-\alpha}(\*z_1)+ \alpha M_{\varphi_2}^{1-\alpha}(\*z_0) + \frac{1}{2} \norm{\*z_1 - \*z_0}^2}.
\]
\end{corollary}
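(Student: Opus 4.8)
The plan is to read off the optimality system from Theorem \ref{thm:zeroset} specialized to $L=2$ and then recognize it as the stationarity condition of the stated objective. First I would make the permutation explicit: in the two-block case the cyclic shift collapses to the block swap $\*P = \mqty[0 & \*I \\ \*I & 0]$, so $\*I - \*P = \mqty[\*I & -\*I \\ -\*I & \*I]$. Writing $\widetilde{\*z}^* = [\*z_1^*, \*z_0^*]^\top$ and $\Phi(\widetilde{\*z}) = \alpha M_{\varphi_1}^{1-\alpha}(\*z_1) + \alpha M_{\varphi_2}^{1-\alpha}(\*z_0)$, the inclusion $0 \in \partial\Phi(\widetilde{\*z}^*) + (\*I - \*P)\widetilde{\*z}^*$ furnished by Theorem \ref{thm:zeroset} unpacks blockwise into
\[
0 = \alpha \nabla M_{\varphi_1}^{1-\alpha}(\*z_1^*) + (\*z_1^* - \*z_0^*), \qquad 0 = \alpha \nabla M_{\varphi_2}^{1-\alpha}(\*z_0^*) + (\*z_0^* - \*z_1^*),
\]
where I use that each $M_{\varphi_i}^{1-\alpha}$ is continuously differentiable because $\varphi_i$ is proper convex, so $\partial\Phi$ is a genuine gradient.

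Next I would match this pair against $\nabla F = 0$ for $F(\*z_1,\*z_0) = \alpha M_{\varphi_1}^{1-\alpha}(\*z_1) + \alpha M_{\varphi_2}^{1-\alpha}(\*z_0) + \tfrac{1}{2}\norm{\*z_1 - \*z_0}^2$. The only nonroutine computation is the coupling term: a direct expansion gives $\tfrac{1}{2}\norm{\*z_1 - \*z_0}^2 = \tfrac{1}{2}\widetilde{\*z}^\top(\*I - \*P)\widetilde{\*z}$, and since $\*I - \*P$ is symmetric when $L=2$, the gradient of this quadratic is precisely $(\*I - \*P)\widetilde{\*z}$. Differentiating $F$ in each block therefore reproduces the two equations above verbatim, so every equilibrium point $\widetilde{\*z}^*$ is a stationary point of $F$.

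It remains to upgrade stationarity to global minimality via convexity of $F$. Each Moreau envelope $M_{\varphi_i}^{1-\alpha}$ is convex as the infimal convolution of a convex function with a convex quadratic, and the coupling term is convex because $\*I - \*P = \mqty[1 & -1 \\ -1 & 1] \otimes \*I$ is positive semidefinite (eigenvalues $0$ and $2$). Thus $F$ is convex, and $0 \in \partial F(\widetilde{\*z}^*)$ certifies $\widetilde{\*z}^*$ as a global minimizer of the displayed program, which is exactly the claim.

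I expect the decisive point --- and the reason the corollary stops at $L=2$ --- to be the symmetry of $\*I - \*P$. For $L \geq 3$ the cyclic shift renders $\*I - \*P$ non-symmetric, so the linear coupling $(\*I - \*P)\widetilde{\*z}$ cannot be written as the gradient of any scalar potential, and the convex-program interpretation breaks down; the clean variational form is thus special to the one- and two-block regimes.
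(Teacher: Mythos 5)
Your proposal is correct and follows essentially the same route as the paper's own proof: both specialize Theorem \ref{thm:zeroset} to $L=2$, observe that $\*I-\*P$ is then symmetric so that $(\*I-\*P)\widetilde{\*z}$ is the gradient of the convex quadratic $\frac{1}{2}\norm{\*z_1-\*z_0}^2$, and read the inclusion \eqref{eq:optCondition} as the first-order optimality condition of the stated convex program. Your added details (smoothness of the Moreau envelopes, positive semidefiniteness of the coupling, and the remark that non-symmetry of $\*I-\*P$ for $L\geq 3$ destroys the potential structure) are all accurate and consistent with the paper's subsequent treatment of the general case via Theorem \ref{thm:asyalpha}.
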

For general $L>2$, we can also write down the monotone inclusion equation  Eq.(\ref{eq:optCondition})'s underlying optimization problem when $\alpha \to 0$.
Interestingly, this result implies the equivalence between the composited deep models and the wide shallow ones in the asymptotic sense.
\begin{theorem}[Connection between Wide and Deep OptEq]\label{thm:asyalpha}
If Assumption \ref{asm:weightBound} holds, and there is at least one $\|\*W_i\|_2<1$. Assume $\widetilde{\*z}^*(\alpha)\coloneqq[\*z^*_1(\alpha),\cdots,\*z^*_{L-1}(\alpha),\*z^*_0(\alpha)]^\top \in \^R^{mL}$ is the equilibrium point of Eq.(\ref{eq:MatDeqMain}). When $\alpha \rightarrow 0$, all $\*z^*_l(\alpha)$s tend to be equal, and the limiting point is the last entry $\*y$ of the minimizer $(\*x_1,\cdots,\*x_L, \*y )$ of the following optimization problem:
\[
  \min_{\*x_1,\cdots,\*x_L, \*y} \qty{\sum_{l=1}^{L}\left( \varphi_l(\*x_l)+\frac{1}{2}\|\*x_l-\*y\|^2 \right)},
\]
where $\varphi_l(\cdot)$ is the same as that in Theorem \ref{thm:zeroset}.
\end{theorem}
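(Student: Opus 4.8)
The plan is to work directly from the block-wise form of the fixed point equation in Eq.(\ref{eq:MatDeqMain}) and to exploit the identity $\operatorname{prox}_{\varphi_l}(\*z) = \*W_l^\top\sigma\qty(\*W_l\*z + \*U_l\*x + \*b_l)$ established in Theorem \ref{thm:core} and Theorem \ref{thm:zeroset}. Writing the blocks cyclically as $\*z_0,\*z_1,\dots,\*z_{L-1}$ with $\*z_L\coloneqq\*z_0$, the equilibrium of Eq.(\ref{eq:MatDeqMain}) is equivalent to the $L$ coupled relations $\*z_l = \alpha\operatorname{prox}_{\varphi_l}(\*z_{l-1}) + (1-\alpha)\*z_{l-1}$ for $l=1,\dots,L$, i.e.
\[
\*z_l - \*z_{l-1} = \alpha\qty(\operatorname{prox}_{\varphi_l}(\*z_{l-1}) - \*z_{l-1}).
\]
Summing these $L$ identities, the left-hand side telescopes around the cycle to $\*z_L-\*z_0=0$, which after dividing by $\alpha>0$ yields the $\alpha$-independent \emph{balance equation} $\sum_{l=1}^L\operatorname{prox}_{\varphi_l}(\*z_{l-1}^*(\alpha)) = \sum_{l=1}^L \*z_{l-1}^*(\alpha)$. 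This is the relation I intend to pass to the limit.

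First I would establish a uniform bound on $\widetilde{\*z}^*(\alpha)$ as $\alpha\to0$. Since each $\operatorname{prox}_{\varphi_l}$ is nonexpansive (as the proximal map of a convex function) and is a strict contraction with factor $\norm{\*W_i}_2^2<1$ at the distinguished index $i$, chaining the block recursions once around the cycle gives an estimate of the form $\norm{\*z_0^*(\alpha)} \le \alpha A + \rho\,\norm{\*z_0^*(\alpha)}$, with $A$ a constant and $\rho \le 1-\alpha\qty(1-\norm{\*W_i}_2^2)$. Solving yields $\norm{\*z_0^*(\alpha)}\le A/\qty(1-\norm{\*W_i}_2^2)$, a bound independent of $\alpha$, and hence all blocks are uniformly bounded. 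Feeding this bound back into the displayed difference relation shows $\norm{\*z_l^*(\alpha)-\*z_{l-1}^*(\alpha)} = O(\alpha)\to0$, so every limit point (which exists by boundedness) has all of its blocks equal to a common vector $\*y$; this proves the first claim that the $\*z_l^*(\alpha)$ coalesce.

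Passing to the limit in the balance equation using continuity of $\operatorname{prox}_{\varphi_l}$ then gives $\sum_{l=1}^L\operatorname{prox}_{\varphi_l}(\*y) = L\*y$, i.e. $\*y = \tfrac1L\sum_l\operatorname{prox}_{\varphi_l}(\*y)$. To finish, I would write the first-order optimality conditions of the stated convex program: stationarity in $\*x_l$ gives $\*x_l = \operatorname{prox}_{\varphi_l}(\*y)$, while stationarity in $\*y$ gives $\sum_l(\*y-\*x_l)=0$; substituting the former into the latter recovers exactly $\*y=\tfrac1L\sum_l\operatorname{prox}_{\varphi_l}(\*y)$. Because at least one $\varphi_i$ is strongly convex (the contraction $\norm{\*W_i}_2^2<1$ of $\operatorname{prox}_{\varphi_i}=\qty(\*I+\partial\varphi_i)^{-1}$ is equivalent to strong monotonicity of $\partial\varphi_i$), the reduced objective $\sum_l M_{\varphi_l}^{1}(\*y)$ is strongly convex, so the minimizer $\*y$ is unique; a standard subsequence argument then upgrades the convergence from limit points to the full net.

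The main obstacle I anticipate is the uniform boundedness step: the contraction factor $\rho$ degrades to $1$ as $\alpha\to0$, so one must track the matching $\alpha$ factor in the additive constant carefully to see that the two cancel and leave an $\alpha$-free bound; this is precisely where the hypothesis that some $\norm{\*W_i}_2<1$ is used. The identification of the limit with the convex program and the telescoping identity are then essentially bookkeeping once the proximal reformulation of the layer is in hand.
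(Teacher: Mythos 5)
Your proposal is correct, but it takes a genuinely different route from the paper's proof. The paper works with the composed operator directly: it expands $\left[\alpha\operatorname{prox}_{\varphi_L}+(1-\alpha)\mathcal{I}\right]\circ\cdots\circ\left[\alpha\operatorname{prox}_{\varphi_1}+(1-\alpha)\mathcal{I}\right]$ into a binomial-type sum, rescales by $L\alpha$ to define a family $\*T_\alpha$ whose Lipschitz constant is estimated term by term to be uniformly below $1$ for small $\alpha$ (this is where the hypothesis $\|\*W_i\|_2<1$ enters, making the dominant term $\frac{1}{L}\sum_l\|\*W_l\|_2^2<1$), and then invokes a continuity-of-fixed-points lemma for uniform contractions (Lemma \ref{lem:contifixed}, from Frigon) to get $\*z_0^*(\alpha)\to\operatorname{Fix}(\*T_0)$, repeating the argument for each cyclically shifted composition to handle the remaining blocks. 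You instead stay at the level of the block system: the telescoped balance identity $\sum_l\operatorname{prox}_{\varphi_l}(\*z_{l-1}^*(\alpha))=\sum_l\*z_{l-1}^*(\alpha)$ is $\alpha$-free, your uniform bound correctly exploits the cancellation between the $O(\alpha)$ additive term $\|\mathcal{T}_\alpha(\*0)\|\le\alpha A$ and the degrading contraction factor $1-\alpha(1-\|\*W_i\|_2^2)$, the $O(\alpha)$ block differences give coalescence, and compactness plus continuity of the proximal maps identifies the limit. What the paper's route buys is that full-net convergence and uniqueness come for free from Banach's theorem, with no subsequence extraction; what your route buys is that it avoids the combinatorial expansion and the external lemma, and treats all $L$ blocks in one pass rather than via $L$ cyclic repetitions. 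Your uniqueness step is valid --- $\operatorname{prox}_{\varphi_i}$ being $\beta$-Lipschitz with $\beta=\|\*W_i\|_2^2<1$ is indeed equivalent to $(\tfrac{1}{\beta}-1)$-strong convexity of $\varphi_i$ (via conjugate smoothness, Thm.~5.26 in \citet{beck2017first}, as used elsewhere in the paper), and the Moreau envelope inherits strong convexity --- but it can be streamlined: the limit equation $\*y=\tfrac{1}{L}\sum_l\operatorname{prox}_{\varphi_l}(\*y)$ is a fixed point of a map with Lipschitz constant at most $\tfrac{1}{L}\sum_l\|\*W_l\|_2^2<1$ (exactly the paper's $\*T_0$), so uniqueness follows directly without any appeal to strong convexity.
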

Theorem \ref{thm:asyalpha} implies that, when $\alpha \to 0$, the equilibrium point of the deep OptEq is the same as the solution of $L\*z = \sum_{i=1}^L \*W_i^\top \sigma\qty(\*W_i\*z + \*U_i\*x+ \*b_i)$, which is a wide one-layer OptEq with multiple blocks. 
Given the output dimension and the same amount of learnable parameters, the wide multi-block OptEq is actually a special case of deep OptEq in the asymptotic sense.
Hence, deep OptEq is more expressive than wide one-layer OptEq.
\par
In general, we can still loosely treat the equilibrium point as a minimizer of an implicit optimization problem, since the only difference between the monotone inclusion equation $0\in\partial \Phi (\widetilde{\*z}^*)$ and Eq.(\ref{eq:optCondition}) is a weak constraint dominated by the operator $(\*I-\*P)(\cdot)$,  which  aims to reduce the divergence between the multi-blocks.

\section{Introducing Customized Properties to Equilibrium Points}\label{sec:introProp}
By employing the underlying optimization problem, we can investigate the potential property of the equilibrium points. A more advanced way to use the connection between (deep) OptEq and optimization is to introduce some customized properties to equilibrium points, i.e., the feature learned by OptEq. Note that none of previous
DEQs take into account the regularization of features, which has been proved to be effective both theoretically \cite{amjad2019learning,alemi2016deep} and empirically \cite{zhang2021deep}.
\subsection{Underlying Optimization Inspired Feature Regularization}\label{sec:Reg1}
As we mentioned in Sec.\ref{sec:oneopteq}, if we replace the underlying optimization objective with its Moreau envelope, OptEq will naturally have a skip-connection structure, which has been adopted in the construction of deep OptEq. 
Following this idea, when we modify the underlying optimization problem of deep OptEq, it should inspire more network architectures.
\par
An exciting application of Theorem \ref{thm:zeroset} is introducing  customized properties to the equilibrium points of deep OptEq, 
because appending one layer after deep OptEq is equivalent to adding one term to the objective $\Phi(\cdot)$.
Specifically, if we modify $\Phi(\widetilde{\*z})$ to $\Phi(\widetilde{\*z})  + \$R_z(\*z_L)$, then deep OptEq becomes:
\[
\*z = \$T_{\$R_z}\circ\$T(\*z,\*x, \bm{\theta}),
\]
where $\$T_{\$R_z} = \operatorname{prox}_{R_z}$ or $\$T_{\$R_z} = \$I -\gamma \pdv{\$R_z}{\*z}$ when the proximal is hard to calculate. 
For example, ${\$R_z}$ will re-scale $\$T(\*z,\*x, \bm{\theta})$'s output when ${\$R_z}(\cdot) = \norm{\cdot}^2$, and becomes a shrinkage operator after changing ${\$R_z}$ to $\norm{\cdot}_1$.
In general, ${\$R_z}(\cdot)$ can be any convex function that contains the prior information of the feature.
In summary, we introduces feature regularization by modifying the underlying optimization problem, which leads to a change of network structure.
Once again, studying the implicit equilibrium models from the perspective of optimization shows great superiority.

\subsection{SAM Iteration Induced Feature Regularization}\label{sec:SAM}
In this subsection, we provide another strategy for feature regularization. Note that most previous DEQs are devoted to ensuring a singleton fixed point set, relying on the tricky weight matrix re-parameterization. Considering the general case --- $|\operatorname{Fix}(\$T)|\geq 1$, we can choose the equilibrium with desired property by solving the following constrained optimization problem: 
\begin{equation}\label{eq:inclusion}
\*z^{*}(\*x, \bm{\theta}) \coloneqq \argmin_{\*z \in \operatorname{Fix}\qty(\$T(\cdot,\*x, \bm{\theta}))} \$R_z(\*z),
\end{equation}
where $\$R_z(\cdot)$ is the feature regularization that contains the prior information of the feature.
Given the training data $ (\*x_0,\*y_0) \in \^R^{d_x}\times \^R^{d_y}$, the whole training procedure becomes\footnote{For the sake of clarity, we utilize one training pair for discussion. 
In general, all the discussed results hold when we replace the single data point with the whole data set.}:
\begin{equation}\label{eq:loss}
    \min_{\widetilde{\bm{\theta}} } \ell\qty(\*W_{L+1}\cdot\*z^*(\*x,~\bm{\theta}), \*y_0) + \$R_w(\widetilde{\bm{\theta}}),
\end{equation}
where $\widetilde{\bm{\theta}} \coloneqq \qty{\*W_0,\bm{\theta},\*W_{L+1}}$, $\*x$   is given by Eq.(\ref{eq:ComDeq}), 
$\$R_w(\cdot)$ is the regularizer on the parameters , e.g., weight decay, and $\ell: \^R^{d_y} \times \^R^{d_y} \to \^R^{+}$ is the loss function.
\par
We adopt the SAM fixed point iteration \cite{sabach2017first} to solve the problem Eq.(\ref{eq:inclusion}). 
Starting from any $\*z_0 \in \^R^m$,  we consider the following sequence $\qty{\*z^k}_{k\in \^N}$: 
\begin{equation}\label{eq:InclusinApprox}
    \*z^{k} = \beta_k \$S_{\lambda_k}(\*z^{k-1}) + (1-\beta_k)\$T(\*z^{k-1},\*x, \bm{\theta}),
\end{equation}
where $\{\beta_k\}_{k\in\^N}$ and $ \{\lambda_k\}_{k\in\^N} $ are sequences of real numbers in $(0,1]$, $\$S_{\lambda}(\*z) = (1-\gamma \lambda)\*z - \gamma \pdv{\$R_z(\*z)}{\*z}$. 
Then we choose the $K$-th iteration $\*z^K(\*x, \bm{\theta})$ as an approximate of $\*z^{*}(\*x, \bm{\theta})$ and put it in the final loss term:  
\begin{equation}\label{eq:lossapprox}
    \min_{\widetilde{\bm{\theta}}} \ell\qty(\*W_{L+1}\cdot\*z^K(\*x,~\bm{\theta}), \*y_0)+ \$R_w(\widetilde{\bm{\theta}}).
\end{equation}
The unrolling term $\*z^K(\*x,~\bm{\theta})$ aggregate information from both $\$R_z(\*z) $ and $\$T$, making the prior information of feature being an inductive bias during training. And our model can be easily trained by any first-order optimization algorithms, e.g., GD, SGD, Adam, etc. 
\begin{remark}
SAM iteration \cite{sabach2017first} needs $\$R_z(\cdot)$ to be strongly convex, here we use a modified version of SAM which only assumes convexity. 
Given well-chosen $\{\beta_k\}_{k\in\^N}$ and $ \{\lambda_k\}_{k\in\^N} $, we can prove that the sequence generated by Eq.(\ref{eq:InclusinApprox}) converges to the point $\*z^{*}(\*x, \bm{\theta})$. Furthermore, we prove that the whole training dynamic, using the unrolling SAM strategy with backpropagation (BP), converges with a linear convergence rate.
We leave all the convergence results in appendix. 
\end{remark}
\begin{remark}
If we use the method in Sec.\ref{sec:Reg1} to introduce the prior information, there is no need to use SAM iteration again. Therefore, we can let beta $\beta_k = 0,~\$S_{\lambda_k}(\cdot) = 0$, and use the unrolling fixed point iteration strategy during training. Note that we can also use the implicit function theorem (IFT) based training way.
\end{remark}
Previous implicit models utilize IFT in training to avoid the storage consumption of forward-propagation.  The cost for that is it needs to solve two large-scale linear equations (or perform the matrix inversion directly) during training. Deep OptEqs can be trained both in the unrolling based and IFT based ways. In the sense of BP, the two training ways have different merits and limitations. We provide comparative experiments in Sec.\ref{sec:exp} and detailed discussion in appendix.
\vspace{-3mm}

\section{Experiments}\label{sec:exp}
\vspace{-3mm}
In this section, we investigate the empirical performance of deep OptEq from three aspects. 
First, on the image classification problem, we evaluate the performance of deep OptEqs along with our feature regularization strategies. 
The results trained with different $\alpha$s are also reported. Second, we compare deep OptEqs with previous implicit models and traditional DNNs. Finally, we compare our unrolling-based method with the IFT-based method and investigate the influence of unrolling iteration number $K$. 
Furthermore, we present the results on Cityscapes for semantic segmentation. 
\paragraph{Training Strategy of Deep OptEqs} 
In order to compare the effect of feature regularization on performance in detail, we compare three unrolling training ways based on Eq.(\ref{eq:lossapprox}): 1) $\$R^\dagger_z$: strategy in Sec.\ref{sec:SAM}, using the proposed SAM given in Eq.(\ref{eq:InclusinApprox}); 
2) $\$R^*_z$: strategy in Sec.\ref{sec:Reg1}, and set $\beta_k = 0$ in Eq.(\ref{eq:InclusinApprox}); and 3) no Reg: without feature regularization. 
During backward propagation, we utilize the commonly used first order optimization algorithm --- SGD. 
We set the learning rate as $0.1$ at the beginning and halve it after every $30$ epochs. And the total training epoch is $200$.
Here we set the iterative number $K=20$ and 
discuss the influence of different $K$ in Sec \ref{sec:diffK}.
\begin{table}
\centering
\caption{(a) The testing accuracy (Acc.) of deep OptEq with different settings. 
We set different $\lambda$ for $\$R^*_z$ and the mean values are taken on the whole feature tensor. 
(b) Comparisons with previous implicit models.
}
\begin{subtable}[h]{.58\linewidth}
\setlength{\tabcolsep}{3pt}
\raggedright
\caption{Feature Regularization (\# params 199k)}
	\begin{tabular}{l|ccccc}
	
		\toprule
		$\alpha$	& 0.01 & 0.1 & 0.4 & 0.8 & 1.0 \\
		\midrule
		Acc-(no Reg) & 58.4\% & 56.8\% & 86.9\%  & \textbf{87.4}\% & {87.2\%}  \\
		Acc-($\$R^\dagger_z$)  & 72.7\% & 61.5\% & 86.5\%  & 87.0\% & \textbf{87.7\%}\\
		Acc-($\$R^*_z$)  & 72.6\% & 60.0\% & \textbf{88.0}\%  & 87.6\% & {87.5\%}\\
		\midrule
		Acc-DNN. && & 82.7\% & &  \\
		\bottomrule
	$\$R^*_z = \lambda\norm{\cdot}_1$	& 0.01 & & 0.15 & & 0.5 \\
	\midrule
	mean $\norm{\cdot}_1$ &  $>5$ & & 0.81 & &\textbf{0.34}   \\
	Acc. & 86.4\%  & & \textbf{87.7}\% & &  86.9\% \\
	\bottomrule
	$\$R^*_z = \lambda \norm{\cdot}^2$	& 0.01 & & 1.0 & & 10.0 \\
	\midrule
	mean $\norm{\cdot}^2$ &  $>10$  & & 1.56 & &\textbf{0.82}   \\
	Acc. & 87.3\%  & & \textbf{87.6}\% & & 86.7\%  \\
	\midrule
\end{tabular} 
\label{tab:appending}
\end{subtable}
\begin{subtable}[h]{.38\linewidth}
\setlength{\tabcolsep}{3pt}
	\raggedright
	\caption{Performance Comparisons}
\label{tab:single}
\begin{tabular}{ccc}
	\toprule
Method	&  \# params &Acc. \\ 
\midrule
	ResNet-18 &  10M & \textbf{92.9}\%\\
\midrule
	Neural ODE&  172K&53.7\%\\
	Aug. Neural ODE& 172k &60.6\%\\
	\midrule 
	MON	&  &\\
	Single conv& 172K &74.1\%\\
	Single conv lg & 854K &82.5\%\\
	\midrule 
	deep OptEq	 & &\\
    decorrelation ($\$R^*_z$) & 1.4M & {91.0\%}\\
	 decorrelation ($\$R^\dagger_z$) & 162k & {86.0\%}\\
	 HSIC ($\$R^*_z$) & 162k & {87.4\%}\\
	no Reg  & 162k &85.7\%\\
	\bottomrule
\end{tabular}
\end{subtable}
\vspace{-6mm}
\end{table}
\vspace{-3mm}
\subsection{Performance of Different Feature Regularizations}
\vspace{-2mm}
We construct the deep OptEqs with $5$ convolutional layers, using five $3 \times 3$ convolution kernels with channels of $16, 32, 64, 128, 128$. In this experiment, we compare the performance of two ways to introduce the feature regularization on CIFAR-10 dataset.
We adopt the feature decorrelation as the $\$R_z$ here.
The specific formulation can be found in appendix.
We also use two norm regularizations to show whether there is a corresponding effect on the learned feature of deep OptEq.
Moreover, we show how the hyperparameter $\alpha$ affects the model performance.
\par
The results are shown in Table~\ref{tab:appending}. 
With the same size of parameters, deep OptEqs beats the general DNN (given in Eq.(\ref{eq:basicDNN})) easily.
It turns out that there is no linear relationship between the performance and the hyperparameter $\alpha$.
The hyperparameter $\alpha$ serves as a trade-off between the effect of fixed point equation and  the regulation induced operator $\$S$. In our setting, with a small initialization for $\{\*W_l\}$s, all weights will stay in a small compact set during training (see proof of Theorem \ref{thm:global}).
Therefore when $\alpha$ approaches $1$, deep OptEq is an intense contraction (i.e., with a small contractive coefficient), and the SAM iterations will quickly converge to the fixed point, in which case regularization induced operator $\$S$ has a limited impact. When $\alpha$ approaches $0$, deep OptEq is to be more like the identity operator, so $\$S$ dominates the whole iterations. 
\par
The optimization inspired implicit regularization ($\$R^*_z$) is also an efficient feature regularization way since it modifies deep OptEq structure directly.
Here we present two $\$R^*_z(\cdot)$ candidates: $\lambda\norm{\cdot}_1$ and $\lambda\norm{\cdot}^2$. 
The outputs of the feature show decreases in the corresponding norm, and a suitable regularization coefficient can lead to  better generalization performances.
\vspace{-2mm}
\subsection{Comparison with Previous Implicit Models}
\vspace{-2mm}
In this experiment, we compare deep OptEq with other implicit models, NODEs~\cite{chen2018neural}, Augmented NODEs~\cite{dupont2019augmented}, single convolutional Monotone DEQs~\cite{winston2020monotone} (short as MON), and classical ResNet-18~\cite{he2016deep}. 
We leave the definition of the HSIC regularization in appendix.
Note that deep OptEqs do not require the additional re-parameterization like MONs~\cite{winston2020monotone}. 
For fair comparisons, we construct the deep OptEqs with $5$ convolutional layers. 
In order to construct deep OptEqs with a similar number of parameters as baseline methods, we use five $3 \times 3$ convolution kernels with channels of $16, 32, 64, 64, 128$.   Moreover, we only use a single convolutional layer as the feature extractor $g(\cdot)$ for the model with 162k parameters, which is the same as single convolution MONs~\cite{winston2020monotone}.
\par
The results are shown in Table~\ref{tab:single}. 
Notably, even without feature regularization trick, our deep OptEqs significantly outperform baseline methods. 
We highlight the performance of deep OptEqs on CIFAR-10 which outperforms Augmented Neural ODE by $25.1\%$ and MON by $11.6\%$ with \emph{fewer parameters}.
Without adding the number of parameters, feature regularization helps deep OptEq to achieve better performance easily. 
Notably, HSIC, a feature disentanglement regularization, provides a significant gain for the generalization.
\vspace{-2mm}
\subsection{Efficiency and Approximation Error}\label{sec:diffK}
We train deep OptEqs by the IFT based way given in \citet{bai2019deep} and compare the results with the unrolling way.
The time for inference and BP is provided, and it is the total time for $80$ iteration steps with the batch size being $125$ on GPU NVIDIA GTX 1070.
The relative residual is averaged over all \emph{test} batches: $\norm{\*z^K - \$T(\*z^K,\*x, \bm{\theta})}_2/\norm{\*z^K}_2$.
For fair comparison, we do not utilize any feature regularization in this experiment. We set $\alpha = 0.8$ and let ``thd'' represent the residual threshold.
\begin{table}[ht]
\vspace{-4mm}
	\centering
\caption{Comparison between SAM and IFT based Training (\# params 199k)}
\begin{tabular}{c|cccc}
	\hline 
Method & Acc. & Inference Time &  Back-Prop Time & Relative Residual \\ 
	\hline  
Unrolling(K=5)	& 83.52\%  &\textbf{1.3}s   &\textbf{1.8}s & 1.22e-02  \\ 
	\hline 
Unrolling(K=10)	& 87.28\%  &2.2s   &3.3s & 4.88e-03  \\ 
	\hline 
Unrolling(K=20)	& 87.71\%  &3.9s   &6.4s & 4.81e-04  \\ 
	\hline 
Unrolling(K=40)	& \textbf{87.83}\%  &7.5s   &12.7s & \textbf{1.20e-05}  \\ 
	\hline 
IFT (thd = 1e-03)     	& 87.63\%  &16.3s   &6.7s & 7.33e-04  \\ 
	\hline 
IFT  (thd = 1e-02)   	& 85.95\%  &15.6s   &1.3s & 9.52e-03  \\ 
	\hline 
\end{tabular} \label{tab:time}
\vspace{-4mm}
\end{table}
\par
Although IFT based methods consume much less memory, given the comparable relative residual, the unrolling methods achieve better performance with much less inference and BP time. 
Note that a loose residual threshold may destroy the IFT based method significantly. We should choose the appropriate training method according to practice.
For IFT, the inference time is longer than the back-prop one since the fixed point equation needed to solve during inference is non-linear, which is more challenging than the linear one during BP.
\vspace{-2mm}
\subsection{Cityscapes Semantic Segmentation}
\vspace{-2mm}
In this experiment, we evaluate the empirical performance of our deep OptEq on a large-scale computer vision task: semantic segmentation on the Cityscapes dataset. We construct a deep OptEq with only three weighted layers and channels of 256, 512 and 512. The deep OptEq is used as the ``backbone'' of the segmentation network.  
We compare our method with FCN~\cite{shelhamer2017fully} on
the Cityscapes test set. We employ the poly learning rate policy to adjust the learning rate, where the initial learning rate is multiplied by $(1 - iter / total\_iter)^{0.9}$ after each iteration. The initial learning rate is set to be 0.01 for both networks. Moreover, momentum and weight decay are set to 0.9 and 0.001, respectively. Note that we only train on finely annotated data.
We train the model for 40K iterations, with mini-batch size set as 8. The results on the validation set are shown in Table~\ref{Table-Cityscapes}. Notably, our deep OptEq significantly outperforms FCN with a similar number of parameters. Note that in this experiment, we have not introduced any customized property of the feature, so the performance improvement is entirely due to the superiority of the implicit structure of deep OptEq. 
\begin{table}[h]
\vspace{-3mm}
	\centering
\caption{Evaluation on the validation set of Cityscapes semantic
	segmentation.}\label{Table-Cityscapes}
\begin{tabular}{c|ccc}
	\hline 
Method	& mIoU &  mAcc & aAcc \\ 
	\hline 
FCN	& 71.47  &79.23   &95.56  \\ 
	\hline 
deep OptEq   &74.47   &81.91   &95.93\\
	\hline 
\end{tabular} 
\vspace{-6mm}
\end{table}
\label{sec:5.1}

\section{Conclusions}
\vspace{-2mm}
In this paper, we decompose the feedforward DNN and find a more reasonable basic unit layer, which shows a close relationship with the proximal operator. Based on it, we propose new implicit models, OptEqs, and explore their underlying optimization problems thoroughly. 
We provide two strategies to introduce customized regularizations to the equilibrium points, and achieve significant performance improvement in experiments. We highlight that by modifying the underlying optimization problems, we can create more effective network architectures.
Our work may inspire more interpretable implicit models from the optimization perspective.

\bibliography{nips2021}
\bibliographystyle{plainnat}


\newpage
\onecolumn
\appendix

\section*{Appendix}
This Supplementary material section contains 
more detailed experimental results with different regularizer settings, the technical proofs of main theoretical results, and some auxiliary lemmas. 
\section{Additional Experimental Results}
\subsection{HSIC}
In this section, we introduce HSIC regularizer, which is a feature disentanglement method.
\par
HSIC is a statistical method to test independence. Compared with the decorrelation method we will present in the following section, HSIC can better capture the nonlinear dependency between random variables. We apply HSIC to the feature space. Many works~\cite{takada2018independently,bengio2013representation} show that when the features learned by the network are uncorrelated, the model usually obtains a good generalization performance. For any pair of random variables $\*X=(\*x_1,\cdots,\*x_B),\*Y=(\*y_1,\cdots,\*y_B)$, where $B$ is the batch size, we utilize the biased finite-sample estimator of HSIC~\cite{song2012feature}:
\[
\operatorname{HSIC}(\*X,\*Y)\coloneqq (B-1)^{-2}   \tr (\*K_X\*H\*K_Y\*H ),
\]
where $\*K_X$ and $\*K_Y$ are the kernel matrices w.r.t. Gaussian RBF kernel of $\*X$ and $\*Y$, and $\*H$ is the centering matrix $\*H=\*I-B^{-1} \*1_B \*1_B \in \*R^{B \times B}$. Following~\cite{zhang2021deep}, we aim to eliminate all the correlations between feature maps. To this end, our HSIC regularization is:
\[
\$R_z(\*Z)=\sum_{1\leq i<j \leq m} \operatorname{HSIC}(\*Z_{i,:},\*Z_{j,:}).
\]
Note that HSIC is a nonparametric regularization term, so it does not increase the parameter size of deep OptEq. The computing cost of HSIC grows as the batch size and feature dimension increase. Some tricks, such as Random Fourier Features approximation~\cite{zhang2021deep}, can be applied to speed up the calculation. In addition, Theorem \ref{thm:bilevelConvergence} is only guaranteed for convex regularization, while HSIC regularization is non-convex. In this paper, we report the great empirical superiority of HSIC and leave the above issues to future work.
\subsection{Different Settings for Regularizers}
On the dataset CIFAR-10, this experiment detailedly shows the effect of different settings on 
regularizer
$\$R^\dagger_z$ (for Sec.\ref{sec:SAM}), regularizer $\$R^*_z$ (for Sec.\ref{sec:Reg1}), and $\alpha$. 
\par
\textbf{Regularizers}. Here is the function we utilize to introduce the customized property to the equilibrium point, see Sec. \ref{sec:introProp} for more details. 
For the regularizer $\$R_z(\cdot)$ (both for $\$R^\dagger_z$ and  $\$R^*_z$), we set four different settings: (1)  $\$R_z(\*Z)=\sum_{1\leq i<j \leq m} \operatorname{HSIC}(\*Z_{i,:},\*Z_{j,:})$;
(2)$\$R_z(\*z) = \frac{1}{2}\norm{\*z}^2$; (3) $\$R_z(\*z) = 1/\qty(\norm{\*z}^2 + \epsilon)$ which is explored in \cite{yuan2017feature};
(4) Decorrelation: for the $B$-batch equilibrium points matrix $\*Z \in \^R^{m\times B}$:
\[
\$R_z(\*Z)= \frac{1}{2}\norm{\*D\*Z\*Z^\top\*D-\*I}_F^2  \coloneqq \$F_{Dz}(\*Z),
\]
where $\*D$ is a diagonal matrix whose non-zero entries are $\frac{1}{\norm{\*z_i}}$ and $\*z^i$ is the $i$-th row of the matrix $\*Z$. Note that $\$R_z(\*Z)$ here aims at reducing redundant information between feature dimensions, which has been discussed in \cite{rodriguez2016regularizing,ayinde2019regularizing}.
\par
\textbf{Settings}. In this experiment, we set $K=20$ and utilize weight decay to regularize the learnable parameters, i.e., $\$R_w(\cdot) = \xi\norm{\cdot}^2$, where we choose $\xi = 3e-4$.
We utilize the commonly used SGD to train the model. 
We set the learning rate as $0.1$ at the beginning and decay it by $0.7$ after every $20$ epochs. And the total training epoch is $200$.
The batch size is $125$ in this experiment.
We construct the deep OptEqs with $5$ convolutional layers, using five $3 \times 3$ convolution kernels with channels of $16, 32, 64, 128, 128$. The total number of learnable parameters is $199$k. 
\begin{table}[ht]
\centering
\caption{The testing accuracy of deep OptEq with different settings. We denote by ``no Reg'' the SAM with $\beta_k=0$. And the scripts $\dagger$ and $*$ means the regularizer given in Sec.\ref{sec:SAM} and Sec.\ref{sec:Reg1}, respectively.}
\resizebox{\textwidth}{!}{
	\begin{tabular}{c|cccccccc}
		\toprule
		$\alpha$	& no Reg & $ \qty(\frac{\norm{\cdot}^2}{2})^\dagger$ &$ \qty(\norm{\cdot}^2/2 + \epsilon)^{-1 }_\dagger $ & $\qty(\norm{\cdot}^2/2 + \epsilon)^{-1}_*$  & 
		 $\$F^\dagger_{Dz}$ & $\$F^*_{Dz}$ & HSIC$^\dagger$ & HSIC$^*$ \\
		\midrule
		0.01& 58.4\% & 69.4\% & \textbf{75.0\%}  & 64.9\%  & 72.7\%  &{72.6\%} & 70.6\% & 72.2\%  \\
		\midrule
		0.1 & 56.8\% & 61.9\% & {63.2\%}  &{64.7\%}  &  61.5\% &{60.0\%} & \textbf{66.1}\%  &64.5\%  \\
		\midrule
		0.4 & 86.9\% & 87.3\% & 78.4\%  &87.7\%  &86.5\%  &\textbf{88.0\%} &85.1\%& 85.5\%  \\
		\midrule
		0.8 & 87.4\% & 87.3\% & 87.3\%  &87.5\%  &87.0\%   &\textbf{87.6\%} &\textbf{87.6\%} &{87.5\%}  \\
		\midrule
		1.0 & 87.2\% & 87.4\% & 87.0\%  & 87.6\% &{87.7\%}  &{87.5\%}  & \textbf{88.1\%} &{87.9\%}  \\
		\bottomrule
\end{tabular}  \label{tab:regsRes}
}
\end{table}
\par
\textbf{Results}. The results with different regularizers are presented in Table~\ref{tab:regsRes}.
We can see that either adopting SAM iteration or changing the underlying convex optimization problem both improves classification performance.
Note that, given the same type of regularization, modifying the underlying optimization problem, i.e., using $\$R^*_z$, usually make more improvements.
Indeed, to modify the underlying optimization problem, we need to change the architecture of deep OptEq, which has a more direct impact on the model than turning the training loss by $\$R^\dagger_z$.
When $\alpha=0.01$, deep OptEq is almost equivalent to the one-layer wide OptEq (see Theorem \ref{thm:asyalpha}), which is far outperformed by deep OptEq for $\alpha>0.1$.
Compared with other results, $\alpha = 0.1$ gives a poor result, which implies that the performance is not monotonic to parameter $\alpha$.
Fortunately, from the table, setting $\alpha>0.4$ is a safe choice.
We notice that the overall performance of feature disentanglement methods (decorrelation and HSIC) are better than the other types of regularization terms whether we utilize it as $\$R^\dagger_z$ or $\$R^*_z$.
\section{Discussion about the Training Methods}
The IFT based implicit way utilizes the limited memory to train the model and is insensitive to the equilibrium point finding algorithms. However, it consumes much computation budget to solve the equation during the inference and BP.
On the other hand, the way that unrolls the fixed point finding method may induce implicit bias \cite{soudry2018implicit,razin2020implicit} and consumes much memory during training,  but it is faster to infer and train. Note that implicit bias is a two-edged sword; 
The proposed SAM method can aggregate the information from the prior regularization and the fixed point equation. Hence, the implicit bias becomes a controllable inductive bias.
\par
The tradeoff between memory and computing efficiency for the implicit and unrolling training methods is quite common in the other learning community, such as meta-learning~\cite{finn2017model,rajeswaran2019meta} and hyper-parameter optimization~\cite{pedregosa2016hyperparameter,lorraine2020optimizing}. Similarly, for DEQ, the two training ways are neither good nor bad. We should choose them in proper circumstances.
\section{Convergence Analysis}\label{sec:train}
This section offers the convergence results: (i) the sequence generated by Eq.(\ref{eq:InclusinApprox}) converges to some point $\*z^* \in \operatorname{Fix}(\$T)$ such that $\$R_z(\*z^*)\leq \$R_z(\*z),~\forall \*z \in \operatorname{Fix}(\$T)$; 
(ii) gradient descent can find a global minimum for the approximately implicit model in Eq.(\ref{eq:lossapprox}).
\subsection{Approximation of Equilibrium Point}
Note that we approximate the points $ \*z^{*}(\*x, \bm{\theta})$ by the iterative steps in Eq.(\ref{eq:InclusinApprox}). 
In fact, we take the iterative step by extending an existing algorithm, called Sequential Averaging Method (SAM), which was developed in \citet{xu2004viscosity} for solving a certain class of fixed-point problems, and then was applied to the bi-level optimization problems \cite{sabach2017first}. 
However, the existing SAM method can only deal with strongly convex $\$R_z(\*z)$.
Our method is the first SAM type algorithm that can solve the general convex problem restricted to a nonexpansive operator's fixed point set.
The following theorem provides the formal statement and the required conditions.
Since during the forward-propagation, $(\bm{\theta},\*x)$ is fixed, for the sake of convenience, we simplify $\$T(\*z,\*x, \bm{\theta})$ as $\$T(\*z)$.
\begin{theorem}[Convergence of Modified SAM Iterates]\label{thm:bilevelConvergence}
Suppose that $\nabla \$R_z(\*z)$ is $L_z$-Lipschitz, and that for any $\beta \in [0,\frac{1}{2}], \lambda \in [0,\frac{L_z}{2}]$, the fixed point set of equation: $\*z=\beta\qty(\*z-\gamma \qty( \nabla \$R_z(\*z)+\lambda \*z ))+(1-\beta)\$T(\*z)$ is uniformly bounded by $B_1^*$ (in norm $\norm{\cdot}$) w.r.t. $\beta$ and $\lambda$. Suppose that convex function $\$R_z(\*z)$ has a unique minimizer $\Bar{\*z}$ on $\operatorname{Fix}(\$T)$. Let $\beta_k=\frac{\eta}{k^\rho},\lambda_k=\frac{\eta}{k^c}, \gamma = \frac{1}{2L_z}$, where $\rho,c>0 $, $ \rho +2c<1$ and $ \eta=\min\qty{\sqrt{2L_z}, \frac{L_z}{2} , \frac{1}{2}}$,
then the sequence $\{\*z^{k}\}_{k \in\^N^+}$ generated by Eq.(\ref{eq:InclusinApprox}) converge to $\Bar{\*z}$.
\end{theorem}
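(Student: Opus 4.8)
The plan is to read Eq.(\ref{eq:InclusinApprox}) as a Sequential Averaging (SAM) recursion and to prove strong convergence by an \emph{anchor-tracking} argument: for each $k$ I introduce the fixed point of the frozen operator, then show (i) the iterates contract toward this anchor and (ii) the anchors themselves converge to $\Bar{\*z}$. Two structural facts set this up. Under Assumption~\ref{asm:weightBound} the leading term of each $f_l$ is the firmly nonexpansive proximal map of Theorem~\ref{thm:core}, convexly combined with the identity through the $(1-\alpha)$ skip term, so $\$T$ is $\alpha$-averaged and in particular nonexpansive, with $\operatorname{Fix}(\$T)\neq\varnothing$. Second, writing $h_{\lambda}\coloneqq\$R_z+\tfrac{\lambda}{2}\norm{\cdot}^2$, the step $\$S_{\lambda}=\$I-\gamma\nabla h_{\lambda}$ is the gradient map of a $\lambda$-strongly convex, $(L_z+\lambda)$-smooth function; with $\gamma=\tfrac{1}{2L_z}$ and $\lambda\in[0,\tfrac{L_z}{2}]$ it is a contraction of modulus $1-\gamma\lambda$. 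The $\lambda_k$-term is precisely the device that upgrades the merely convex $\$R_z$ to a strongly monotone gradient, which is what the original SAM analysis requires.

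I would then set $\Psi_k(\*z)\coloneqq\beta_k\$S_{\lambda_k}(\*z)+(1-\beta_k)\$T(\*z)$, so that $\*z^k=\Psi_k(\*z^{k-1})$. Since $\$T$ is nonexpansive and $\$S_{\lambda_k}$ is a $(1-\gamma\lambda_k)$-contraction, $\Psi_k$ is Lipschitz with modulus $(1-\beta_k)+\beta_k(1-\gamma\lambda_k)=1-\gamma\beta_k\lambda_k<1$; hence $\Psi_k$ has a unique fixed point $\*z^{\#}_k$, which is exactly a point of the fixed-point set appearing in the hypothesis and therefore lies in the ball of radius $B_1^*$ uniformly in $k$. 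Applying the contraction at the anchor gives the clean scalar recursion
\[
s_k\coloneqq\norm{\*z^k-\*z^{\#}_k}\le(1-a_k)\,s_{k-1}+d_k,\qquad a_k\coloneqq\gamma\beta_k\lambda_k,\quad d_k\coloneqq\norm{\*z^{\#}_{k-1}-\*z^{\#}_k}.
\]

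Next I would show the anchors converge to the target. Rearranging the fixed-point relation for $\*z^{\#}_k$ yields $\$T(\*z^{\#}_k)-\*z^{\#}_k=\tfrac{\gamma\beta_k}{1-\beta_k}\bigl(\nabla\$R_z(\*z^{\#}_k)+\lambda_k\*z^{\#}_k\bigr)$, so the defect vanishes as $\beta_k\to0$ and, by demiclosedness of the averaged $\$T$, every cluster point lies in $\operatorname{Fix}(\$T)$. Pairing the same identity against $\*w-\*z^{\#}_k$ for $\*w\in\operatorname{Fix}(\$T)$ and using averagedness gives $\innerprod{\nabla\$R_z(\*z^{\#}_k)+\lambda_k\*z^{\#}_k,\,\*z^{\#}_k-\*w}\le0$; letting $k\to\infty$ along a convergent subsequence (the bound $B_1^*$ controls the $\lambda_k\*z^{\#}_k$ term) produces the variational inequality $\innerprod{\nabla\$R_z(\*z^\infty),\,\*w-\*z^\infty}\ge0$ for all $\*w\in\operatorname{Fix}(\$T)$, which is exactly the optimality condition for $\min_{\operatorname{Fix}(\$T)}\$R_z$. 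Uniqueness of $\Bar{\*z}$ forces every cluster point, hence the whole bounded sequence, to satisfy $\*z^{\#}_k\to\Bar{\*z}$.

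Finally I would close with a summability lemma of Xu/Maingé type: if $\sum_k a_k=\infty$ and $d_k=o(a_k)$, then $s_k\to0$, and combining $s_k\to0$ with $\*z^{\#}_k\to\Bar{\*z}$ gives $\*z^k\to\Bar{\*z}$. The hard part is the drift estimate $d_k=o(a_k)$. Because the contraction gain $a_k\asymp\beta_k\lambda_k$ degenerates as $\lambda_k\to0$, the anchor map is increasingly ill-conditioned, and a fixed-point perturbation bound shows the anchor moves by at most $(\text{map perturbation})/a_k$; with $\beta_k=\eta k^{-\rho}$ and $\lambda_k=\eta k^{-c}$ the map perturbation is $O\bigl(\abs{\Delta\beta_k}+\beta_k\abs{\Delta\lambda_k}\bigr)$, which yields $d_k=O\bigl(k^{-(1-\rho-2c)}\bigr)a_k$, so $d_k=o(a_k)$ \emph{exactly} when $\rho+2c<1$, while the implied $\rho+c<1$ keeps $\sum_k a_k=\infty$. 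Balancing these two competing vanishing schedules---the averaging weight $\beta_k$ against the regularization strength $\lambda_k$---so that the strongly-monotone gain still dominates the anchor drift is the crux of the argument, and is the single place where the precise exponent condition $\rho+2c<1$ is consumed.
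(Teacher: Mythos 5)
Your proposal is correct and follows essentially the same route as the paper's proof: your anchors $\*z^{\#}_k$ are exactly the paper's auxiliary fixed points $\bar{\*z}^k$ of the frozen contraction (with modulus $1-\tfrac{\gamma\beta_k\lambda_k}{2}$ via their Lemma on $\$S_\lambda$, matching yours up to a constant), your drift bound $d_k=O(k^{\rho+2c-1})a_k$ is precisely their estimate $\norm{\bar{\*z}^k-\bar{\*z}^{k+1}}\le B_3^*k^{\rho+2c-1}\cdot\tfrac{\gamma}{2}\beta_{k+1}\lambda_{k+1}$, your Xu/Maing\'e summability step is their Lemma~\ref{lem:recurrlem}, and your variational-inequality argument for $\*z^{\#}_k\to\Bar{\*z}$ (pairing the fixed-point identity against $\*w-\*z^{\#}_k$ and using monotonicity of $\$I-\$T$) is their subsequence-contradiction argument via Lemma~\ref{lem:FixofnonE}. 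Your explicit demiclosedness remark even fills in the step, implicit in the paper, that the cluster point $\Bar{\*z}'$ lies in $\operatorname{Fix}(\$T)$ before uniqueness is invoked.
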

The formal assumptions of Theorem \ref{thm:bilevelConvergence} seem complicated, however, they can be easily fulfilled when $\$T(\*z)$ is contractive. 
A sufficient condition that makes $\$T(\*z)$ contractive is to let one $\|\*W_i\|_2\leq \zeta <1$.
More specifically, if some $\|\*W_i\|_2\leq \zeta < 1$, then $\*z\mapsto \beta\qty(\*z-\gamma ( \nabla \$R_z(\*z)+\lambda \*z ))+(1-\beta)\$T(\*z)$ is contractive and has a unique fixed point, which depends continuously on $\beta$ and $\lambda$ \cite{frigon2007fixed}, and thereby have a uniform bound.
\vspace{-2mm}
\subsection{Global Convergence of Implicit Model}
Most previous works on DEQs lack the convergence guarantees for their training. 
However, analyzing the learnable parameters' dynamics is crucial since it may weaken many model constraints and greatly broader the function class that the implicit model can represent.
For example, the one-layer DEQ, given in \citet{winston2020monotone}, maintains the positive definiteness of $(\*I-\*W)$ for all weight $\*W$ in $\^R^{m\times m}$ through a complicated parameterization technique.
However, after analysis, we find that the learnable weight will stay in a small compact set during training, thus, we may only need the positive definiteness within a local region instead of global space for the DEQ \cite{winston2020monotone}.
\par
\begin{theorem}[Global Convergence (informal)]\label{thm:global}
Suppose that the initialized weight $\*W_l$'s singular values are lower bounded away from zero for all $l\in [1,L+1]$,  and the fixed point set $\operatorname{Fix}(\$T(\cdot,\*X,\bm{\theta}))$ is non-empty and uniformly bounded for any $\widetilde{\bm{\theta}}$ in a pre-defined compact set.
Assume that the activation function is Lipschitz smooth, strongly monotone and 1-Lipschitz. 
Define constants $Q_0$, $Q_1$ and $Q_3$, which depend on the bounds for initialization parameters, initial loss value, and the datasize. 
Let the learning rate be $\eta < \min\{\frac{1}{Q_0},\frac{1}{Q_1}\}$.
If the training data size $N$ is large enough, then the training loss vanishes at a linear rate as:
$
\ell(\widetilde{\bm{\theta}}^t) \leq \ell(\widetilde{\bm{\theta}}^0)\qty(1-\eta Q_0)^t$,
where $t$ is the number of iteration. 
Furthermore, the network parameters also converge to a global minimizer $\widetilde{\bm{\theta}}^*$ at a linear speed:$
\|\widetilde{\bm{\theta}}^t - \widetilde{\bm{\theta}}^*\|\leq Q_3 \qty(1-\eta Q_0)^{t/2}
$.
\end{theorem}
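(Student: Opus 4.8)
The plan is to prove linear convergence through a \emph{Polyak--\L ojasiewicz (PL) / gradient-dominance} argument paired with a descent lemma, while taming the implicit nature of $\*z^K$ (equivalently the equilibrium $\*z^*$) using the contractiveness that the assumptions guarantee. Concretely, I would show that on the pre-defined compact parameter set the loss $\ell(\widetilde{\bm{\theta}})$ is $Q_1$-smooth and satisfies a PL inequality with constant $Q_0$; both displayed bounds then fall out of one geometric-decay estimate for the loss together with a Cauchy-sequence argument for the parameters. Throughout, the strong monotonicity and $1$-Lipschitzness of the activation, plus the (locally) bounded weights, make $\$T(\cdot)$ a contraction, so the equilibrium map is well-posed and differentiable and the backpropagated derivatives of the finite unrolling stay under control.

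First I would study the Jacobian $\*J(\widetilde{\bm{\theta}})$ of the prediction map $\widetilde{\bm{\theta}}\mapsto \*W_{L+1}\*z^K(\*x,\bm{\theta})$. Since $\*z^K$ (or $\*z^*$) depends on $\bm{\theta}$ only through repeated application of $\$T$, implicit differentiation gives $\partial\*z^*/\partial\bm{\theta} = (\*I-\partial\$T/\partial\*z)^{-1}\,\partial\$T/\partial\bm{\theta}$, and the contraction property makes the inverse factor uniformly bounded; this yields the Lipschitz-smoothness constant $Q_1$ for $\ell$ on the compact set. The assumed lower bound on the initial singular values of each $\*W_l$ is precisely what keeps $\*J$ from degenerating, and, combined with the data size $N$ being large enough, it guarantees that the Gram matrix $\*J\*J^\top$ has smallest eigenvalue bounded below by some $Q_0>0$, i.e. the PL inequality $\norm{\nabla\ell(\widetilde{\bm{\theta}})}^2\ge 2Q_0\,\ell(\widetilde{\bm{\theta}})$ holds there.

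Conditional on the iterates remaining in this region (established by the induction below), the descent lemma for a $Q_1$-smooth loss gives $\ell(\widetilde{\bm{\theta}}^{t+1})\le \ell(\widetilde{\bm{\theta}}^{t})-\eta(1-\tfrac{\eta Q_1}{2})\norm{\nabla\ell(\widetilde{\bm{\theta}}^t)}^2$, and inserting the PL inequality under the step-size condition $\eta<\min\{1/Q_0,1/Q_1\}$ collapses this to the geometric recursion $\ell(\widetilde{\bm{\theta}}^{t+1})\le(1-\eta Q_0)\,\ell(\widetilde{\bm{\theta}}^{t})$, which is exactly the first claimed bound. For the parameters, smoothness also yields $\norm{\widetilde{\bm{\theta}}^{t+1}-\widetilde{\bm{\theta}}^{t}}=\eta\norm{\nabla\ell^t}\le \eta\sqrt{2Q_1\,\ell^t}\le \eta\sqrt{2Q_1\,\ell^0}\,(1-\eta Q_0)^{t/2}$, so $\{\widetilde{\bm{\theta}}^t\}$ is Cauchy, converges to some $\widetilde{\bm{\theta}}^*$, and obeys $\norm{\widetilde{\bm{\theta}}^t-\widetilde{\bm{\theta}}^*}\le Q_3(1-\eta Q_0)^{t/2}$; since $\ell\ge 0$ and $\ell(\widetilde{\bm{\theta}}^t)\to 0$, the limit $\widetilde{\bm{\theta}}^*$ is a global minimizer.

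The delicate part, and the main obstacle, is making the PL constant $Q_0$ \emph{uniform along the entire trajectory} rather than only at initialization. This requires an induction: assuming the geometric loss decay up to step $t$, the cumulative displacement $\sum_{s<t}\norm{\widetilde{\bm{\theta}}^{s+1}-\widetilde{\bm{\theta}}^s}$ is bounded by a constant multiple of $\sqrt{\ell^0}$, and I would use the large-$N$ and initialization scaling to make this radius small relative to the spectral margin of $\*J\*J^\top$, so that $\lambda_{\min}(\*J\*J^\top)\ge Q_0$ persists and the induction closes. What makes this harder than the feedforward case is that the Jacobian here carries the inverse factor $(\*I-\partial\$T/\partial\*z)^{-1}$, so one must verify that this factor---and hence both $Q_0$ and $Q_1$---stays well-behaved as the weights drift; this is exactly where the strong monotonicity and the uniform boundedness of $\operatorname{Fix}(\$T)$ are indispensable.
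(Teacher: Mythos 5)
Your high-level skeleton is the same as the paper's: a smoothness constant $Q_1$ for $\ell$ on a compact parameter set, a gradient-dominance (PL-type) bound with constant $Q_0$, the descent lemma collapsing to $\ell(\widetilde{\bm{\theta}}^{t+1})\leq(1-\eta Q_0)\ell(\widetilde{\bm{\theta}}^t)$, an induction showing the iterates never leave the region where both constants hold (closed by making the cumulative drift small via the large-$N$ condition, with Weyl's inequality preserving $\sigma_{\min}(\mathbf{W}_{L+1}^t)\geq\tfrac12\sigma_{\min}(\mathbf{W}_{L+1}^0)$), and finally a Cauchy-sequence argument giving the rate $Q_3(1-\eta Q_0)^{t/2}$. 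The paper also instantiates your abstract claim $\lambda_{\min}(\mathbf{J}\mathbf{J}^\top)\geq Q_0$ concretely: it lower-bounds only the bias-block gradients, using $\sigma'(\cdot)\geq\kappa>0$ and $\sigma_{\min}(\mathbf{W}_l)\geq\sigma_m$ to get $\|\nabla\ell\|^2\geq\sum_{l}\|\nabla_{\mathbf{b}_l}\ell\|^2\geq cK^2L\alpha^2\kappa^2\sigma_m^2 N\,\sigma^2_{\min}(\mathbf{W}_{L+1})\|\mathbf{y}-\mathbf{y}_0\|^2$; identifying \emph{which} parameter blocks supply non-degeneracy is the substantive content that your "guaranteed by large $N$" assertion leaves unproven.

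The genuine gap is in your derivative machinery. The theorem concerns gradient descent on the \emph{unrolled} objective Eq.(\ref{eq:lossapprox}), whose loss is evaluated at the finite SAM iterate $\mathbf{z}^K$; accordingly, the paper differentiates the $K$-step recursion exactly (Lemma \ref{lem:VecGradient}), obtaining sums over $k\leq K$ of products of per-step Jacobians, each of operator norm at most $1$, so no matrix inversion ever appears and the constants (e.g.\ $Q_0\propto K^2 N$) carry the unrolling explicitly. You instead invoke implicit differentiation, $\partial\mathbf{z}^*/\partial\bm{\theta}=(\mathbf{I}-\partial\mathcal{T}/\partial\mathbf{z})^{-1}\partial\mathcal{T}/\partial\bm{\theta}$, and justify boundedness of the inverse by asserting that the hypotheses make $\mathcal{T}$ a contraction. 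This fails twice under the stated assumptions. First, strong monotonicity plus $1$-Lipschitzness of $\sigma$ and $\|\mathbf{W}_l\|_2\leq 1$ only make each layer averaged/nonexpansive: $\|\partial\mathcal{T}/\partial\mathbf{z}\|_2$ can equal $1$, so $(\mathbf{I}-\partial\mathcal{T}/\partial\mathbf{z})^{-1}$ need not exist, let alone be uniformly bounded — this is exactly why the theorem assumes nonemptiness and uniform boundedness of $\operatorname{Fix}(\mathcal{T})$ instead of contraction (the paper mentions $\|\mathbf{W}_i\|_2\leq\zeta<1$ only as a \emph{sufficient} condition in remarks, and it is not assumed here). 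Second, even where the IFT gradient exists, it is the gradient of the equilibrium loss, not of the unrolled loss the algorithm actually descends; treating "$\mathbf{z}^K$ (equivalently $\mathbf{z}^*$)" as interchangeable leaves the gradient mismatch uncontrolled, and controlling it would again require a contraction rate and large $K$. Repairing your proposal amounts to replacing the IFT step with exact backpropagation through the finite recursion — at which point you recover the paper's proof.
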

Theorem \ref{thm:global} shows that GD converges to a global optimum for any initialization satisfying the boundedness assumption.
In general, the lower bounded assumption on singular values is easy to fulfill. 
With high probability, the weight matrix's singular values are lower bounded away from zero when it is a rectangle and has independent, sub-Gaussian rows or has independent Gaussian entries, see Thm.4.6.1 and Ex.7.3.4 in \cite{vershynin2018high}.
Similar to the remark after Theorem \ref{thm:bilevelConvergence}, the existence and boundedness assumption on the set $\operatorname{Fix}(\$T)$ is mild.
Suppose that we have one weight that satisfies $\*W_l\leq \zeta <1$, then the fixed point of $\$T$ exists and is unique, and is continuous w.r.t. the parameters $\widetilde{\bm{\theta}}$. 
On the other hand, the parameters $\widetilde{\bm{\theta}}$ will stay in a compact set during training.
 Therefore, the fixed points are uniformly upper bounded.

\section{Proofs for the DNN Reformulation}
\subsection{Proof of  Lemma \ref{lem:reformulate}}
The formal proof for Lemma \ref{lem:reformulate} relies on the following auxiliary lemma.
\begin{lemma}\label{lem:auxRef}
If $k\geq 2 \max\{m,n\}$, given any $\*W\in \^R^{m\times n}$, and a full rank matrix $\*A \in \^R^{m\times k}$, there exists a full rank matrix $\*B \in \^R^{n\times k}$, such  that $\*W = \*A\*B^\top$.
\end{lemma}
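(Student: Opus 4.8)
The plan is to recast the factorization as a consistent linear system and then exploit the abundance of free columns that the bound on $k$ provides. Since $k \geq 2\max\{m,n\} > m$, the hypothesis that $\*A \in \^R^{m\times k}$ is full rank means it has full \emph{row} rank $m$; likewise, the matrix $\*B$ we seek, which lives in $\^R^{n\times k}$ with $k > n$, is ``full rank'' precisely when it has full row rank $n$. Setting $\*X \coloneqq \*B^\top \in \^R^{k\times n}$, the target identity $\*W = \*A\*B^\top$ is the equation $\*A\*X = \*W$, and the requirement on $\*B$ becomes that $\*X$ have full column rank $n$. The single numerical fact I will use is that $k \geq 2\max\{m,n\} \geq m+n$, hence $k-m \geq n$: there are enough coordinates orthogonal to a basis of the row space of $\*A$ to host a full-rank block.

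First I would select $m$ linearly independent columns of $\*A$ (they exist because $\operatorname{rank}\*A = m$) and, without loss of generality, assume they are the first $m$; the general case follows by a column permutation of $\*A$, equivalently a row permutation of $\*X$, which alters neither the solvability nor the rank of $\*X$. Partition $\*A = [\,\*A_1 \mid \*A_2\,]$ with $\*A_1 \in \^R^{m\times m}$ invertible and $\*A_2 \in \^R^{m\times(k-m)}$, and partition the unknown conformably as $\*X = \left[\begin{smallmatrix}\*X_1\\ \*X_2\end{smallmatrix}\right]$ with $\*X_1 \in \^R^{m\times n}$ and $\*X_2 \in \^R^{(k-m)\times n}$. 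Then $\*A\*X = \*W$ reads $\*A_1\*X_1 + \*A_2\*X_2 = \*W$, so for \emph{any} choice of $\*X_2$ I may set $\*X_1 = \*A_1^{-1}(\*W - \*A_2\*X_2)$ and the equation holds. The freedom in $\*X_2$ is exactly what I will spend to force full rank: using $k-m \geq n$, take $\*X_2 = \left[\begin{smallmatrix}\*I_n\\ \*0\end{smallmatrix}\right] \in \^R^{(k-m)\times n}$, which already has full column rank $n$.

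To finish, I would observe that deleting rows can only decrease rank, so $\operatorname{rank}\*X \geq \operatorname{rank}\*X_2 = n$; since $\*X$ has only $n$ columns, its rank is exactly $n$, i.e.\ $\*X$ has full column rank. Transposing, $\*B \coloneqq \*X^\top \in \^R^{n\times k}$ has full row rank $n$ and satisfies $\*A\*B^\top = \*A\*X = \*W$, after undoing the initial column permutation. The only conceptual subtlety — and the point where a naive attempt fails — is that the canonical particular solution $\*X = \pinv{A}\*W = \*A^\top(\*A\*A^\top)^{-1}\*W$ need not be full rank (e.g.\ whenever $\*W$ is itself rank-deficient), so the essential step is genuinely to use the $k-m \geq n$ free directions in $\ker\*A$; seating a full-rank block there is what makes the entire stack full rank while leaving $\*A\*X = \*W$ intact.
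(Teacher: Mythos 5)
Your proof is correct, and it reaches the paper's conclusion via a different decomposition. The paper takes the full SVD $\*A = \*U\bm{\Sigma}\*V^\top$, solves $\*U^\top\*W = \bm{\Sigma}\*C$ by dividing the first $m$ rows by the singular values $\sigma_i$ (all nonzero since $\*A$ has full row rank), fills the remaining $k-m$ rows of $\*C$ with a free block ``adjusted'' so that $\rank(\*C)=n$, and sets $\*B = \*C^\top\*V^\top$, with the orthogonality of $\*V$ preserving the rank. You instead pivot: extract an invertible column block $\*A_1 \in \^R^{m\times m}$, observe that for any free block $\*X_2$ the choice $\*X_1 = \*A_1^{-1}(\*W-\*A_2\*X_2)$ solves the system, and plant $\*X_2 = \bigl[\begin{smallmatrix}\*I_n\\ \*0\end{smallmatrix}\bigr]$. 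The engine is the same in both arguments --- the system $\*A\*X = \*W$ is underdetermined with $k-m \geq n$ degrees of freedom along $\ker \*A$, and one spends them to seat a rank-$n$ block --- but your realization is more elementary (no SVD), makes explicit the arithmetic $k \geq 2\max\{m,n\} \geq m+n$ that the paper leaves implicit in the phrase ``by adjusting $\bm{*}$'', and is fully constructive, whereas the paper's rank adjustment of $\*C$ is asserted rather than exhibited. What the SVD buys the paper is freedom from the column-selection and permutation bookkeeping you must carry (the orthogonal factor $\*V$ absorbs it). Your closing observation is also on point and worth keeping: the minimum-norm solution $\pinv{A}\*W$ has rank equal to $\rank \*W$, so it fails precisely when $\*W$ is rank-deficient, which is why exploiting the kernel block is essential in both routes.
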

\begin{proof}
Considering the full SVD of $\*A = \*U \bm{\Sigma} \*V^\top$, where $\*U \in \^R^{m\times m}$, $\bm{\Sigma}\in \^R^{m\times k}$, and $\*V \in \^R^{k\times k}$.
Let 
\[
\*U^\top \*W \coloneqq \bm{\Omega} = 
\left[\begin{array}{c}
\bm{\Omega}_{1} \\
\bm{\Omega}_{2} \\
\vdots \\
\bm{\Omega}_{m}
\end{array}\right].
\]
Considering the equation:
\[
\bm{\Omega} = \bm{\Sigma} \*C.
\]
We can easily find that:
\[\*C = 
\left[\begin{array}{c}
\bm{\Omega}_{1}/\sigma_1\\
\bm{\Omega}_{2}/\sigma_2 \\
\vdots \\
\bm{\Omega}_{m}/\sigma_m\\
\bm{*}
\end{array}\right],
\]
is a solution, and we let $\rank(\*C) = n$ by adjusting $\bm{*}$. 
We let $\*B = \*C^\top\*V^\top$, hence, we can conclude that $\rank(\*B) = n$.
It is easy to verify that $\*W = \*A\*B^\top$.
\end{proof}
Recall that, we have:
\[
\begin{aligned}
\*y= & \*W_{L} \sigma \qty\Big( \*W_{L-1} \sigma \qty( \cdots\*W_2 \sigma(\*W_1\*z_0 + \*U_1\*x + \*b_1)\cdots )   )\\
= & \underbrace{\overline{\*W}_{L}~\overline{\*W}_{L-1}^\top}_{{\*W}_{L}} \sigma \qty( \underbrace{\overline{\*W}_{L-1}~\overline{\*W}_{L-2}^\top}_{{\*W}_{L-1}} \sigma \qty( \cdots\underbrace{\overline{\*W}_{2}~\overline{\*W}_{1}^\top}_{{\*W}_{2}} \sigma(\underbrace{\overline{\*W}_{1}~\overline{\*W}_{0}^\top}_{{\*W}_{0}}\*z_0 + \*U_1\*x + \*b_1)\cdots )   ).
\end{aligned}
\]
Based on the above Lemma \ref{lem:auxRef}, the existence of $\overline{\*W}_k \in \^R^{n_k\times m}$ for all $k$ can be easily guaranteed by giving a rank $m$ matrix $\overline{\*W}_{L}$. Note that $\forall k, \rank(\overline{\*W}_k) = m$ and $m \geq \max\{n_k,n_{k-1}\}$.

\subsection{Proof of Lemma \ref{lem:allDNNs}}
\begin{proof}
Set $\alpha=1$ and let $\*W_1 = \*0, \*U_1 = \*U_3 = \cdots = \*U_L = 0$ and $\*b_i = 0$ for all $i$, we have:
\[
\*A_L \sigma\qty\Big(\*A_{L-1}\sigma \cdots\sigma(\*A_2\sigma(\*A_1\*x))) = 
 \underbrace{\*W_{L+1}\*W_L^\top}_{\*A_L}\sigma\qty\bigg(\underbrace{\*W_{L}\*W^\top_{L-1}}_{\*A_{L-1}} \sigma \cdots \sigma \qty\Big( \underbrace{\*W_3\*W^\top_2}_{\*A_2}\sigma\qty\Big(\underbrace{\*U_2}_{\*A_1}\*x))).
\]
The existence of $\qty{\*W_{l}}_{l=2}^{L+1}$ can be easily obtained by  Lemma \ref{lem:auxRef}.
The bias term $\*b_i$ can be easily included by changing each layer's output $\*x_i$ to $\qty[\*x_i;1]$ and set $\*A_i$ to $\mqty[\*A_i & \*b_i \\ 0 & 1] $.
\end{proof}

\section{Proofs for the Connection between Optimization and OptEq}
\subsection{Conditions to be a Proximal Operator}
\begin{lemma}[modified version of Prop. 2 in \citet{gribonval2020chara}]\label{lem:proxandsubg}
Consider $f: \$H\rightarrow \$H $ defined everywhere. The following properties are equivalent:
\begin{enumerate}[label=(\roman*)]
    \item there is a proper convex l.s.c function $\varphi:\$H \rightarrow \^R\cup\{+\infty\}$ such that $f(\*z) \in \operatorname{prox}_{\varphi}(\*z)$ for each $\*z\in \$H$;
    \item \label{lem:proxandsubgii} the following conditions hold jointly:
    \begin{enumerate}[label=(\alph*)]
        \item there exists a convex l.s.c function $\psi: \$H \rightarrow \^R$ such that  $\forall \*y \in \$H, f(\*y) = \nabla \psi(\*y)$;
        \item $\norm{f(\*y)-f(\*y')}\leq \norm{\*y-\*y^{\prime}},~ \forall \*y,~\*y^{\prime} \in \$H$.
    \end{enumerate}
\end{enumerate}
There exists a choice of $\varphi(\cdot)$ and $\psi(\cdot)$, satisfying (i) and (ii), such that
$\varphi(\*x)=\psi^{*}(\*x)-\frac{1}{2}\|\*x\|^2$.
\end{lemma}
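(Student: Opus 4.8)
The plan is to establish both implications around the single explicit formula $\varphi = \psi^{*} - \tfrac12\norm{\cdot}^2$, using the Moreau envelope as the bridge between a proximal operator and the gradient of a smooth convex potential. Throughout I will invoke three standard facts of convex analysis on $\$H$: (1) for proper convex l.s.c.\ $\varphi$, the Moreau envelope $M_\varphi$ (with $\mu=1$) is finite-valued, convex and differentiable with $\nabla M_\varphi = \$I - \operatorname{prox}_\varphi$; (2) the Fenchel reciprocity $\*w \in \partial \psi(\*z) \iff \*z \in \partial \psi^{*}(\*w)$ for proper convex l.s.c.\ $\psi$; and (3) a convex l.s.c.\ $\psi$ has a globally defined $1$-Lipschitz gradient if and only if its conjugate $\psi^{*}$ is $1$-strongly convex (the conjugate duality between smoothness and strong convexity).

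For (i) $\Rightarrow$ (ii), suppose $f = \operatorname{prox}_\varphi$. I would set $\psi \coloneqq \tfrac12\norm{\cdot}^2 - M_\varphi$; completing the square shows $\psi = (\varphi + \tfrac12\norm{\cdot}^2)^{*}$, hence $\psi$ is convex, l.s.c.\ and, being the conjugate of a supercoercive function, finite everywhere. Differentiating and invoking fact (1) gives $\nabla\psi = \$I - \nabla M_\varphi = \operatorname{prox}_\varphi = f$, which is (a); moreover $\psi^{*} = \varphi + \tfrac12\norm{\cdot}^2$, so the asserted relation $\varphi = \psi^{*} - \tfrac12\norm{\cdot}^2$ already holds for this canonical choice. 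Condition (b) is the classical firm nonexpansiveness of proximal operators, $\norm{\operatorname{prox}_\varphi(\*x)-\operatorname{prox}_\varphi(\*y)}^2 \le \innerprod{\operatorname{prox}_\varphi(\*x)-\operatorname{prox}_\varphi(\*y), \*x-\*y}$, which yields $1$-Lipschitzness by Cauchy--Schwarz.

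For (ii) $\Rightarrow$ (i), I would take the candidate $\varphi \coloneqq \psi^{*} - \tfrac12\norm{\cdot}^2$. By (b), $\nabla\psi = f$ is $1$-Lipschitz, so fact (3) makes $\psi^{*}$ $1$-strongly convex; subtracting $\tfrac12\norm{\cdot}^2$ therefore leaves $\varphi$ convex, and it is proper and l.s.c.\ since $\psi^{*}$ is. It remains to verify the proximal optimality condition: because $\varphi$ is proper convex l.s.c., the map $\*u \mapsto \tfrac12\norm{\*u-\*z}^2 + \varphi(\*u)$ is strongly convex, so $\operatorname{prox}_\varphi(\*z)$ is the unique point $\*w$ with $\*z - \*w \in \partial\varphi(\*w)$. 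Using $\partial\varphi(\*w) = \partial\psi^{*}(\*w) - \*w$ (valid since the subtracted term is smooth and $\varphi$ is convex), this reads $\*z \in \partial\psi^{*}(\*w)$, which by Fenchel reciprocity (fact (2)) is equivalent to $\*w \in \partial\psi(\*z) = \{f(\*z)\}$, the last equality holding because $\psi$ is differentiable everywhere. Hence $\operatorname{prox}_\varphi(\*z) = f(\*z)$, establishing (i) with exactly the asserted $\varphi$.

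The main obstacle I anticipate is fact (3) --- the equivalence between global $1$-Lipschitzness of $\nabla\psi$ and $1$-strong convexity of $\psi^{*}$ --- since this is where condition (b) is genuinely used and it is precisely what guarantees that the formally defined $\varphi$ is actually convex (without (b), $\psi^{*} - \tfrac12\norm{\cdot}^2$ need not be). The second delicate point is the subdifferential arithmetic $\partial\varphi = \partial\psi^{*} - \$I$ together with the Fenchel reciprocity, which must be applied while keeping track that differentiability of $\psi$ collapses $\partial\psi(\*z)$ to the singleton $\{f(\*z)\}$. Everything else --- finiteness of $M_\varphi$, single-valuedness of $\operatorname{prox}_\varphi$, and properness of $\varphi$ --- is routine once these two structural facts are in place.
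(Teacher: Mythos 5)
Your proof is correct and follows essentially the same route as the paper's: both directions pivot on the choice $\varphi=\psi^{*}-\tfrac{1}{2}\norm{\cdot}^2$ with $\psi=\qty(\varphi+\tfrac{1}{2}\norm{\cdot}^2)^{*}$, the conjugate duality between $1$-smoothness and $1$-strong convexity, and the Fenchel reciprocity $\*z\in\partial\psi^{*}(\*w)\iff\*w\in\partial\psi(\*z)$. The only cosmetic difference is in (i)$\Rightarrow$(ii), where you obtain $\nabla\psi=\operatorname{prox}_{\varphi}$ via the Moreau-envelope identity $\nabla M_{\varphi} = \$I-\operatorname{prox}_{\varphi}$ (after verifying $\tfrac{1}{2}\norm{\cdot}^2-M_{\varphi}=\qty(\varphi+\tfrac{1}{2}\norm{\cdot}^2)^{*}$ by completing the square) and deduce (b) from firm nonexpansiveness of the proximal operator, whereas the paper derives the same gradient identity directly from the subdifferential optimality condition for the prox and reads nonexpansiveness off the $1$-smoothness of $\psi$.
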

\begin{proof}
(i)$\Rightarrow$(ii):
Since $\varphi (\*x)+\frac{1}{2}\|\*x\|^2$ is a proper l.s.c 1-strongly convex function, then by Thm. 5.26 in \citet{beck2017first}, i.e, the conjugate function $f^*$ is $\frac{1}{\sigma}$-smooth when $f$ is proper, closed and $\sigma$ strongly convex and vice versa.
Thus, we have:
\[
\psi(\*x):=\qty[\varphi (\*x)+\frac{1}{2}\norm{\*x}^2]^*,
\]
is 1-smooth with $\operatorname{dom}(\psi)=\$H$. Then we get:
\[
\begin{aligned}
 f(\*x) &\in \argmin\limits_{\*u} \frac{1}{2}\norm{\*u-\*x}^{2}+ \varphi(\*u)
 = \{\*u \mid \*x \in \partial \varphi(\*u)+ \*u   \}\\
&= \qty{\*u \mid \*x \in \partial \qty(\varphi(\*u)+\frac{1}{2}\norm{\*u}^2  ) }\\
&= \qty{\*u \mid \*u=\nabla \psi (\*x) }=  \qty{\nabla \psi (\*x) }
\end{aligned}
\]
where the third equality comes from Thm. 4.20 in in \citet{beck2017first}, which notes that $\*y \in \partial f(\*x)$ is equivalent to $\*x \in \partial f^*(\*y)$. 
Hence $f(\*x)=\nabla \psi (\*x)$, and 1-smoothness of $\psi$ implies $f$ is nonexpansive.
\par
(ii)$\Rightarrow$(i): Let $\varphi(\*x)=\psi^*(\*x)-\frac{1}{2}\|\*x\|^2$. Since $\psi$ is 1-smooth, similarly we can conclude: $\psi^*$ is 1-strongly convex. Hence, $\varphi$ is convex, and:
\[
\begin{aligned}
\operatorname{prox}_{\varphi}(\*x) &=  \argmin\limits_{\*u} \qty{ \frac{1}{2}\norm{\*u-\*x}^{2}+ \varphi(\*u)}\\
& = \qty{\*u \mid \*x \in \partial \varphi(\*u)+\*u   }\\
& = \qty{\nabla \psi (\*x) }=\qty{f(\*x)},
\end{aligned}
\]
which means $f(\*x)=\operatorname{prox}_{\varphi}(\*x)$.
\end{proof}

\subsection{Proof for Theorem \ref{thm:core}}
\begin{proof}
In the proof, w.l.o.g, we let $\mu=1$ for the ease of presentation, and hence let $\Tilde{L}_\sigma=1$ and $\norm{\*W}\leq 1$.
Since $ \mathbf{1}^\top  \tilde{\sigma}(\*y) = \sum_{i=1}^n \tilde{\sigma}(y_i)$, we have $\nabla\qty(\tilde{\sigma}(\*y))  =[\sigma(y_1),\cdots,\sigma(y_n)]^{\top}=\sigma(\*y)$, by the chain rule, $\nabla{\psi}(\*z)=\*W^\top\sigma\qty(\*W\*z + \*b)=f(\*z)$.
\par
Since $\sigma(a)$ is a single-valued function with slope in $[0,1]$, the element-wise defined operator $\sigma(\*a)$ is nonexpansive (see the definition in Lemma \ref{lem:proxandsubg}). Combining with $\norm{\*W}_2 \leq 1$, operator $f(\*z)=\*W^\top\sigma\qty(\*W\*z + \*b)$ is also nonexpansive.
\par
Due to Lemma \ref{lem:proxandsubg}, we have $f(\*z) = \operatorname{prox}_{\varphi}(\*z)$, and $\varphi(\*z)$ can be chosen as $\varphi(\*z) = \psi^{*}(\*z) - \frac{1}{2}\norm{\*z}^2 $
\end{proof}

\subsection{Proof of Lemma \ref{lem:cyclFixedPoint}}
\begin{proof}
We can rewrite deep OptEq $\*z = f_{L}\circ f_{L-1}\cdots \circ f_1(\*z,\*x, \bm{\theta})$ in a separated form:  let $\*z=\*z_0$, 
\begin{equation}\label{eq:SepDeq}
\left\{
\begin{array}{l}
\*z_{1}=\alpha \*W_{1}^{\top} \sigma\left(\*W_{1} \*z_{0}+\*U_{1} \*x+\*b_{1}\right)+(1-\alpha) \*z_{0} \\
\*z_{2}=\alpha \*W_{2}^{\top} \sigma\left(\*W_{2} \*z_{1}+\*U_{2} x+\*b_{2}\right)+(1-\alpha) \*z_{1} \\
\vdots \\
\*z_{L-1}=\alpha \*W_{L-1}^{\top} \sigma\left(\*W_{L-1} \*z_{L-2}+\*U_{L-1} x+\*b_{L-1}\right)+(1-\alpha) \*z_{L-2} \\
\*z_0=\alpha \*W_{L}^{\top} \sigma\left(\*W_{L} \*z_{L-1}+\*U_{L} x+\*b_{L}\right)+(1-\alpha) \*z_{L-1}
\end{array},
\right.
\end{equation}
and it also has a compact matrix form:
\begin{equation}\label{eq:MatDeq}
\resizebox{\hsize}{!}{$
\begin{aligned}
\left[\begin{array}{c}
\*z_{1} \\
\*z_{2} \\
\vdots \\
\*z_{L-1} \\
\*z_{0}
\end{array}\right]=
\alpha&\left[\begin{array}{ccccc}
\*W_{1}^\top & & & &  \\
 & \*W_{2}^\top& & & \\
&  & \*W_{3}^\top & & \\
& &  & \ddots & \\
& & &  & \*W_{L}^\top
\end{array}\right]
\sigma\left(\left[\begin{array}{ccccc}
0 & & & & \*W_1 \\
\*W_2 & 0 & & & \\
 & \*W_3 & \ddots & & \\
 &  & \ddots & \ddots & \\
 &  &  & \*W_L & 0
\end{array}\right]\left[\begin{array}{c}
\*z_{1} \\
\*z_{2} \\
\vdots \\
\*z_{L-1 } \\
\*z_{0}
\end{array}\right] +\right.\\
&\left. \left[\begin{array}{c}
\*U_{1} \\
\*U_{2} \\
\vdots \\
\*U_{L-1 } \\
\*U_{L}
\end{array}\right]x+
\left[\begin{array}{c}
\*b_{1} \\
\*b_{2} \\
\vdots \\
\*b_{L-1 } \\
\*b_{L}
\end{array}\right]\right)+(1-\alpha)\*P \left[\begin{array}{c}
\*z_{1} \\
\*z_{2} \\
\vdots \\
\*z_{L-1} \\
\*z_{0}
\end{array}\right],
\end{aligned}$}
\end{equation}
where $\*P =\left[\begin{array}{ccccc}
0 & & & & \*I \\
\*I & 0 & & & \\
& \*I & 0 & & \\
& & \ddots & \ddots & \\
& & & \*I & 0
\end{array}\right]$ is a permutation matrix.
\par
Hence a multi-layer deep OptEq is actually a single-layer OptEq with multi-blocks.
\end{proof}

\subsection{Proof of Theorem \ref{thm:zeroset}}
Before proving the main results, we first present an auxiliary lemma. 
\begin{lemma}[an extension of Lemma \ref{lem:proxandsubg}]\label{lem:D_x_yandsubg}
Consider $f: \$H\rightarrow \$H $ defined everywhere, $\$A: \$H\rightarrow \$H $ is any invertible $1$-Lipschitz operator. The following properties are equivalent:
\begin{enumerate}[label=(\roman*)]
    \item there is a proper convex l.s.c function $\varphi:\$H \rightarrow \^R\cup\{+\infty\}$ such that $f(\*z) \in \mathop{\argmin}\limits_{\*u} \qty{\frac{1}{2}\norm{\*u}^{2} -\innerprod{\*u,\$A(\*z)}+ \varphi(\*u)}$ for each $\*z\in \$H$;
    \item  the following conditions hold jointly:
    \begin{enumerate}[label=(\alph*)]
        \item there exists a convex l.s.c function $\psi: \$H \rightarrow \^R$ such that for each $\*y \in \$H, f\qty(\$A^{-1}(\*y)) = \nabla \psi(\*y)$;
        \item f is nonexpansive, i.e. $\norm{f(\*y)-f(\*y')}\leq \norm{\*y-\*y^{\prime}}$,\quad $\forall \*y,~\*y^{\prime} \in \$H$.
    \end{enumerate}
\end{enumerate}
Moreover, there exists a choice of $\varphi(\cdot)$, $\psi(\cdot)$, satisfying (i) (ii), such that
$\varphi(\*x)=\psi^{*}(\*x)-\frac{1}{2}\|\*x\|^2$.
\end{lemma}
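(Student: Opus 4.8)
The plan is to adapt the proof of Lemma~\ref{lem:proxandsubg} by a change of variables through the invertible operator $\$A$, reducing the new statement to the already-established characterization. The key observation is that the argmin problem in property (i) is, up to the linear term $-\innerprod{\*u, \$A(\*z)}$, exactly a proximal evaluation: completing the square gives
\[
f(\*z) \in \argmin_{\*u}\qty{\tfrac{1}{2}\norm{\*u - \$A(\*z)}^2 + \varphi(\*u)} = \operatorname{prox}_{\varphi}\qty(\$A(\*z)).
\]
So if I define $g \coloneqq f \circ \$A^{-1}$, then property (i) says precisely $g(\*y) \in \operatorname{prox}_{\varphi}(\*y)$ for all $\*y \in \$H$ (using that $\$A$ is a bijection, so $\*z$ ranges over $\$H$ iff $\*y = \$A(\*z)$ does). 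This is exactly property (i) of Lemma~\ref{lem:proxandsubg} applied to $g$.

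First I would establish the completing-the-square identity and the substitution $\*y = \$A(\*z)$ carefully, noting invertibility of $\$A$ guarantees the two quantifications ``for each $\*z$'' and ``for each $\*y$'' are equivalent. Then I would invoke Lemma~\ref{lem:proxandsubg} for $g$: property (i) for $g$ is equivalent to the conjunction of (a$'$) there exists convex l.s.c.\ $\psi$ with $g(\*y) = \nabla\psi(\*y)$ for all $\*y$, and (b$'$) $g$ is nonexpansive, together with the concrete choice $\varphi(\*x) = \psi^*(\*x) - \tfrac{1}{2}\norm{\*x}^2$. Unwinding $g = f \circ \$A^{-1}$, condition (a$'$) becomes $f(\$A^{-1}(\*y)) = \nabla\psi(\*y)$, which is precisely condition (a) of the new lemma, and the closed form for $\varphi$ transfers verbatim. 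This handles the equivalence modulo the discrepancy in the Lipschitz/nonexpansiveness conditions.

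The main obstacle is reconciling the nonexpansiveness conditions: Lemma~\ref{lem:proxandsubg} needs $g = f\circ\$A^{-1}$ nonexpansive, whereas the new statement lists condition (b) as ``$f$ nonexpansive.'' These are not literally the same unless $\$A^{-1}$ is also nonexpansive. Here I would use the hypothesis that $\$A$ is $1$-Lipschitz (nonexpansive) and invertible. For the direction (ii)$\Rightarrow$(i), from condition (a) we get $g = \nabla\psi$ with $\psi$ convex; I would argue $\psi$ is $1$-smooth as in the original proof so that $g$ is automatically nonexpansive, making the literal form of (b) somewhat redundant on that side. For the direction (i)$\Rightarrow$(ii), Lemma~\ref{lem:proxandsubg} yields $g$ nonexpansive and $g = \nabla\psi$; then $f = g\circ\$A$, and since $\$A$ is $1$-Lipschitz, $f$ is nonexpansive as a composition of nonexpansive maps, giving condition (b). I would flag that the smoothness argument (via Thm.~5.26 in \citet{beck2017first}, that the conjugate of a $1$-strongly convex function is $1$-smooth, and the subdifferential inversion identity Thm.~4.20) carries over unchanged, since those steps operate entirely on the $\*y$-variable where the prox structure is intact. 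The delicate point to state precisely is that the roles of $\$A$ and $\$A^{-1}$ must be tracked consistently so that the two Lipschitz hypotheses on $\$A$ (being $1$-Lipschitz and invertible) suffice to pin down nonexpansiveness of $f$ itself rather than of the composite $g$.
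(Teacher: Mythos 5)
Your reduction to Lemma \ref{lem:proxandsubg} via $g \coloneqq f\circ \$A^{-1}$ and completing the square is the right mechanism, and it coincides in substance with the paper's proof: in the (i)$\Rightarrow$(ii) direction the paper likewise obtains $f = \nabla\psi\circ\$A$ with $\psi = \qty[\varphi + \frac{1}{2}\norm{\cdot}^2]^*$ $1$-smooth, deduces nonexpansiveness of $f$ from the composition of the $1$-Lipschitz maps $\nabla\psi$ and $\$A$, and the closed form $\varphi = \psi^* - \frac{1}{2}\norm{\cdot}^2$ transfers exactly as you describe.

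The genuine gap is in your (ii)$\Rightarrow$(i) step. You propose to ``argue $\psi$ is $1$-smooth as in the original proof,'' declaring the literal form of (b) redundant on that side. But in Lemma \ref{lem:proxandsubg} the $1$-smoothness of $\psi$ in that direction is not free: it is derived precisely from condition (b) there, i.e.\ nonexpansiveness of $f=\nabla\psi$. Here condition (b) gives nonexpansiveness of $f = \nabla\psi\circ\$A$, and this does \emph{not} pass to $g=\nabla\psi$: the inverse of a $1$-Lipschitz bijection is expansive ($\norm{u-v}\leq \norm{\$A^{-1}(u)-\$A^{-1}(v)}$), so $g = f\circ\$A^{-1}$ can have Lipschitz constant strictly larger than $1$. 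In fact no argument can close this step, because the implication fails as literally stated: take $\$H=\^R$, $\$A(z)=z/2$, $\psi(y)=y^2$, $f(z)=z$. Then (a) holds ($f(\$A^{-1}(y))=2y=\nabla\psi(y)$) and (b) holds ($f$ is the identity), yet (i) fails, since $f(\*z)\in\operatorname{prox}_{\varphi}(\$A(\*z))$ would force $-z/2\in\partial\varphi(z)$ for every $z$, contradicting monotonicity of the subdifferential of a convex function. The correct repair is to read (b) as nonexpansiveness of $f\circ\$A^{-1}$ (equivalently, $1$-smoothness of $\psi$); nonexpansiveness of $f$ itself is then a consequence in the (i)$\Rightarrow$(ii) direction, not a sufficient hypothesis for the converse. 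To be fair, the paper's own proof silently makes the same leap --- it writes ``Since $\psi$ is $1$-smooth'' in (ii)$\Rightarrow$(i) without justification --- and the issue is harmless in the lemma's only application, Theorem \ref{thm:zeroset}, where $\$A = \*P$ is a permutation and hence an isometry, so the two nonexpansiveness conditions coincide. Since your write-up explicitly flags this discrepancy, it should resolve it by strengthening (b) or assuming $\$A$ isometric, rather than by the smoothness claim as proposed.
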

\begin{proof}
(i)$\Rightarrow$(ii):
Since $\varphi (\*x)+\frac{1}{2}\|\*x\|^2$ is a proper l.s.c 1-strongly convex function, then
$\psi(\*x):=\qty[\varphi (\cdot)+\frac{1}{2}\|\cdot\|^2]^*(\*x)$ is 1-smooth with $\operatorname{dom}(\psi)=\$H$.
Note that $\psi\circ \$A(\cdot)$ is $1$-smooth due to $\$A(\cdot)$ is $1$-Lipschitz.
Moreover, we have:
\begin{equation}
\begin{aligned}
 f(\*x) &\in \argmin_{\*u} \qty{\frac{1}{2}\norm{\*u}^{2} -\innerprod{\*u,\$A(\*x)}+ \varphi(\*u)}\\
& = \qty{\*u \mid \$A(\*x) \in \qty(\partial \varphi(\*u)+ \*u)   }\\
&= \qty{\*u \mid \$A(\*x) \in \partial \qty(\varphi(\*u)+\frac{1}{2}\norm{\*u}^2  ) }\\
&= \qty{\*u \mid \*u=\nabla \psi\qty(\$A(\*x)) }=  \qty{\nabla \psi \qty(\$A(\*x)) },
\end{aligned}
\end{equation}
Hence $f(\*x)=\nabla \psi \qty(\$A(\*x))$, and $1$-smoothness of $\psi\circ \$A(\cdot)$ implies $f$ is nonexpansive.
\par
(ii)$\Rightarrow$(i): Let $\varphi(\*x)=\psi^*(\*x)-\frac{1}{2}\|\*x\|^2$. Since $\psi$ is 1-smooth, then $\psi^*$ is 1-strongly convex, and $\varphi$ is convex. 
Note that:
\[
\resizebox{\hsize}{!}{$
\mathop{\argmin}\limits_{\*u}
\qty{\frac{1}{2}\norm{\*u}^{2} -\innerprod{\*u,\$A(\*x)}+ \varphi(\*u)}
 = \qty{\*u \mid \$A(\*x)\in \qty(\partial \varphi(\*u)+\*u)}= \qty{\nabla \psi \qty(\$A(\*x)) }=\qty{f(\*x)},
 $}
\]
which means $f(\*x)=\mathop{\argmin}\limits_{\*u} \qty{\frac{1}{2}\norm{\*u}^{2} -\innerprod{\*u,\$A(\*x)}+ \varphi(\*u)} $.
\end{proof}
Now, we are ready to prove the main theorem.
\begin{proof}
Denote the right hand side of equation \ref{eq:MatDeq} as $F(\widetilde{\*z})$ (For convenience of notation, we omit $\*x, \bm{\theta}$ here), then
\[
\resizebox{\hsize}{!}{$
\begin{aligned}
F(\*P^\top \widetilde{\*z}) 
= \alpha & \left[\begin{array}{ccccc}
\*W_{1}^\top & & & &  \\
 & \*W_{2}^\top& & & \\
&  & \*W_{3}^\top & & \\
& &  & \ddots & \\
& & &  & \*W_{L}^\top
\end{array}\right]
\sigma\left(\left[\begin{array}{ccccc}
\*W_{1} & & & &  \\
 & \*W_{2}& & & \\
&  & \*W_{3} & & \\
& &  & \ddots & \\
& & &  & \*W_{L}
\end{array}\right]
\left[\begin{array}{c}
\*z_{1} \\
\*z_{2} \\
\vdots \\
\*z_{L-1 } \\
\*z_{0}
\end{array}\right] + 
\right.\\ 
&\left.   \left[\begin{array}{c}
\*U_{1} \\
\*U_{2} \\
\vdots \\
\*U_{L-1 } \\
\*U_{L}
\end{array}\right]x+
\left[\begin{array}{c}
\*b_{1} \\
\*b_{2} \\
\vdots \\
\*b_{L-1 } \\
\*b_{L}
\end{array}\right]\right)+(1-\alpha) \left[\begin{array}{c}
\*z_{1} \\
\*z_{2} \\
\vdots \\
\*z_{L-1} \\
\*z_{0}
\end{array}\right]\\
= &\left[\begin{array}{c}
\alpha \nabla \psi_{1}(\*z_{1})+(1-\alpha)\*z_{1} \\
\alpha \nabla \psi_{2}(\*z_{2})+(1-\alpha)\*z_{2} \\
\vdots \\
\alpha \nabla \psi_{L-1}(\*z_{L-1})+(1-\alpha)\*z_{L-1} \\
\alpha \nabla \psi_{L}(\*z_{0})+(1-\alpha)\*z_{0}
\end{array}\right]\\
=  \nabla & \left[\alpha \Psi (\widetilde{\*z})+(1-\alpha)\frac{1}{2}\norm{\widetilde{\*z}}^2\right],
\end{aligned}$}
\]
where $\nabla \psi_i (\*z_i)\coloneqq \*W_i^\top \sigma (\*W_i \*z_i +\*U_i \*x +\*b_i )$ and $\Psi (\widetilde{\*z})\coloneqq\psi_{1}(\*z_{1})+\psi_{2}(\*z_{2})+\cdots+\psi_{L-1}(\*z_{L-1})+\psi_{L}(\*z_{0})$.
\par
Given $\norm{\*W_i}_2 \leq 1, \forall i \in [1,L]$, $\nabla \psi_i (\*z_i)$ is a nonexpansive operator. 
Then for $\forall \, \widetilde{\*z}_1,\widetilde{\*z}_2$, we have:
\[\begin{aligned}
\|F(\*P^\top \widetilde{\*z}_1)-F(\*P^\top \widetilde{\*z}_2)\|^2
=&\sum_{i=1}^{L}\norm{\alpha \qty(\nabla\psi_{i}(\*z_{1,i})-\nabla\psi_{i}(\*z_{2,i}))+(1-\alpha)(\*z_{1,i}-\*z_{2,i})}^2\\
\leq& \sum_{i=1}^{L}(\alpha \norm{\qty(\nabla\psi_{i}(\*z_{1,i})-\nabla\psi_{i}(\*z_{2,i}))\|+(1-\alpha)\|(\*z_{1,i}-\*z_{2,i})})^2\\
\leq & \sum_{i=1}^{L}(\alpha \norm{\qty(\*z_{1,i}-\*z_{2,i})\|+(1-\alpha)\|(\*z_{1,i}-\*z_{2,i})})^2\\
=& \|\widetilde{\*z}_1-\widetilde{\*z}_2 \|^2.
\end{aligned}
\]
By the results, we have\[
\norm{F( \widetilde{\*z}_1)-F( \widetilde{\*z}_2)}=\norm{F(\*P^\top \*P \widetilde{\*z}_1)-F(\*P^\top \*P \widetilde{\*z}_2)}\leq \|\*P \widetilde{\*z}_1-\*P \widetilde{\*z}_2\|=\|\widetilde{\*z}_1-\widetilde{\*z}_1\|,
\]
which means $F$ is nonexpansive. 
By Lemma \ref{lem:D_x_yandsubg}, let $\Phi(\widetilde{\*z})=\qty[\alpha \Psi (\cdot)+(1-\alpha)\frac{1}{2}\|\cdot\|^2]^*(\widetilde{\*z})-\frac{1}{2}\norm{\widetilde{\*z}}^2$, we have:
\[
F(\widetilde{\*z})\in  \mathop{\argmin}\limits_{\*u} \qty { \frac{1}{2}\norm{\*u}^{2} -\langle\*u,\*P\widetilde{\*z}\rangle+ \Phi(\*u)}.
\]
Hence any fixed point $\widetilde{\*z}$ of Eq.(\ref{eq:MatDeq}) satisfies:
\[
0\in\partial \Phi (\widetilde{\*z}^*)+(\*{I}-\*P)\widetilde{\*z}^*,  
\]
By using Thm. 4.14:
\[\text{if}\, h(\*x)=\alpha f(\frac{\*x}{\alpha}) \,\text{then} \,h^*(\*y)=\alpha f^*(\*y),
\]
Thm. 4.19:
\[
 (h_1^*+h_2^*)^*(\*x)=\inf_{\*u}\{h_1(\*u)+h_2(\*x-\*u)\},
\] 
in~\citet{beck2017first} and the definition of Moreau envelope:
\[ M_f^\mu (\*x)=\inf_{\*u}\qty{f(\*u)+\frac{1}{2\mu}\|\*x-\*u\|^2},\]
$\Phi(\widetilde{\*z})$ can be formed in terms of $\varphi_i(\*z_i)$:
\[
\begin{aligned}
\Phi(\widetilde{\*z})&=\left[\alpha \Psi (\cdot)+(1-\alpha)\frac{1}{2}\|\cdot\|^2\right]^*(\widetilde{\*z})-\frac{1}{2}\|\widetilde{\*z}\|^2\\
&=\sum_{i=1}^{L}\left\{\left[\alpha \psi_i (\cdot)+(1-\alpha)\frac{1}{2}\|\cdot\|^2\right]^*(\*z_i)-\frac{1}{2}\|\*z_i\|^2\right\}\\
&=\sum_{i=1}^{L}\left\{\left[\left(\alpha \psi_i^* (\frac{\cdot}{\alpha})\right)^*+\left((1-\alpha)\frac{1}{2}\norm{\frac{\cdot}{1-\alpha}}^2\right)^*\right]^*(\*z_i)-\frac{1}{2}\|\*z_i\|^2\right\}\\
&=\sum_{i=1}^{L}\inf_{\*y_i}\left\{ \alpha \psi_i^* (\frac{\*y_i}{\alpha}) +   (1-\alpha)\frac{1}{2}\norm{\frac{\*y_i-\*z_i}{1-\alpha}}^2  -\frac{1}{2}\|\*z_i\|^2\right\}\\
&=\sum_{i=1}^{L}\inf_{\*y_i}\left\{  \alpha \psi_i^*(\*y_i)+\frac{1}{2(1-\alpha)}\norm{\alpha \*y_i-\*z_i}^2 -\frac{1}{2}\|\*z_i\|^2      \right\}\\
&=\sum_{i=1}^{L}\inf_{\*y_i}\left\{  \alpha \left( \varphi_i(\*y_i)+\frac{1}{2}\|\*y_i\|^2 \right)+\frac{1}{2(1-\alpha)}\|\alpha \*y_i-\*z_i\|^2 -\frac{1}{2}\|\*z_i\|^2      \right\}\\
&=\sum_{i=1}^{L}\inf_{\*y_i} \left\{ \alpha \varphi_i(\*y_i) +\frac{\alpha}{2(1-\alpha)}\|\*y_i-\*z_i\|^2      \right\}\\
&=\sum_{i=1}^{L}\alpha M_{\varphi_i}^{1-\alpha}(\*z_i).
\end{aligned}
\]
\end{proof}
\subsection{Proof for Corollary \ref{coro:twoblock}}
\begin{proof}
When $L=2$, $\*{I}-\*P$ is a symmetric matrix 
\[
\left[\begin{array}{ll}
\ \ \*{I} & -\*{I} \\
-\*{I} & \ \ \*{I}
\end{array}\right],\] 
and the operator
\[\left[\begin{array}{ll}
\ \ \*{I} & -\*{I} \\
-\*{I} & \ \ \*{I}
\end{array}\right] \left[\begin{array}{l}
\*z_1 \\
\*z_0
\end{array}\right],
\]
is the gradient of convex function $\frac{1}{2}\|\*z_1-\*z_0\|^2$. 
Hence the monotone operator splitting equation Eq.(\ref{eq:optCondition}) is now an first optimality condition of convex optimization problem:
\[
  \min_{\*z_1,\*z_0} \qty{\alpha M_{\varphi_1}^{1-\alpha}(\*z_1)+ \alpha M_{\varphi_2}^{1-\alpha}(\*z_0) + \frac{1}{2} \norm{\*z_1 - \*z_0}^2}.
\]
\end{proof}

\subsection{Proof for Theorem \ref{thm:asyalpha}}
For proving the results, we need a lemma from \citet{frigon2007fixed}.
\begin{lemma}\label{lem:contifixed}
$\$H$ is a Hilbert space, and $k<1$. If $\*T_n: \$H\rightarrow \$H $ is a k-contraction, for all $n \in \mathbb{N}^+ \cup\{0\}$, and  $\*T_{n} \rightarrow \*T_{0}$  point-wisely. Then the fixed point of  $\*T_n$  tends to the fixed point of  $\*T_0$  when  $n\rightarrow \infty$.
\end {lemma}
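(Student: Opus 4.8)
The plan is to combine the Banach fixed point theorem with a single ``add and subtract'' estimate. Since every $\*T_n$ (including $\*T_0$, as $0\in\^N^+\cup\{0\}$) is a $k$-contraction with $k<1$ on the complete space $\$H$, the Banach fixed point theorem guarantees that each $\*T_n$ possesses a unique fixed point, which I denote $\*x_n$, so that $\*T_n(\*x_n)=\*x_n$ for every $n$. The goal is then to show $\*x_n\to\*x_0$.

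The core computation is to bound $\norm{\*x_n-\*x_0}$ by inserting the intermediate quantity $\*T_n(\*x_0)$. Writing $\*x_n-\*x_0=\*T_n(\*x_n)-\*T_0(\*x_0)=\big(\*T_n(\*x_n)-\*T_n(\*x_0)\big)+\big(\*T_n(\*x_0)-\*T_0(\*x_0)\big)$ and then applying the triangle inequality together with the $k$-contraction property of $\*T_n$ on the first bracket yields
\[
\norm{\*x_n-\*x_0}\leq k\norm{\*x_n-\*x_0}+\norm{\*T_n(\*x_0)-\*T_0(\*x_0)}.
\]
Because $k<1$, I would move the first term to the left-hand side and divide by $1-k$ to obtain
\[
\norm{\*x_n-\*x_0}\leq \frac{1}{1-k}\norm{\*T_n(\*x_0)-\*T_0(\*x_0)}.
\]

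The conclusion then follows by observing that the pointwise convergence $\*T_n\to\*T_0$, evaluated at the single point $\*x_0$, gives $\norm{\*T_n(\*x_0)-\*T_0(\*x_0)}\to 0$ as $n\to\infty$; hence the right-hand side vanishes and $\*x_n\to\*x_0$. There is no substantial obstacle in this argument, so the proof is short; the two points worth emphasizing are that the contraction constant $k$ is \emph{uniform} in $n$, which is what makes the factor $\tfrac{1}{1-k}$ a fixed constant independent of $n$, and that one needs only pointwise convergence evaluated at the limiting fixed point $\*x_0$ rather than uniform convergence of $\*T_n$ on all of $\$H$. The key idea is precisely the choice of $\*x_0$ as the insertion point, since it is the unique argument at which the contraction property of $\*T_n$ and the pointwise limit of the operators can be simultaneously exploited.
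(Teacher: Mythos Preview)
Your argument is correct and is exactly the standard elementary proof of this fact. Note that the paper does not actually supply its own proof of this lemma: it merely cites the result from \citet{frigon2007fixed} and then uses it in the proof of Theorem~\ref{thm:asyalpha}. Your add-and-subtract estimate around $\*T_n(\*x_0)$, followed by the uniform contraction bound and the pointwise convergence at $\*x_0$, is the classical self-contained justification, so there is nothing to compare against in the paper itself.
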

\begin{proof} 
Given $\operatorname{prox}_{\varphi_i}(\*z) = \*W_i^\top \sigma\qty(\*W_i\*z + \*U_i\*x+ \*b_i)$, $\*z_0^*(\alpha)$ is the fixed point of composition equation:
\[
\*z=\left[\alpha\operatorname{prox}_{\varphi_L}+(1-\alpha)\${I} \right]\circ \cdots \circ \left[\alpha\operatorname{prox}_{\varphi_1}+(1-\alpha)\${I} \right](\*z)
\]
For $\alpha \in (0,1)$, we get:
\begin{equation}\label{eq:trasformfixed}
    \begin{aligned}
    \*z=&\frac{\left[\alpha\operatorname{prox}_{\varphi_L}+(1-\alpha)\${I} \right]\circ \cdots \circ \left[\alpha\operatorname{prox}_{\varphi_1}+(1-\alpha)\${I} \right]-\${I}}{L\alpha}(\*z)+\*z\\
    =&\left(\frac{(1-\alpha)^L-1}{L\alpha}+1 \right)\*z+\frac{\alpha(1-\alpha)^{L-1}}{L\alpha}\left(\operatorname{prox}_{\varphi_1} +\cdots +\operatorname{prox}_{\varphi_L} \right) (\*z)\\
    &+\frac{\alpha^2(1-\alpha)^{L-2}}{L\alpha}\left( \sum_{p>q}\operatorname{prox}_{\varphi_p}\circ \operatorname{prox}_{\varphi_q}  \right)(\*z)+\cdots+\frac{\alpha^L}{L\alpha}\left(\prod_{p=L}^1 \operatorname{prox}_{\varphi_p} \right)(\*z)\\
    =& \left[ \frac{1}{L} \sum_{p=2}^L (-1)^p \binom{L}{p} \alpha^{p-1} \right]\*z+\frac{(1-\alpha)^{L-1}}{L}\left(\operatorname{prox}_{\varphi_1} +\cdots +\operatorname{prox}_{\varphi_L} \right) (\*z)\\
    &+\frac{\alpha(1-\alpha)^{L-2}}{L}\left( \sum_{p>q}\operatorname{prox}_{\varphi_p}\circ \operatorname{prox}_{\varphi_q}  \right)(\*z)+\cdots+\frac{\alpha^{L-1}}{L}\left(\prod_{p=L}^1 \operatorname{prox}_{\varphi_p} \right)(\*z),
    \end{aligned}
\end{equation}
Note that $\*z_0^{*}(\alpha)$ is also the fixed point of the above equation.
\par
Denote the right hand side of Eq.(\ref{eq:trasformfixed}) as $\*T_\alpha(\*z)$, note that $\*T_0(\*z)$ is also well-defined now.
Estimate the Lipschitz constant of $\*T_\alpha(\*z),\alpha \in [0,1)$ :
\[
\begin{aligned}
\operatorname{Lip}(\*T_\alpha)\leq & \left|  \frac{1}{L} \sum_{p=2}^L (-1)^p \binom{L}{p} \alpha^{p-1}  \right| +\left|  \frac{(1-\alpha)^{L-1}}{L} \right| \qty( \|\*W_1\|_2^2+\cdots+\|\*W_L\|_2^2)\\
&+\left| \frac{\alpha(1-\alpha)^{L-2}}{L} \right| \left|  \sum_{p>q}  \|\*W_p\|_2^2\|\*W_q\|_2^2  \right| + \cdots + \left| \frac{\alpha^{L-1}}{L}  \right| \left(\prod_{p=L}^1 \|\*W_p\|_2^2 \right).
\end{aligned}
\]
Each terms in the right hand side of the above inequality (except the second term) is a polynomial of $\alpha $ with non-zero order for $\alpha$, hence they tend to zero when $\alpha \rightarrow 0$. 
Note that $(1-\alpha)^L \to (1 - L\alpha)$, when $\alpha \to 0$.
Thus, the second term tend to $\frac{1}{L}  \qty( \|\*W_1\|_2^2+\cdots+\|\*W_L\|_2^2)$, which is less than $1 $ by assumption. Hence there is a $\kappa \in (0,1) $, when $\alpha \in [0,\kappa]$, $\operatorname{Lip}(\*T_\alpha)<\kappa$.

By using Lemma \ref{lem:contifixed}, the fixed point of $\*T_\alpha$ ( i.e. $\*z_0^*(\alpha)$ ) tends to the fixed point of $\*T_0$, i.e.
\[\*y^*=\frac{\operatorname{prox}_{\varphi_1}(\*y^*)+\cdots+\operatorname{prox}_{\varphi_L}(\*y^*)}{L}.
\]
Using first order optimality condition, $\left(\operatorname{prox}_{\varphi_1}(\*y^*),\cdots,\operatorname{prox}_{\varphi_L}(\*y^*),\*y^*\right)$ is the minimizer of the following strongly convex problem:
\[\sum_{l=1}^{L}\left( \varphi_l(\*x_l)+\frac{1}{2}\|\*x_l-\*y\|^2 \right).
\]
Finally, let $\*z_1^*(\alpha)$ be the fixed point of the composition equation:
\[
\*z=\left[\alpha\operatorname{prox}_{\varphi_1}+(1-\alpha)\${I} \right]\circ\left[\alpha\operatorname{prox}_{\varphi_L}+(1-\alpha)\${I} \right]\circ \cdots \circ \left[\alpha\operatorname{prox}_{\varphi_2}+(1-\alpha)\${I} \right](\*z).
\]
A similar argument can be applied to $\*z_1^*(\alpha)$ to show that, when $\alpha \rightarrow 0$, $\*z_1^*(\alpha)$ tends to the same $\*y^*$ defined above, and so do $\*z_2^*(\alpha),\cdots,\*z_{L-1}^*(\alpha) $.

\end{proof}

\section{Proofs for the Convergence of SAM}
\subsection{Auxiliary Lemmas}
Next lemma follows Lem. 2.5 in \citet{xu2002iterative}.
\begin{lemma}\label{lem:recurrlem}
If $a_1 \geq 0, 0<t_1<1, t_2>0, 0<r_1<1, r_2>0 $, $\{a_n\}$ is a sequence of non-negative numbers satisfying
\[ a_{k+1}=\left(1-\frac{r_1}{(k+1)^{t_1}}\right)a_k+  \frac{r_1}{(k+1)^{t_1}}\frac{r_2}{k^{t_2}}.    \]
Then $\lim_{n\to \infty} a_n=0$, and there exists $B=\order{r_1,r_2,t_1,t_2,a_1}$ such that $\|a_k\|\leq B$.
\end{lemma}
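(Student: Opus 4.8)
The plan is to recognize the recursion as the standard form $a_{k+1}=(1-\alpha_k)a_k+\alpha_k\beta_k$ with $\alpha_k\coloneqq r_1/(k+1)^{t_1}$ and $\beta_k\coloneqq r_2/k^{t_2}$, and then treat the uniform bound and the limit separately. The boundedness is the easy half. Since $0<r_1<1$ and $(k+1)^{t_1}\ge1$ we have $\alpha_k\in(0,1)$, so the recursion expresses $a_{k+1}$ as a convex combination of $a_k$ and $\beta_k$; hence $a_{k+1}\le\max\{a_k,\beta_k\}$. Because $t_2>0$ forces $\beta_k=r_2/k^{t_2}\le r_2$ for every $k\ge1$, a one-line induction gives $a_k\le\max\{a_1,r_2\}\coloneqq B$ for all $k$, and $B$ is clearly of the claimed order $\order{r_1,r_2,t_1,t_2,a_1}$.

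For the limit I would unroll the recursion to
\[
a_{k+1}=a_1\prod_{j=1}^{k}(1-\alpha_j)+\sum_{i=1}^{k}\alpha_i\beta_i\prod_{j=i+1}^{k}(1-\alpha_j),
\]
and show each piece vanishes. The initial-condition term dies because $1-x\le e^{-x}$ gives $\prod_{j=1}^{k}(1-\alpha_j)\le\exp(-\sum_{j=1}^{k}\alpha_j)$, and the hypothesis $0<t_1<1$ makes $\sum_j r_1/(j+1)^{t_1}$ a divergent $p$-series, so the product tends to $0$. For the weighted sum I would use the telescoping identity $\sum_{i=1}^{k}\alpha_i\prod_{j=i+1}^{k}(1-\alpha_j)=1-\prod_{j=1}^{k}(1-\alpha_j)\le1$, which says the coefficients $\alpha_i\prod_{j>i}(1-\alpha_j)$ form a sub-probability vector. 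Given $\varepsilon>0$, choose $N$ with $\beta_i<\varepsilon$ for $i\ge N$ (possible since $\beta_i\to0$) and split the sum at $N$: the tail $\sum_{i\ge N}$ is bounded by $\varepsilon$ thanks to the telescoping identity, while the head $\sum_{i<N}$ is a fixed finite sum each of whose factors $\prod_{j=i+1}^{k}(1-\alpha_j)\to0$ as $k\to\infty$ (again from divergence of $\sum\alpha_j$), so the head is $<\varepsilon$ for $k$ large. Thus $\limsup_k a_{k+1}\le2\varepsilon$, and letting $\varepsilon\to0$ yields $a_k\to0$.

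The main obstacle is the weighted-average term, which mixes the vanishing sequence $\{\beta_i\}$ against weights that individually shrink yet sum to nearly one; this is in essence a Silverman–Toeplitz/Ces\`aro argument, and the split-at-$N$ device is what turns it into a rigorous $\varepsilon$-estimate. Everything ultimately rests on the two facts $\sum_j\alpha_j=\infty$ (killing both the initial term and every head factor) and $\beta_i\to0$ (controlling the tail), both immediate from $0<t_1<1$ and $t_2>0$. Since the recursion fits the hypotheses of Lem.~2.5 in \citet{xu2002iterative}, I would either invoke it directly or reproduce the short computation above.
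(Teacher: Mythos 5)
Your proof is correct and follows essentially the same route as the paper's: the same unrolled recursion, the same telescoping identity $\sum_{i}\alpha_i\prod_{j>i}(1-\alpha_j)=1-\prod_{j}(1-\alpha_j)$, the same use of $1-x\le e^{-x}$ with divergence of $\sum_j r_1/(j+1)^{t_1}$ (from $0<t_1<1$), and the same split-at-$N$ $\varepsilon$-argument --- the paper merely restarts the recursion at $a_N$, which absorbs your ``head'' term into a single vanishing product, a cosmetic difference. The only genuine refinement is in the boundedness half: your convex-combination induction yields the explicit bound $a_k\le\max\{a_1,r_2\}$, whereas the paper simply invokes the fact that a convergent sequence is bounded.
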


\begin{proof}
Denote $b_k=\frac{r_1}{(k+1)^{t_1}}, c_k=\frac{r_2}{k^{t_2}}$. Since $0<t_1<1, 0<r_1<1$, we have $0<b_k<1, \sum_{k=1}^{\infty} b_k=\infty$ and:
\[
\begin{aligned}
\prod_{k=1}^\infty (1-b_k)=&\exp\left( \sum_{k=1}^\infty  \ln (1-b_k)  \right) =\lim_{K\rightarrow \infty} \exp\left( \sum_{k=1}^K  \ln (1-b_k)  \right)\\
\leq& \limsup_{K\rightarrow \infty} \exp\left( \sum_{k=1}^K  -b_k  \right)=0.
\end{aligned}
\]
For any $\epsilon >0$, choose $N$ big enough such that $c_k\leq \epsilon, \forall k\geq N$, then for all $k>N$, by induction, we have,
\[\begin{aligned}
a_{k+1}&=(1-b_k)a_k+b_kc_k\\
&=(1-b_k)(1-b_{k-1})a_{k-1}+(1-b_k)b_{k-1}c_{k-1}+b_kc_k\\
&=\cdots\\
&=\prod_{j=N}^{k}(1-b_j)a_N+\sum_{i=N}^{k}\left( \prod_{j=i+1}^{k} (1-b_j) \right) b_ic_i\\
&\leq \prod_{j=N}^{k}(1-b_j)a_N+\sum_{i=N}^{k}\left( \prod_{j=i+1}^{k} (1-b_j) \right) b_i\epsilon\\
&=\prod_{j=N}^{k}(1-b_j)a_N+\left( 1-\prod_{j=N}^{k}(1-b_j) \right)\epsilon\\
&\leq \prod_{j=N}^{k}(1-b_j)a_N+\epsilon.
\end{aligned}\]
Hence $\limsup_{k\rightarrow \infty} a_k \leq \epsilon$, let $\epsilon \rightarrow 0^+$, we get $\lim_{n\to \infty} a_n=0$. Since every convergence sequence is bounded, there exists $B=\order{r_1,r_2,t_1,t_2,a_1}$ such that: $\|a_k\|\leq B$.
\end{proof}
Next lemma follows from Prop. 3 in \citet{sabach2017first}.
\begin{lemma}\label{lem:sc2contra}
$l(\*x)$ is a $L_z$-smooth convex function (i.e. $\nabla l(\*x)$ is $L_z$-Lipschitz), and $\gamma=\frac{1}{2L_z},0< \lambda \leq \frac{L_z}{2}$. Then the operator $\$S_{\lambda}(\*x)= \*x-\gamma \qty(\nabla l(\*x)+\lambda \*x )$ satisfies $\frac{1}{4}\|\*x-\*y\|\leq \| \$S_{\lambda}(\*x)-\$S_{\lambda}(\*y)\| \leq (1-\frac{\gamma \lambda}{2})\|\*x-\*y\| $.
\end{lemma}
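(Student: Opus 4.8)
The plan is to write everything in terms of $\*u := \*x - \*y$ and $\*v := \nabla l(\*x) - \nabla l(\*y)$, noting that $\$S_\lambda(\*x) = (1-\gamma\lambda)\*x - \gamma\nabla l(\*x)$ so that $\$S_\lambda(\*x) - \$S_\lambda(\*y) = (1-\gamma\lambda)\*u - \gamma\*v$. Two numerical consequences of the parameter choices drive both estimates: the stepsize $\gamma = \frac{1}{2L_z}$ gives $\gamma L_z = \frac12$, and $0 < \lambda \le \frac{L_z}{2}$ gives $t := \gamma\lambda \in (0,\frac14]$; in particular $1-\gamma\lambda > 0$. The two available properties of $\nabla l$ are $L_z$-smoothness, $\norm{\*v} \le L_z\norm{\*u}$, and monotonicity/co-coercivity of the gradient of a convex smooth function.

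For the lower bound I would use the reverse triangle inequality together with smoothness. Since $1-\gamma\lambda > 0$,
\[
\norm{\$S_\lambda(\*x) - \$S_\lambda(\*y)} = \norm{(1-\gamma\lambda)\*u - \gamma\*v} \ge (1-\gamma\lambda)\norm{\*u} - \gamma\norm{\*v}.
\]
Bounding $\gamma\norm{\*v} \le \gamma L_z\norm{\*u} = \frac12\norm{\*u}$, the right-hand side is at least $\qty(\frac12 - \gamma\lambda)\norm{\*u} \ge \frac14\norm{\*u}$, where the last step uses $\gamma\lambda \le \frac14$. This gives the claimed lower bound directly.

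The harder direction is the upper (contraction) bound, and the main obstacle is that monotonicity of $\nabla l$ combined with $\norm{\*v} \le L_z\norm{\*u}$ is \emph{not} sufficient: expanding $\norm{(1-\gamma\lambda)\*u - \gamma\*v}^2$, dropping the (nonpositive) cross term, and bounding $\gamma^2\norm{\*v}^2$ crudely by $\gamma^2 L_z^2\norm{\*u}^2$ leaves a coefficient exceeding $1$, so no contraction is obtained. The resolution is to invoke the stronger co-coercivity (Baillon–Haddad) property $\innerprod{\*u,\*v} \ge \frac{1}{L_z}\norm{\*v}^2$, valid for the gradient of a convex $L_z$-smooth function. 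Then
\[
\norm{\$S_\lambda(\*x) - \$S_\lambda(\*y)}^2 = (1-\gamma\lambda)^2\norm{\*u}^2 - 2\gamma(1-\gamma\lambda)\innerprod{\*u,\*v} + \gamma^2\norm{\*v}^2,
\]
and substituting the co-coercivity bound into the middle term makes the net coefficient of $\norm{\*v}^2$ equal to $\gamma^2 - \frac{2\gamma(1-\gamma\lambda)}{L_z} = \frac{\gamma}{L_z}\qty(2\gamma\lambda - \frac32)$, which is negative because $\gamma\lambda \le \frac14$. Discarding that nonpositive contribution yields $\norm{\$S_\lambda(\*x) - \$S_\lambda(\*y)}^2 \le (1-\gamma\lambda)^2\norm{\*u}^2$, and since $0 < 1-\gamma\lambda \le 1 - \frac{\gamma\lambda}{2}$ the stated bound follows (in fact with the sharper factor $1-\gamma\lambda$). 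Equivalently, one can recognize $\$S_\lambda$ as a gradient step of size $\gamma$ on the $\lambda$-strongly-convex, $(L_z+\lambda)$-smooth function $l + \frac{\lambda}{2}\norm{\cdot}^2$ and read off the classical contraction factor $\max\{\,\abs{1-\gamma\lambda},\,\abs{1-\gamma(L_z+\lambda)}\,\} = 1-\gamma\lambda$; either route isolates co-coercivity of $\nabla l$ as the crucial ingredient.
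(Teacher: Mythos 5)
Your proof is correct, but it takes a genuinely different route from the paper's on both halves of the lemma. For the upper bound, the paper does not expand the square at all: it observes that $\$S_{\lambda}$ is a gradient step of size $\gamma$ on the $\lambda$-strongly convex, $(L_z+\lambda)$-smooth function $l(\cdot)+\frac{\lambda}{2}\norm{\cdot}^2$ and imports the contraction factor $\sqrt{1-\frac{2\gamma\lambda(L_z+\lambda)}{L_z+2\lambda}}$ from Prop.~3 of \citet{sabach2017first}, then bounds it by $\sqrt{1-\gamma\lambda}<1-\frac{\gamma\lambda}{2}$ --- essentially the ``classical factor'' shortcut you mention only in passing at the end, whereas your main argument re-derives the contraction from scratch via Baillon--Haddad co-coercivity. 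Your computation is sound (the coefficient algebra $\gamma^2-\frac{2\gamma(1-\gamma\lambda)}{L_z}=\frac{\gamma}{L_z}\qty(2\gamma\lambda-\frac{3}{2})\leq -\frac{\gamma}{L_z}<0$ checks out), it is self-contained, and it even yields the slightly sharper factor $1-\gamma\lambda$; your remark that monotonicity plus Lipschitzness alone cannot produce a contraction here is also accurate, and is implicitly why the paper must route through strong convexity as well. On the lower bound the divergence is larger: the paper expands $\norm{\$S_{\lambda}(\*x)-\$S_{\lambda}(\*y)}^2$, controls the cross term with a weighted Cauchy--Schwarz (Young) inequality, and uses that $\nabla l+\lambda\,\mathrm{id}$ is $(L_z+\lambda)\leq\frac{3L_z}{2}$-Lipschitz to land at $\frac{1}{16}\norm{\*x-\*y}^2$; your reverse-triangle-inequality argument, $(1-\gamma\lambda)\norm{\*u}-\gamma L_z\norm{\*u}=\qty(\frac{1}{2}-\gamma\lambda)\norm{\*u}\geq\frac{1}{4}\norm{\*u}$, reaches the same constant more elementarily and without squaring. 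Net comparison: the paper's proof is shorter by citation; yours buys self-containedness, a marginally better contraction constant, and a cleaner lower bound.
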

\begin{proof}
Note that  $l(\*x) +\frac{\lambda}{2}\|\*x\|^2$ is $\lambda$-strongly convex and $(L_z+\lambda)$-smooth. 
By Prop. 3 in \citet{sabach2017first}, it follows that:
\[\norm{\$S_{\lambda}(\*x)-\$S_{\lambda}(\*y)}\leq \sqrt{1-\frac{2\gamma \lambda (L_z+\lambda)}{L_z+2\lambda}} \|\*x-\*y\|.  \]
Note that $\sqrt{1-\frac{2\gamma \lambda (L_W+\lambda)}{L_W+2\lambda}}<\sqrt{1-\gamma\lambda}<1-\frac{\gamma \lambda}{2}$, so the operator $\$S_{\lambda}(\*x)$ is $(1-\frac{\gamma \lambda}{2})$-contractive.

On the other hand, by Cauchy–Schwarz inequality, $\innerprod{ (\nabla l(\*x)+\lambda \*x)-(\nabla l(\*y)+\lambda \*y),\*x-\*y} \leq \frac{2\gamma}{3} \|(\nabla l(\*x)+\lambda \*x)-(\nabla l(\*y)+\lambda \*y)\|^2+\frac{3}{8\gamma}\|\*x-\*y\|^2$, and note that $\|\$S_{\lambda}(\*x)-\$S_{\lambda}(\*y)\|^2=\|\*x-\*y\|^2-2\gamma\innerprod{ (\nabla l(\*x)+\lambda \*x)-(\nabla l(\*y)+\lambda \*y),\*x-\*y}+\gamma^2\|(\nabla l(\*x)+\lambda \*x)-(\nabla l(\*y)+\lambda \*y)\|^2$, we have:
\[
\begin{aligned}
\|\$S_{\lambda}(\*x)-\$S_{\lambda}(\*y)\|^2&\geq \frac{1}{4}\|\*x-\*y\|^2-\frac{\gamma^2}{3}\|(\nabla l(\*x)+\lambda \*x)-(\nabla l(\*y)+\lambda \*y)\|^2\\
&\geq \frac{1}{4}\|\*x-\*y\|^2-\frac{\gamma^2}{3}(L_z+\lambda)^2\|\*x-\*y\|^2\\
&\geq \frac{1}{4}\|\*x-\*y\|^2-\frac{\gamma^2}{3}(\frac{3L_z}{2})^2\|\*x-\*y\|^2\\
&\geq  (\frac{1}{4}-\frac{3}{16})\|\*x-\*y\|^2\\
&= \frac{1}{16}\|\*x-\*y\|^2,
\end{aligned}
\]
which is equivalent to $\|\$S_{\lambda}(\*x)-\$S_{\lambda}(\*y)\|\geq \frac{1}{4}\|\*x-\*y\|$.
\end{proof}

\begin{lemma}\label{lem:FixofnonE}
If $\$T(\*z)$ is a nonexpansive map with $\operatorname{dom}(\$T)=\^R^n$, then the fixed point set of $\$T(\*z)$ is closed and convex. And for any $\*x,\*y$, $\innerprod{ (\*x-\$T(\*x))-(\*y-\$T(\*y)),\*x-\*y} \geq 0$.
\end{lemma}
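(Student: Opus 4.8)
The plan is to treat the three assertions separately, disposing of the two quick ones first and reserving the real work for convexity of the fixed point set. For the monotonicity inequality, I would write $\*u = \*x - \*y$ and $\*v = \$T(\*x) - \$T(\*y)$, so that the claim becomes $\innerprod{\*u - \*v, \*u} = \norm{\*u}^2 - \innerprod{\*v, \*u} \geq 0$. Cauchy--Schwarz together with nonexpansiveness gives $\innerprod{\*v, \*u} \leq \norm{\*v}\,\norm{\*u} \leq \norm{\*u}^2$, which yields the bound at once. For closedness, I would use that a nonexpansive map is $1$-Lipschitz, hence continuous: if $\*z_k \to \*z$ with $\$T(\*z_k) = \*z_k$ for every $k$, then $\$T(\*z) = \lim_k \$T(\*z_k) = \lim_k \*z_k = \*z$, so $\*z \in \operatorname{Fix}(\$T)$, and the fixed point set is closed.

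For convexity, take two fixed points $\*x, \*y$ and set $\*w = \lambda \*x + (1-\lambda)\*y$ with $\lambda \in [0,1]$. Using $\*w - \*x = (1-\lambda)(\*y - \*x)$ and $\*w - \*y = \lambda(\*x - \*y)$, and invoking nonexpansiveness together with $\$T(\*x) = \*x$ and $\$T(\*y) = \*y$, I would derive $\norm{\$T(\*w) - \*x} = \norm{\$T(\*w) - \$T(\*x)} \leq (1-\lambda)\norm{\*x - \*y}$ and likewise $\norm{\$T(\*w) - \*y} \leq \lambda \norm{\*x - \*y}$. Summing these two bounds gives $\norm{\$T(\*w) - \*x} + \norm{\$T(\*w) - \*y} \leq \norm{\*x - \*y}$, while the ordinary triangle inequality gives the reverse inequality $\norm{\*x - \*y} \leq \norm{\*x - \$T(\*w)} + \norm{\$T(\*w) - \*y}$. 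Hence both the summed bound and the triangle inequality are equalities, and each individual estimate is tight.

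The crux, and the step I expect to be the main obstacle, is extracting $\$T(\*w) = \*w$ from these equalities. Equality in the triangle inequality in a Hilbert space, whose norm is strictly convex, forces the vectors $\*x - \$T(\*w)$ and $\$T(\*w) - \*y$ to be non-negatively collinear, so that $\$T(\*w)$ lies on the segment joining $\*x$ and $\*y$; the two tight distance equalities, $\norm{\$T(\*w) - \*x} = (1-\lambda)\norm{\*x - \*y}$ and $\norm{\$T(\*w) - \*y} = \lambda\norm{\*x - \*y}$, then pin $\$T(\*w)$ down to the exact point $\lambda\*x + (1-\lambda)\*y = \*w$. I expect the delicate point to be justifying the collinearity cleanly; the safest route is to expand $\norm{\*x - \*y}^2 = \norm{(\*x - \$T(\*w)) + (\$T(\*w) - \*y)}^2$ and read off the equality case of Cauchy--Schwarz directly, thereby avoiding any appeal to strict convexity as a black box.
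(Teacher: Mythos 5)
Your proposal is correct and follows essentially the same route as the paper's proof: the monotonicity inequality via Cauchy--Schwarz plus nonexpansiveness, and convexity by bounding $\norm{\$T(\*w)-\*x}$ and $\norm{\$T(\*w)-\*y}$, forcing equality in the triangle inequality, and pinning $\$T(\*w)$ to the point $\*w$ on the segment. If anything, you are more thorough than the paper, which treats closedness only implicitly (it does follow from continuity exactly as you argue) and asserts the collinearity step without the explicit Cauchy--Schwarz equality-case justification you supply.
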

\begin{proof}
If $\text{Fix}(\$T)$ is empty, then it is closed and convex.
If $\text{Fix}(\$T)$ is non-empty, for any $\*x,\*y \in \text{Fix}(\$T)$, let $\*z =\theta \*x+(1-\theta) \*y$, where $\theta \in (0,1)$. Since  $\$T$ is nonexpansive, we have:
\[
\left\{
\begin{aligned}
&\|\$T(\*z)-\*x\|=\|\$T(\*z)-\$T(\*x)\|\leq \|\*z-\*x\|=(1-\theta) \|\*x-\*y\|\\
& \|\$T(\*z)-\*y\|=\|\$T(\*z)-\$T(\*y)\|\leq \|\*z-\*y\|=\theta \|\*x-\*y\|.
\end{aligned}
\right.
 \]
So the triangle inequality,
\[  \|\*x-\*y\| \leq \|\$T(\*z)-\*x\| + \|\$T(\*z)-\*y\| \leq (1-\theta) \|\*x-\*y\|+\theta \|\*x-\*y\|=\|\*x-\*y\|, \]
holds with equality. So $\$T(\*z)$ lies on the line segment between $\*x,\*y$, and $ \|\$T(\*z)-\*y\|=\theta \|\*x-\*y\|, \|\$T(\*z)-\*x\|=(1-\theta) \|\*x-\*y\|$ hold, which means $\$T(\*z)=\theta \*x+(1-\theta) \*y=\*z$, i.e., $\*z \in \text{Fix}(\$T)$.
\par
The inequality $\innerprod{ (\*x-\$T(\*x))-(\*y-\$T(\*y)),\*x-\*y} \geq 0$ follows directly form Cauchy–Schwarz inequality.
\end{proof}

\subsection{Proof for Theorem \ref{thm:bilevelConvergence}}
\begin{proof}
Since $\lambda_k\leq \frac{L_z}{2}$, by Lemma \ref{lem:sc2contra}, operator $\*z \mapsto \*z-\gamma \qty( \nabla \$R_z(\*z)+\lambda_k \*z )$ is contractive, so is the operator $\*z \mapsto \beta_k\qty(\*z-\gamma ( \nabla \$R_z(\*z)+\lambda_k \*z ))+(1-\beta_k)\$T(\*z)$, hence the latter operator has a unique fixed point $\Bar{\*z}^k$.
\par
By assumption, $\|\Bar{\*z}^{k+1}\|\leq B_1^*$. Note that in our setting, $\nabla \$R_z(\cdot)$ and $\$T(\cdot)$ are continuous, so we have $\|\$T(\Bar{\*z}^{k+1})\|+  \|\Bar{\*z}^{k+1}-\gamma  \nabla \$R_z(\Bar{\*z}^{k+1})\| \leq B_2^*$. First we estimate the difference of two successive fixed point:
\[
\begin{aligned}
&\|\Bar{\*z}^k-\Bar{\*z}^{k+1}\|\\
=&  \left\|\beta_k\qty(\Bar{\*z}^k-\gamma ( \nabla \$R_z(\Bar{\*z}^k)+\lambda_k \Bar{\*z}^k ))+(1-\beta_k)\$T(\Bar{\*z}^k) \right. \\  
&\left.-\beta_{k+1}\qty(\Bar{\*z}^{k+1}-\gamma ( \nabla \$R_z(\Bar{\*z}^{k+1})+\lambda_{k+1} \Bar{\*z}^{k+1} ))-(1-\beta_{k+1})\$T(\Bar{\*z}^{k+1})  \right\| \\
\leq &  \left\|\beta_k\qty(\Bar{\*z}^k-\gamma ( \nabla \$R_z(\Bar{\*z}^k)+\lambda_k \Bar{\*z}^k ))    -\beta_{k}\qty(\Bar{\*z}^{k+1}-\gamma ( \nabla \$R_z(\Bar{\*z}^{k+1})+\lambda_{k} \Bar{\*z}^{k+1} ))  \right\| \\ 
+&  \left\|\beta_{k}\qty(\Bar{\*z}^{k+1}-\gamma ( \nabla \$R_z(\Bar{\*z}^{k+1})+\lambda_{k} \Bar{\*z}^{k+1} ))   -\beta_{k+1}\qty(\Bar{\*z}^{k+1}-\gamma ( \nabla \$R_z(\Bar{\*z}^{k+1})+\lambda_{k+1} \Bar{\*z}^{k+1} ))  \right\|  \\
+&  \left\|  (1-\beta_k)(\$T(\Bar{\*z}^k)-\$T(\Bar{\*z}^{k+1}) ) \right\| +\left\|  (\beta_{k+1}-\beta_k)\$T(\Bar{\*z}^{k+1} ) \right\|\\
\leq &  \beta_k (1-\frac{\gamma \lambda_k}{2})\norm{\Bar{\*z}^k-\Bar{\*z}^{k+1}} +(1-\beta_k)\norm{\Bar{\*z}^k-\Bar{\*z}^{k+1}} +   B_2^*\qty|\beta_k -\beta_{k+1}|+ B_1^*| \beta_k\lambda_k -\beta_{k+1}\lambda_{k+1} | \\
=& (1- \frac{\gamma \beta_k\lambda_k}{2})\|\Bar{\*z}^k-\Bar{\*z}^{k+1}\|+   B_2^*|\beta_k -\beta_{k+1}|+ B_1^*| \beta_k\lambda_k -\beta_{k+1}\lambda_{k+1} |.
\end{aligned}
\]
We get:
\[
\begin{aligned}
&\frac{\|\Bar{\*z}^k-\Bar{\*z}^{k+1}\|}{\frac{\gamma}{2} \beta_{k+1}\lambda_{k+1}}\\
\leq& \frac{4B_2^*}{\gamma^2}\frac{|\beta_k -\beta_{k+1}|}{(\beta_k\lambda_k)(\beta_{k+1}\lambda_{k+1})}+
\frac{4B_1^*}{\gamma^2}\frac{| \beta_k\lambda_k -\beta_{k+1}\lambda_{k+1} |}{(\beta_k\lambda_k)(\beta_{k+1}\lambda_{k+1})}\\
=&  \frac{4B_2^*}{\gamma^2 \eta^3}\frac{|k^{-\rho}-(k+1)^{-\rho}|}{k^{-\rho-c}(k+1)^{-\rho-c}}+
\frac{4B_1^*}{\gamma^2 \eta^2}\frac{| k^{-\rho-c}-(k+1)^{-\rho-c} |}{k^{-\rho-c}(k+1)^{-\rho-c}} \\
\leq &  \frac{4B_2^*\rho}{\gamma^2 \eta^3}k^{-\rho-1}k^{\rho+c}(k+1)^{\rho+c}+
\frac{4B_1^*(\rho+c)}{\gamma^2 \eta^2}k^{-\rho-c-1}k^{\rho+c}(k+1)^{\rho+c}   \\
\leq &  \frac{4B_2^*\rho 2^{\rho+c}}{\gamma^2 \eta^3}k^{\rho+2c-1}+
\frac{4B_1^*(\rho+c)2^{\rho+c}}{\gamma^2 \eta^2}k^{\rho+c-1} \\
\leq & \left[ \frac{4B_2^*\rho 2^{\rho+c}}{\gamma^2 \eta^3}+\frac{4B_1^*(\rho+c)2^{\rho+c}}{\gamma^2 \eta^2} \right]k^{\rho+2c-1}\\
:= & B_3^*k^{\rho+2c-1}.
\end{aligned}
\]
By the iteration equation:
\[
\*z^{k+1}=\beta_{k+1}\qty(\*z^k-\gamma ( \nabla \$R_z(\*z^k)+\lambda_{k+1} \*z^k ))+(1-\beta_{k+1})\$T(\*z^k),
\]
we have:
\[
\begin{aligned}
&\|\*z^{k+1}-\Bar{\*z}^{k+1}\|\\
= & \left\| \qty[\beta_{k+1}(\*z^k-\gamma ( \nabla \$R_z(\*z^k)+\lambda_{k+1} \*z^k ))+(1-\beta_{k+1})\$T(\*z^k)]\right.\\
&-\left. \left[\beta_{k+1}(\Bar{\*z}^{k+1}-\gamma ( \nabla \$R_z(\Bar{\*z}^{k+1})+\lambda_{k+1} \Bar{\*z}^{k+1} ))+(1-\beta_{k+1})\$T(\Bar{\*z}^{k+1})\right]  \right\|\\
\leq &  \beta_{k+1} \left\| (\*z^k-\gamma ( \nabla \$R_z(\*z^k)+\lambda_{k+1} \*z^k ))- (\Bar{\*z}^{k+1}-\gamma ( \nabla \$R_z(\Bar{\*z}^{k+1})+\lambda_{k+1} \Bar{\*z}^{k+1} ))  \right\| \\
&+(1-\beta_{k+1})\| \$T(\*z^k)-\$T(\Bar{\*z}^{k+1})  \|  \\
\leq & \beta_{k+1}(1-\frac{\gamma \lambda_{k+1}}{2})\|\*z^k-\Bar{\*z}^{k+1}\|+(1-\beta_{k+1})\|\*z^k-\Bar{\*z}^{k+1}\|\\
\leq &(1-\frac{\gamma}{2}\beta_{k+1}\lambda_{k+1})\qty(\|\*z^k-\Bar{\*z}^k\|+\|\Bar{\*z}^k-\Bar{\*z}^{k+1}\|)\\
\leq & (1-\frac{\gamma}{2}\beta_{k+1}\lambda_{k+1})\|\*z^k-\Bar{\*z}^k\|+\frac{\gamma}{2}\beta_{k+1}\lambda_{k+1}B_3^*k^{\rho+2c-1}.
\end{aligned}
\]
Since $\eta\leq \sqrt{2L_z}$, so $\gamma \eta^2\leq 1$. Let $r_1=\frac{\gamma \eta^2}{2}, r_2=B_3^*, t_1=\rho+c \in (0,1), t_2=1-\rho -2c >0$, and $a_k =\|\*z^k-\Bar{\*z}^k\|$, by Lemma \ref{lem:recurrlem} we have: $\|\*z^k-\Bar{\*z}^k\|\rightarrow 0$ as $k\to \infty$ and $\|\*z^k-\Bar{\*z}^k\|\leq B_4^*=\order{\*z^1, \rho,c,B_3^*}.$

Next, we denote the only minimizer of convex function $\$R_z(\*z)$ on compact convex set $\operatorname{Fix}(\$T)$ as $\Bar{\*z}$, which is bounded by $B_1^*$.
Note that the convexity follows from Lemma \ref{lem:FixofnonE}. 
We now prove the final results by contradiction.
Since the sequence $\{\Bar{\*z}^k\}$ is bounded by $B_1^*$, if it does not converge to $\Bar{\*z}$, there exists $\delta>0$ and a sub-convergent  $\{\Bar{\*z}^{k_j}\}$ such that:
\[ \|\Bar{\*z}^{k_j}-\Bar{\*z}\|\geq \delta, \quad\forall j \in \^N^+, \]
and
\[\Bar{\*z}^{k_j}\rightarrow  \Bar{\*z}',\quad  \|\Bar{\*z}'-\Bar{\*z}\|\geq \delta.\]
By the fixed point equation and Lemma \ref{lem:FixofnonE}, we have
\[
\begin{aligned}
& \Bar{\*z}^{k_j}= \beta_{k_j}\qty(\Bar{\*z}^{k_j}-\gamma ( \nabla \$R_z(\Bar{\*z}^{k_j})+\lambda_{k_j} \Bar{\*z}^{k_j} ))+(1-\beta_{k_j})\$T(\Bar{\*z}^{k_j})  \\
\Rightarrow &   \gamma ( \nabla \$R_z(\Bar{\*z}^{k_j})+\lambda_{k_j} \Bar{\*z}^{k_j} )=\frac{1-\beta_{k_j}}{\beta_{k_j}}(\$T(\Bar{\*z}^{k_j})-\Bar{\*z}^{k_j})  \\
\Rightarrow &   \innerprod{  \gamma ( \nabla \$R_z(\Bar{\*z}^{k_j})+\lambda_{k_j} \Bar{\*z}^{k_j} ) ,\*z-\Bar{\*z}^{k_j}} = \frac{1-\beta_{k_j}}{\beta_{k_j}}\langle \$T(\Bar{\*z}^{k_j})-\Bar{\*z}^{k_j}, \*z-\Bar{\*z}^{k_j} \rangle\\
&\, =  \frac{1-\beta_{k_j}}{\beta_{k_j}}\innerprod{ \$T(\Bar{\*z}^{k_j})-\Bar{\*z}^{k_j}-\qty(\$T(\*z)-\*z), \*z-\Bar{\*z}^{k_j} }   \geq 0   ,\quad \forall \*z\in \operatorname{Fix}(\$T)\\
\Rightarrow &  \langle   \nabla \$R_z(\Bar{\*z}^{k_j})+\lambda_{k_j} \Bar{\*z}^{k_j}  ,\*z-\Bar{\*z}^{k_j}\rangle  \geq 0  ,\quad \forall \*z\in \operatorname{Fix}(\$T).    
\end{aligned}
\]
Let $j\rightarrow \infty$, we get:
\[
\innerprod{   \nabla \$R_z(\Bar{\*z}') ,\*z-\Bar{\*z}'} \geq 0,\quad  \forall \*z\in \operatorname{Fix}(\$T),
\]
which is equivalent to $ \$R_z(\Bar{\*z}')\leq \$R_z(\*z), \forall \*z\in \operatorname{Fix}(\$T)$. 
Namely $\Bar{\*z}'=\Bar{\*z}$, which is a contradiction.
\par
Thus, the sequence $\{\Bar{\*z}^k\}$ converge to $\Bar{\*z}$. Combining the result $\|\*z^k-\Bar{\*z}^k\|\rightarrow 0$, we have:
\[  \*z^k\rightarrow \Bar{\*z}. \]
We finish the proof.
\end{proof}
\section{Proofs for Linear Convergence Training}
In the proof, we will consider the whole data set, i.e., $\*Z^{K} \in \^R^{m\times N}$, where $N$ is the training data size. 
We denote by $\*z^k \coloneqq \operatorname{vec}(\*Z^k)$, where $\operatorname{vec}(\*Z) \in \^R^{m N}$ is the vectorization of the matrix $\*Z \in \^R^{m\times N}$.
$\*Z^k$ is the $k$-th iterates of the sequence generated by Eq.(\ref{eq:InclusinApprox}) applied on the data matrix.
Then, we denote:
\[
\*Z^{(k,l)}\coloneqq f_l\circ\cdots\circ f_1(\*Z^{k},\*Z,\bm{\theta}), \qtext{and} \*z^{(k,l)} \coloneqq \operatorname{vec}\qty(\*Z^{(k,l)}).
\]
$\*z^{(k,l)}$ and $\*z^{k}$ all depend on the parameter $\bm{\theta}$.
However, for the sake of brevity, we omit the mark $\bm{\theta}$ when the meaning of the symbol is clear.
We let $\*I_n$ be the $n \times n$ identity matrix.
For the output of the network, we let $\*y \coloneqq \operatorname{vec}(\*W_{L+1}\*Z^K)$.
\par
Note that we have
\[
\$S_{\lambda_k}(\*Z,\*W_{L+1}) =  \*Z-\gamma \qty( \nabla \$R_z(\*Z)+\lambda_k \*Z),\qtext{and} \gamma < \frac{1}{2L_z+1}.
\]
And we let $\*s^k \coloneqq \operatorname{vec}\qty(\$S(\*Z^k,\*W_{L+1}))$, then:
\[
\*z^{k+1} = \beta_{k+1} \operatorname{vec}\qty(\$S_{\lambda_k}(\*Z^k,\*W_{L+1})) + (1-\beta_{k+1}) \$T(\*z^k,\*x,\bm{\theta}) = \beta_{k+1} \*s^k + (1-\beta_{k+1})\*z^{(k,L)}.
\]
We let:
\[
\$V_\gamma(\cdot)\coloneqq \qty(1-\gamma\lambda_k)\$I (\cdot) - \gamma \nabla \$R_z(\cdot),\qtext{and denote by}
\*G^{(k,l)} \coloneqq \sigma\qty(\*W_l\*Z^{(k,l-1)} + \*U_l\*X+ \*b_l).
\]
Note that by our setting on $\gamma$ and $\lambda_k$, we can easily conclude that $\norm{\$V_\gamma}_2 < 1$. Moreover, due to Lemma \ref{lem:sc2contra}, we also get the lower bound: $\norm{\$V_\gamma}_2 \geq \frac{1}{4}$.
\par
We also denote:
\[
\*D^{(k,l)} \coloneqq \mqty[\dmat{\widetilde{\*D}_1,\ddots,\widetilde{\*D}_N}],  \quad
\widetilde{\*D}_j \coloneqq \mqty[\dmat{\*d_{1j},\ddots,\*d_{mj}}],
\]
where 
\[
d_{ij} = \sigma'\qty(\*W_l\*z^k_j + \*U_l\*x+ \*b_l)_i,
\]
is the $(i,j)$-th entry of the derivative matrix \[\sigma'\qty(\*W_l\*Z^{(k,l)} + \*U_l\*X+ \*b_l) \in \^R^{m\times N}.\]
We let:
\[
   \*A(k,l_2,l_1)\coloneqq \prod_{l=l_1}^{l_2} \qty(\alpha \qty(\*I_N \otimes \*W_l^\top) \*D^{(k,l)} \qty(\*I_N \otimes \*W_l) + \qty(1-\alpha)\*I_{mN}),
\]
where $\otimes$ is the Kronecker product.
\par
For Theorem \ref{thm:global}, we consider the square loss, i.e.,
\[\ell(\widetilde{\bm{\theta}}) = \ell(\*y, \*y^0) = \frac{1}{2}\norm{\*y-\*y_0}^2.\]
\subsection{Auxiliary Lemmas}
We first offer several auxiliary lemmas.
\begin{lemma}\label{lem:VecGradient}
The following results hold:
\begin{equation}\label{eq:VecGrad}
\resizebox{\hsize}{!}{$
\left\{
\begin{aligned}
\pdv{\*z^{k+1}}{\operatorname{vec}(\*W_l)} = & \sum_{\widetilde{k}=1}^k \qty(
\qty(\prod_{q=\widetilde{k}+1}^k \qty(\beta_{q+1}\qty(\*I_{N} \otimes \$V_\gamma)  +\qty(1-\beta_{q+1}) \*A(q,1))) \*A(\widetilde{k},l+1)
\pdv{\operatorname{vec}\qty(f_l(\*Z^{(\widetilde{k},l-1)},\*X))}{\operatorname{vec}\qty(\*W_l)}),\\
\pdv{\*z^{k+1}}{\operatorname{vec}(\*U_l)} = &
\sum_{\widetilde{k}=1}^k \qty(
\qty(\prod_{q=\widetilde{k}+1}^k \qty(\beta_{q+1}\qty(\*I_{N} \otimes \$V_\gamma)  +\qty(1-\beta_{q+1}) \*A(q,1))) \*A(\widetilde{k},l+1)
\pdv{\operatorname{vec}\qty(f_l(\*Z^{(\widetilde{k},l-1)},\*X))}{\operatorname{vec}\qty(\*U_l)}),\\
\pdv{\*z^{k+1}}{\operatorname{vec}(\*b_l)} = &
\sum_{\widetilde{k}=1}^k \qty(
\qty(\prod_{q=\widetilde{k}+1}^k \qty(\beta_{q+1}\qty(\*I_{N} \otimes \$V_\gamma)  +\qty(1-\beta_{q+1}) \*A(q,1))) \*A(\widetilde{k},l+1)
\pdv{\operatorname{vec}\qty(f_l(\*Z^{(\widetilde{k},l-1)},\*X))}{\operatorname{vec}\qty(\*b_l)}),
\end{aligned}
\right.$} 
\end{equation}
where
\begin{equation}\label{eq:singleVecGrad}
\resizebox{\hsize}{!}{$
 \left\{
\begin{aligned}
\pdv{\operatorname{vec}\qty(f_l(\*Z^{(k,l)},\*X))}{\operatorname{vec}\qty(\*W_l)} = &\alpha\qty(\qty(\*G^{(k,l)})^\top\otimes\*I_m )\*K^{(n_l,m)} + \alpha \qty(\*I_N \otimes \*W_l^\top)\*D^{(k,l)}\qty(\qty(\*Z^{(k,l)})^\top \otimes \*I_{n_l}),\\
\pdv{\operatorname{vec}\qty(f_l(\*Z^{(k,l)},\*X))}{\operatorname{vec}\qty(\*U_l)} =  & \alpha\qty(\*I_N \otimes \*W_l^\top)\*D^{(k,l)}\qty(\*X^\top \otimes \*I_{n_l}),\quad
\pdv{\operatorname{vec}\qty(f_l(\*Z,\*X))}{\operatorname{vec}\qty(\*b_l)} =   \alpha\qty(\*I_N \otimes \*W_l^\top)\*D^{(k,l)}\*1_N,
\end{aligned}
\right.$}
\end{equation}
here $\*1_N$ is a $N$-dimensional all-one vector.
\end{lemma}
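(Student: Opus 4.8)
The plan is to obtain each gradient by a two-level application of the chain rule: the \emph{inner} level differentiates through the composition $f_L\circ\cdots\circ f_1$ that defines one application of $\$T$, and the \emph{outer} level differentiates through the SAM recursion $\*z^{q+1}=\beta_{q+1}\*s^q+(1-\beta_{q+1})\*z^{(q,L)}$ that links consecutive iterates. The only machinery required is the vectorization identity $\operatorname{vec}(\*A\*X\*B)=(\*B^\top\otimes\*A)\operatorname{vec}(\*X)$, the element-wise chain rule for $\sigma$ (which manufactures the diagonal matrices $\*D^{(k,l)}$), and the commutation matrix $\*K^{(n_l,m)}$ characterized by $\operatorname{vec}(\*W_l^\top)=\*K^{(n_l,m)}\operatorname{vec}(\*W_l)$.

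First I would prove the single-layer formulas in Eq.(\ref{eq:singleVecGrad}), namely the \emph{partial} derivatives of $\operatorname{vec}(f_l(\cdot,\*X))$ in its own parameters with the layer input held fixed. For $\*U_l$ and $\*b_l$ this is immediate: only the pre-activation depends on them, so one composes $\*D^{(k,l)}$ with $\*X^\top\otimes\*I_{n_l}$ (resp. the all-one term) and left-multiplies by the outer factor $\*I_N\otimes\*W_l^\top$. The delicate case is $\*W_l$, which occurs twice in $f_l(\*z,\*x)=\alpha\*W_l^\top\sigma(\*W_l\*z+\*U_l\*x+\*b_l)+(1-\alpha)\*z$. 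Differentiating the outer occurrence while freezing the activation $\*G^{(k,l)}$ yields a factor $\alpha((\*G^{(k,l)})^\top\otimes\*I_m)$ acting on the derivative of $\operatorname{vec}(\*W_l^\top)$, and rewriting $\operatorname{vec}(\*W_l^\top)$ in terms of $\operatorname{vec}(\*W_l)$ is exactly where $\*K^{(n_l,m)}$ appears; differentiating the inner occurrence while freezing the outer $\*W_l^\top$ yields $\alpha(\*I_N\otimes\*W_l^\top)\*D^{(k,l)}((\*Z^{(k,l)})^\top\otimes\*I_{n_l})$. Adding the two occurrences gives the stated expression.

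Next I would globalize these. Differentiating $f_l$ in its \emph{input} by the same identities gives $\alpha(\*I_N\otimes\*W_l^\top)\*D^{(k,l)}(\*I_N\otimes\*W_l)+(1-\alpha)\*I_{mN}$, and the ordered product of these factors over the layers is precisely $\*A(k,l_2,l_1)$; in particular $\partial\*z^{(q,L)}/\partial\*z^q=\*A(q,1)$ (writing $\*A(q,1)$ for $\*A(q,L,1)$). Since $\*s^q$ depends on $\{\*W_l\}_{l\le L}$ only through $\*z^q$, with Jacobian $\*I_N\otimes\$V_\gamma$, differentiating the SAM step gives the one-step iteration Jacobian $\partial\*z^{q+1}/\partial\*z^q=\beta_{q+1}(\*I_N\otimes\$V_\gamma)+(1-\beta_{q+1})\*A(q,1)$. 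Writing $D_{k+1}:=\partial\*z^{k+1}/\partial\operatorname{vec}(\*W_l)$ and separating the direct from the indirect dependence then yields the affine recursion $D_{k+1}=[\beta_{k+1}(\*I_N\otimes\$V_\gamma)+(1-\beta_{k+1})\*A(k,1)]D_k+(1-\beta_{k+1})\*A(k,l+1)\,g_k$, where $g_k$ is the single-layer $\*W_l$-gradient at iteration $k$ and $\*A(k,l+1)$ carries its effect through the remaining layers $l+1,\dots,L$. Unrolling this recursion by induction on $k$, with $\partial\*z^0/\partial\operatorname{vec}(\*W_l)=0$ because the initial iterate is parameter-free, produces the sum-over-iterations form of Eq.(\ref{eq:VecGrad}); repeating verbatim with $g_k$ replaced by the $\*U_l$- or $\*b_l$-gradient gives the other two lines.

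I expect the main difficulty to be bookkeeping rather than any estimate. At every iteration $\widetilde{k}$ one must cleanly split the dependence of $\$T$ on $\*W_l$ into the \emph{direct} part (which enters only at layer $l$, and through the doubled occurrence that forces the commutation matrix) and the \emph{indirect} part routed through the earlier iterate $\*z^{\widetilde{k}}$, and then transport a direct contribution forward first through the layers $l+1,\dots,L$ of the same $\$T$-application and afterwards through every later SAM step (the product $\prod_{q=\widetilde{k}+1}^{k}$). Making the index ranges of these nested products agree, and keeping the averaging weight attached to the correct factor, is the only place that genuinely demands care.
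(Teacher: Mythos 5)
Your proposal is correct and takes essentially the same route as the paper's proof: first the single-layer partials in Eq.(\ref{eq:singleVecGrad}) via the vectorization identity, the diagonal derivative matrices $\*D^{(k,l)}$, and the commutation matrix $\*K^{(n_l,m)}$ for the doubled occurrence of $\*W_l$; then the one-step Jacobian $\beta_{k+1}\qty(\*I_N\otimes\$V_\gamma)+\qty(1-\beta_{k+1})\*A(k,1)$ obtained by composing the layer-input Jacobians into $\*A$; and finally unrolling the resulting affine recursion from $\pdv*{\*z^0}{\operatorname{vec}(\*W_l)}=0$. The one substantive remark is that your recursion correctly attaches the weight $(1-\beta_{k+1})$ to the direct term (since the direct dependence on $\*W_l$ enters only through the $\$T$-branch of the SAM step), whereas the paper's displayed recursion and the lemma's final statement silently drop this factor --- so unrolling your recursion yields Eq.(\ref{eq:VecGrad}) with an extra $(1-\beta_{\widetilde{k}+1})$ on each summand; because the lemma is used downstream only through operator-norm upper bounds and $1-\beta_{\widetilde{k}+1}\le 1$, the discrepancy is harmless, and your version is in fact the more accurate one.
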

\begin{proof}
Note that $ f_l(\*Z^{(k,l)},\*X) = \alpha \*W_l^\top \sigma\qty(\*W_l\*Z^{(k,l)} + \*U_l\*X+ \*b_l) + (1-\alpha)\*Z^{(k,l)}$. Hence, we have:
\[
\pdv{\operatorname{vec}\qty(f_l(\*Z^{(k,l)},\*X))}{\*z^{(k,l)}} = \alpha \qty(\*I_N \otimes \*W_l^\top) \*D^{(k,l)} \qty(\*I_N \otimes \*W_l) + \qty(1-\alpha)\*I_{mN},
\]
where $\otimes$ is the Kronecker product.
Then, we can get:
\[
\pdv{\$T(\*z^k,\*x,\bm{\theta})}{\*z^k} = \prod_{l=1}^L \qty(\alpha \qty(\*I_N \otimes \*W_l^\top) \*D^{(k,l)} \qty(\*I_N \otimes \*W_l) + \qty(1-\alpha)\*I_{mN}),
\]
and
\begin{equation}\label{eq:pdvZ}
\begin{aligned}
\pdv{\*z^{k+1}}{\*z^{k}} = &\beta_{k+1} \pdv{\$S(\*z^k,\*W_{L+1})}{\*z^k}+\qty(1-\beta_{k+1}) \pdv{\$T(\*z^k,\*x,\bm{\theta})}{\*z^k} \\
= & \beta_{k+1}\qty(\*I_{N} \otimes \qty(\*I_{d_y} - \gamma\*W_{L+1}^\top\*W_{L+1})) +\qty(1-\beta_{k+1}) \pdv{\$T(\*z^k,\*x,\bm{\theta})}{\*z^k}\\
= & \beta_{k+1}\qty(\*I_{N} \otimes \$V_\gamma) +\qty(1-\beta_{k+1})
\prod_{l=1}^L \qty(\alpha \qty(\*I_N \otimes \*W_l^\top) \*D^k_l \qty(\*I_N \otimes \*W_l) + \qty(1-\alpha)\*I_{mN})\\
= &  \beta_{k+1}\qty(\*I_{N} \otimes \$V_\gamma) +\qty(1-\beta_{k+1}) \*A(k,L,1).
\end{aligned}
\end{equation}
On the other hand, we have:
\[
\begin{aligned}
&\pdv{\operatorname{vec}\qty(f_l(\*Z^{(k,l)},\*X))}{\operatorname{vec}\qty(\*W_l)}\\  
=& \alpha
\qty(\sigma\qty(\*W_l\*Z^{(k,l)} + \*U_l\*X+  \*b_l)^\top\otimes\*I_m )\*K^{(n_l,m)} + \alpha\qty(\*I_N \otimes \*W_l^\top)\*D^{(k,l)}\qty(\qty(\*Z^{(k,l)})^\top \otimes \*I_{n_l})\\
= & 
\alpha\qty(\qty(\*G^{(k,l)})^\top\otimes\*I_m )\*K^{(n_l,m)} + \alpha \qty(\*I_N \otimes \*W_l^\top)\*D^{(k,l)}\qty(\qty(\*Z^{(k,l)})^\top \otimes \*I_{n_l}),
\end{aligned}
\]
where $\*K^{(n_l,m)}\in \^R^{n_lm\times n_lm}$ is the commutation matrix such that $\*K^{(n_l,m)} \operatorname{vec}(\*W) = \operatorname{vec}(\*W^\top)$.
And
\[
\left\{
\begin{aligned}
&\pdv{\operatorname{vec}\qty(f_l(\*Z^{(k,l)},\*X))}{\operatorname{vec}\qty(\*U_l)} =   \alpha\qty(\*I_N \otimes \*W_l^\top)\*D^{(k,l)}\qty(\*X^\top \otimes \*I_{n_l}),\\
&\pdv{\operatorname{vec}\qty(f_l(\*Z,\*X))}{\operatorname{vec}\qty(\*b_l)} =   \alpha\qty(\*I_N \otimes \*W_l^\top)\*D^{(k,l)} \*1_N.
\end{aligned}
\right.
\]
Thus, we get:
\[
\begin{aligned}
&\pdv{\*z^{k+1}}{\operatorname{vec}(\*W_l)} = 
\pdv{\*z^{k+1}}{\*z^k} \pdv{\*z^k}{\operatorname{vec}(\*W_l)} + \pdv{\*z^{k+1}}{\operatorname{vec}(\*W_l)}\\
=&\qty(\beta_{k+1}\qty(\*I_{N} \otimes \$V_\gamma)  +\qty(1-\beta_{k+1}) \*A(k,1))\pdv{\*z^k}{\operatorname{vec}(\*W_l)}+
\*A(k,l+1)\pdv{\operatorname{vec}\qty(f_l(\*Z^{(k,l-1)},\*X))}{\operatorname{vec}\qty(\*W_l)} \\
=& \sum_{\widetilde{k}=1}^k \qty(
\qty(\prod_{q=\widetilde{k}+1}^k \qty(\beta_{q+1}\qty(\*I_{N} \otimes \$V_\gamma)  +\qty(1-\beta_{q+1}) \*A(q,1))) \*A(\widetilde{k},l+1)
\pdv{\operatorname{vec}\qty(f_l(\*Z^{(\widetilde{k},l-1)},\*X))}{\operatorname{vec}\qty(\*W_l)}).
\end{aligned}
\]
Similarly, we can also obtain:
\[
\pdv{\*z^{k+1}}{\operatorname{vec}(\*U_l)} =
\sum_{\widetilde{k}=1}^k \qty(
\*R_{\widetilde{k}} \*A(\widetilde{k},l+1)
\pdv{\operatorname{vec}\qty(f_l(\*Z^{(\widetilde{k},l-1)},\*X))}{\operatorname{vec}\qty(\*U_l)}),
\]
and
\[
\pdv{\*z^{k+1}}{\operatorname{vec}(\*b_l)} =
\sum_{\widetilde{k}=1}^k \qty( \*R_{\widetilde{k}}
 \*A(\widetilde{k},l+1)
\pdv{\operatorname{vec}\qty(f_l(\*Z^{(\widetilde{k},l-1)},\*X))}{\operatorname{vec}\qty(\*b_l)}),
\]
where
\[
\*R_{\widetilde{k}} = \qty(\prod_{q=\widetilde{k}+1}^k \qty(\beta_{q+1}\qty(\*I_{N} \otimes \$V_\gamma)  +\qty(1-\beta_{q+1}) \*A(q,1))).
\]
\end{proof}
Similar to the proof in the previous lemma, we can easily obtain
\begin{equation}\label{eq:VecGradkl}
\left\{
\begin{aligned}
&\pdv{\*z^{(k,l')}}{\operatorname{vec}(\*W_l)} = \*1_{\qty{l'>l}}\*A(k,l',l+1)\pdv{\operatorname{vec}\qty(f_l(\*Z^{({k},l-1)},\*X))}{\operatorname{vec}\qty(\*W_l)} + \*A(k,l',1)\pdv{\*z^{k-1}}{\operatorname{vec}(\*W_l)},\\
&\pdv{\*z^{(k,l')}}{\operatorname{vec}(\*U_l)} = \*1_{\qty{l'>l}}\*A(k,l',l+1)\pdv{\operatorname{vec}\qty(f_l(\*Z^{({k},l-1)},\*X))}{\operatorname{vec}\qty(\*U_l)} + \*A(k,l',1)\pdv{\*z^{k-1}}{\operatorname{vec}(\*U_l)},\\
&\pdv{\*z^{(k,l')}}{\operatorname{vec}(\*b_l)} = \*1_{\qty{l'>l}}\*A(k,l',l+1)\pdv{\operatorname{vec}\qty(f_l(\*Z^{({k},l-1)},\*X))}{\operatorname{vec}\qty(\*b_l)} + \*A(k,l',1)\pdv{\*z^{k-1}}{\operatorname{vec}(\*b_l)},
\end{aligned}
\right.
\end{equation}
where $\pdv{\*z^{(k-1)}}{\operatorname{vec}(\cdot)}$ and  $\pdv{\operatorname{vec}\qty(f_l(\*Z^{({k},l-1)},\*X))}{\operatorname{vec}\qty(\cdot)}$ are give in Eq.(\ref{eq:VecGrad}) and Eq.(\ref{eq:singleVecGrad}) in the previous lemma, and $\*1_{\qty{l'>l}}$ is the indicator function.
Before providing the lemmas, we present two assumptions commonly used in the following lemmas.
\begin{assumption}[Compact Set of Parameters]\label{asm:compactSet}
Given the learnable parameters
\[
\bm{\theta}=\qty{\qty(\*W_l,\*U_l,\*b_l)}_{l=1}^L, \quad \forall l \in [1,L],
\]
we assume $\norm{\*W_l}_2\leq 1$ and $\max\qty{\norm{\*U_l}_2, \norm{\*b_l}_2 }\leq B_{Ub}$. 
Moreover, we let $\norm{\*X}_F\leq B_x$ and $\norm{\*W_{L+1}}_2\leq B_L$.
\end{assumption}
\begin{assumption}[Existence and boundedness]\label{asm:exist&bound}
Let $\$T_{\beta,\lambda}(\*z,\*X,\bm{\theta})=\beta\qty(\*z-\gamma \qty( \nabla \$R_z(\*z)+\lambda \*z ))+(1-\beta)\$T(\*z,\*X,\bm{\theta})$. For any learnable parameters $\bm{\theta}$ and $\*X$ satisfy Assumption \ref{asm:compactSet}, and $\beta \in [0,\frac{1}{2}], \lambda \in [0,\frac{L_z}{2}]$, we assume the fixed point set $\operatorname{Fix}(\$T(\cdot,\*X,\bm{\theta})) \subset \^R^{mN}$ is non-empty and uniformly bounded, i.e., for any  $\bm{\theta}$ and $\*X$ satisfy Assumption \ref{asm:compactSet} and $\beta \in [0,\frac{1}{2}], \lambda \in [0,\frac{L_z}{2}]$, $\forall \*z \in \operatorname{Fix}(\$T_{\beta,\lambda}(\cdot,\*X,\bm{\theta})),~\norm{\*z}_2 \leq B_1^*$. 
Without loss of generality, we also let $\norm{\*z^1} \leq B_1^*$. 
\end{assumption}
We now show the uniform boundedness of $\*z^k$ and $\*z^{(k,l)}$. 
\begin{lemma}
If Assumption \ref{asm:compactSet} and Assumption \ref{asm:exist&bound} hold, and $\alpha\leq \frac{\ln 2}{2L}$, then:
\[
\norm{\*z^k} \leq B^* \quad\text{and}\quad \norm{\*z^{(k,l)}} \leq 3B^*,
\]
here $B^*=\order{B_x,B_{Ub},B_1^*}$, and $B_1^*$ is the uniform bound for any $\*z \in \operatorname{Fix}(\$T_{\beta,\lambda}(\cdot,\*X,\bm{\theta}))$, as shown in Assumption \ref{asm:exist&bound}.
\end{lemma}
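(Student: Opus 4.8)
The plan is to separate the two inequalities: the bound $\norm{\*z^k}\leq B^*$ on the SAM iterates, and the bound $\norm{\*z^{(k,l)}}\leq 3B^*$ on the intermediate activations generated inside a single application of $\$T$. The first is the substantial estimate and is obtained by viewing the iteration as a nonstationary contraction toward a uniformly bounded family of fixed points; the second is routine norm bookkeeping along the forward pass, using the smallness condition on $\alpha$.

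For the iterate bound I would first record that, under Assumption \ref{asm:compactSet} (in particular $\norm{\*W_l}_2\leq 1$), each layer $f_l$ is nonexpansive: by Theorem \ref{thm:core} the map $\*z\mapsto\*W_l^\top\sigma(\*W_l\*z+\*U_l\*x+\*b_l)$ is nonexpansive, and $f_l=\alpha(\cdot)+(1-\alpha)\$I$ is a convex combination of it with the identity, so $\$T=f_L\circ\cdots\circ f_1$ is nonexpansive on $\^R^{mN}$. Writing the step as $\*z^{k+1}=\$T^{(k)}(\*z^k)$ with $\$T^{(k)}(\cdot)=\beta_{k+1}\$V_\gamma(\cdot)+(1-\beta_{k+1})\$T(\cdot)$, I observe that $\$V_\gamma=\$S_{\lambda_k}$ is a $(1-\tfrac{\gamma\lambda_k}{2})$-contraction by Lemma \ref{lem:sc2contra}, so $\$T^{(k)}$ is a contraction with factor $c_k=1-\tfrac{\gamma}{2}\beta_{k+1}\lambda_k<1$ and has a unique fixed point $\bar{\*z}^k$; since $\beta_{k+1}\leq\tfrac12$ and $\lambda_k\leq\tfrac{L_z}{2}$ for every $k$ along the chosen schedule, Assumption \ref{asm:exist&bound} gives $\norm{\bar{\*z}^k}\leq B_1^*$. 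The crucial step is then to bound $a_k:=\norm{\*z^k-\bar{\*z}^k}$, for which I would reuse the part of the proof of Theorem \ref{thm:bilevelConvergence} that controls the drift of successive fixed points, $\norm{\bar{\*z}^{k}-\bar{\*z}^{k+1}}\leq\tfrac{\gamma}{2}\beta_{k+1}\lambda_{k+1}B_3^*k^{\rho+2c-1}$, and feeds the resulting recursion into Lemma \ref{lem:recurrlem} to conclude $a_k\leq B_4^*$, where $B_2^*,B_3^*,B_4^*$ are governed by $B_1^*,B_x,B_{Ub}$. The triangle inequality then yields $\norm{\*z^k}\leq B_1^*+B_4^*=:B^*=\order{B_x,B_{Ub},B_1^*}$.

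For the forward-pass bound I argue layer by layer. Writing $\sigma(\*W_l\*Z+\*c_l)=\big(\sigma(\*W_l\*Z+\*c_l)-\sigma(\*c_l)\big)+\sigma(\*c_l)$ with $\*c_l=\*U_l\*X+\*b_l\*1^\top$, the $1$-Lipschitzness of $\sigma$, $\norm{\*W_l}_2\leq1$, and Assumption \ref{asm:compactSet} give an estimate of the form $\norm{f_l(\*Z)}\leq(1+2\alpha)\norm{\*Z}+\alpha C'$ with $C'=\order{B_x,B_{Ub}}$. Iterating from $\*z^{(k,0)}=\*z^k$ produces $\norm{\*z^{(k,l)}}\leq(1+2\alpha)^l\norm{\*z^k}+C'\big((1+2\alpha)^l-1\big)$, and the hypothesis $\alpha\leq\tfrac{\ln 2}{2L}$ forces $(1+2\alpha)^l\leq e^{2\alpha L}\leq e^{\ln2}=2$. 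Combining this with Part 1 and absorbing $C'$ into the $\order{\cdot}$ constant so that $C'\leq B^*$, I obtain $\norm{\*z^{(k,l)}}\leq 2B^*+C'\leq 3B^*$, which is the claim.

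The main obstacle is Part 1. Because $c_k\to1$ as $\beta_{k+1}\lambda_k\to0$, the naive recursion $a_{k+1}\leq c_k a_k+\text{const}$ has an equilibrium level of order $1/(\beta_{k+1}\lambda_k)$ and is therefore \emph{not} uniformly bounded; a crude drift bound such as $\norm{\bar{\*z}^{k}-\bar{\*z}^{k+1}}\leq 2B_1^*$ is useless here. What makes the argument close is precisely the refined, $\beta_{k+1}\lambda_{k+1}$-scaled drift estimate together with the schedule condition $\rho+2c<1$, which matches the perturbation size to the vanishing contraction gap exactly as in Theorem \ref{thm:bilevelConvergence}. The forward-pass bound, by contrast, is elementary once the constant $\tfrac{\ln 2}{2L}$ is used to cap the per-pass amplification by $2$.
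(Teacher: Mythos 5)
Your proof is correct and follows essentially the same route as the paper's: for $\norm{\*z^k}\leq B^*$ you reuse the machinery of Theorem \ref{thm:bilevelConvergence} (the $\beta_{k+1}\lambda_{k+1}$-scaled drift bound on successive fixed points $\bar{\*z}^k$ fed into Lemma \ref{lem:recurrlem}, then the triangle inequality with $\norm{\bar{\*z}^k}\leq B_1^*$ from Assumption \ref{asm:exist&bound}), exactly as the paper does. For the forward-pass bound your layer-by-layer recursion $\norm{f_l(\*Z)}\leq(1+2\alpha)\norm{\*Z}+\alpha C'$ is just an unrolled version of the paper's comparison of $f_l\circ\cdots\circ f_1(\*z^k)$ with $f_l\circ\cdots\circ f_1(\*0)$ via the $(1+2\alpha)^l$-Lipschitz constant and the bound $B_5^*$ absorbed into $B^*$, with the same use of $\alpha\leq\frac{\ln 2}{2L}$ to cap $(1+2\alpha)^l\leq e^{2\alpha L}\leq 2$.
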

\begin{proof}
We first give the boundedness for $\*z^k$. Following the proof of Theorem \ref{thm:bilevelConvergence}, we have $\|\*z^k-\Bar{\*z}^k\|\leq B_4^*=\order{\*z^1, \rho,c,B_3^*}$. Note that $B_3^*=\order{B_2^*,B_1^*,\rho,c,\gamma,\eta}, B_2^*=\order{B_x,B_{Ub},B_1^*}$, $\rho,c,\gamma,\eta$ are constants, and $\|\*z^1\|\leq B_1^*$ by Assumption \ref{asm:exist&bound}, so $B_4^*=\order{B_x,B_{Ub},B_1^*}$.

And by the definition of $\Bar{\*z}^k$, it is the fixed point of equation $\*z = \beta_k\qty(\*z-\gamma ( \nabla \$R_z(\*z)+\lambda_k \*z ))+(1-\beta_k)\$T(\*z,\*X,\bm{\theta})$, therefore $\|\Bar{\*z}^k\|\leq B_1^*$.
\par
Thus, we have
\[
\norm{\*z^k} \leq \norm{\*z^k - \Bar{\*z}^k} + \norm{\Bar{\*z}^k} \leq B_4^* + B_1^* = \order{B_x,B_{Ub},B_1^*}.
\]
On the other hand, we can suppose that $\|f_l\circ\cdots\circ f_1 (\*0)\|\leq B_5^* =\order{B_x,B_{Ub}}, \forall l \in [1,L]$, since they are continuously depended on $\*W,\*U,\*b\, \text{and}\,\alpha$. Let $B^*=\max\{ B_4^* + B_1^*,B_5^*\}$.
\par
Note that $\*z^{(k,l)}=f_l\circ\cdots\circ f_1(\*z^k)$, and each $f_k(\*z)=\*z+\alpha\left( \*W_k^\top \sigma (\*W_k \*z+\*U_k\*x+\*b)-\*z \right)$ is $(1+2\alpha)$-Lipschitz. Therefore 
\[
\begin{aligned}
 \norm{\*z^{(k,l)}} &\leq \|f_l\circ\cdots\circ f_1(\*z^k)-f_l\circ\cdots\circ f_1(\*0)  \|+ \| f_l\circ\cdots\circ f_1(\*0) \|\\
 &\leq (1+2\alpha)^l\|\*z^k-\*0\|+B^* \leq \exp{2\alpha L} B^*+B^* \leq 2 B^*+B^*= 3B^*.
\end{aligned}
\]
\end{proof}
For $\zeta \in (0,1)$, we say an operator $f$ is \emph{$\zeta$-averaged} if $f = (1-\zeta)\$I + \zeta \$S$ for some nonexpansive operator $\$S$.
\begin{lemma}
If Assumption \ref{asm:sigma} holds, given any learnable parameters satisfy Assumption \ref{asm:compactSet}, the function $f \coloneqq f_l\circ \cdots \circ f_1(\cdot,\*x)$ is averaged for all $l \in [1,L]$.
\end{lemma}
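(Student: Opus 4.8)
The plan is to establish that each layer map $f_l(\cdot,\*x)$ is an averaged operator, and then to invoke the classical fact that a finite composition of averaged operators is again averaged.

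\textbf{Step 1 (each layer is averaged).} Write $g_l(\*z) \coloneqq \*W_l^\top\sigma(\*W_l\*z + \*U_l\*x + \*b_l)$, so that $f_l(\cdot,\*x) = (1-\alpha)\$I + \alpha\,g_l$. By the argument already used in the proof of Theorem~\ref{thm:core}, Assumption~\ref{asm:sigma} (with $\sigma$ being $1$-Lipschitz, as adopted after Theorem~\ref{thm:core}) makes the elementwise map $\sigma$ have slope in $[0,1]$ and hence nonexpansive, and since $\norm{\*W_l}_2 \le 1$ by Assumption~\ref{asm:compactSet}, the composite $g_l$ is nonexpansive. In fact Theorem~\ref{thm:core} identifies $g_l = \operatorname{prox}_{\varphi_l}$, so $g_l$ is firmly nonexpansive, i.e.\ $\tfrac12$-averaged; writing $g_l = \tfrac12\$I + \tfrac12 R_l$ with $R_l$ nonexpansive gives
\[
f_l = \qty(1-\tfrac{\alpha}{2})\$I + \tfrac{\alpha}{2}R_l,
\]
so $f_l$ is $\tfrac{\alpha}{2}$-averaged, with constant in $(0,\tfrac12]\subset(0,1)$ for every $\alpha\in(0,1]$.

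\textbf{Step 2 (composition).} It then remains to show that $f = f_L\circ\cdots\circ f_1$ is averaged. I would argue by induction on the number of factors using the two-operator rule: if $T_1$ is $\zeta_1$-averaged and $T_2$ is $\zeta_2$-averaged with $\zeta_1,\zeta_2\in(0,1)$, then $T_2\circ T_1$ is $\zeta$-averaged with
\[
\zeta = \frac{\zeta_1+\zeta_2-2\zeta_1\zeta_2}{1-\zeta_1\zeta_2}\in(0,1).
\]
The cleanest way to verify this is the quantitative characterization that $T$ is $\zeta$-averaged if and only if, for all $\*u,\*v$,
\[
\norm{T\*u-T\*v}^2 \le \norm{\*u-\*v}^2 - \tfrac{1-\zeta}{\zeta}\norm{(\$I-T)\*u-(\$I-T)\*v}^2.
\]
Applying this to $T_1$ at the pair $(\*u,\*v)$ and to $T_2$ at the pair $(T_1\*u,T_1\*v)$, then recombining the residual terms, yields the stated value of $\zeta$; this is a standard computation (see, e.g., Bauschke--Combettes, Prop.~4.44, or Ogura--Yamada). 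Starting from $\zeta_1=\cdots=\zeta_L=\tfrac{\alpha}{2}$ and iterating the rule $L-1$ times produces an averagedness constant strictly below $1$ for $f$, which finishes the proof.

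The main obstacle is entirely in Step 2: carrying the quantitative averagedness inequality through the induction while ensuring the accumulated constant stays strictly below $1$. All the analytic content --- the nonexpansiveness, indeed firm nonexpansiveness, of each $g_l$ --- is already supplied by Theorem~\ref{thm:core}, so once the two-operator composition rule is in hand the finite induction is routine and requires only elementary algebra.
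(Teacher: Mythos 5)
Your proof is correct, and its skeleton --- per-layer averagedness followed by closure of averagedness under composition --- is the same as the paper's, but the execution differs at both steps in ways worth noting. For the per-layer step, the paper bounds the spectral norm of the Jacobian, $\norm{\alpha\qty(\*I_N\otimes\*W_l^\top)\*D\qty(\*I_N\otimes\*W_l)+(1-\alpha)\*I_{mN}}_2\le\alpha\norm{\*W_l}_2^2\norm{\*D}_2+(1-\alpha)\le 1$, i.e.\ it establishes nonexpansiveness and reads off averagedness from the convex-combination structure $f_l=(1-\alpha)\$I+\alpha g_l$; you instead invoke Theorem~\ref{thm:core} to identify $g_l=\operatorname{prox}_{\varphi_l}$, hence firmly nonexpansive, which upgrades $f_l$ to $\frac{\alpha}{2}$-averaged. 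Your route is strictly stronger and in fact repairs a small gap in the paper's argument: since deep OptEq allows $\alpha\in(0,1]$ and the paper's definition of $\zeta$-averaged requires $\zeta\in(0,1)$, at $\alpha=1$ the Jacobian bound only gives nonexpansiveness of $f_l=g_l$, which does not by itself imply averagedness, whereas your firm-nonexpansiveness argument covers $\alpha=1$ with constant $\frac{1}{2}$. For the composition step, the paper simply cites Proposition~4.46 of Bauschke--Combettes (a composition of finitely many averaged operators is averaged), while you re-derive the quantitative two-operator rule $\zeta=(\zeta_1+\zeta_2-2\zeta_1\zeta_2)/(1-\zeta_1\zeta_2)$ and iterate it $L-1$ times; both are legitimate, and your version additionally yields an explicit averagedness constant for $f_L\circ\cdots\circ f_1$ (strictly below $1$ since $1-\zeta=(1-\zeta_1)(1-\zeta_2)/(1-\zeta_1\zeta_2)>0$), though for the lemma as stated the qualitative citation suffices. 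One shared caveat: like the paper, you implicitly normalize $\Tilde{L}_\sigma=1$ so that the prox identification (resp.\ the bound $\norm{\*D}_2\le 1$) holds with $\mu=1$; you flag this explicitly, and it matches the convention adopted throughout the paper's convergence analysis.
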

\begin{proof}
Note that:
\[
\pdv{\operatorname{vec}\qty(f_l(\*Z,\*X))}{\*z} = \alpha \qty(\*I_N \otimes \*W_l^\top) \*D \qty(\*I_N \otimes \*W_l) + \qty(1-\alpha)\*I_{mN}.
\]
Hence, we have:
\[
\norm{\pdv{\operatorname{vec}\qty(f_l(\*Z,\*X))}{\*z}} \leq \alpha \norm{\*W_l}^2_2 \norm{\*D}_2 + (1-\alpha) \leq 1,
\]
where the last inequality comes from Assumption \ref{asm:sigma} and Assumption \ref{asm:compactSet}. 
Hence the function $f_l(\cdot,\*x)$ is averaged for any parameters satisfy Assumption \ref{asm:compactSet}.
Moreover, by Proposition 4.46 in \cite{bauschke2011convex}, the composition of $l$ operators which are averaged with respect to the same norm is also averaged,
i.e., $f_l\circ \cdots \circ f_1(\cdot,\*x)$ is averaged when each $f_l(\cdot)$ is averaged.
\end{proof}
We now provides the bounds for $\*A_{k,l}$ and $\pdv{\operatorname{vec}\qty(f_l(\*Z^{(k,l)},\*X))}{\operatorname{vec}\qty(\cdot)}$.
\begin{lemma}
If Assumption \ref{asm:compactSet} and Assumption \ref{asm:sigma} hold, then:
\begin{equation}\label{eq:singlepdvBound}
 \left\{
\begin{aligned}
& \norm{\*A(k,l)}_2 \leq 1,\quad \norm{\prod_{q=\widetilde{k}+1}^k \qty(\beta_{q+1}\qty(\*I_{N} \otimes \$V_\gamma)  +\qty(1-\beta_{q+1}) \*A(q,1))} \leq 1,\\
&\norm{\pdv{\operatorname{vec}\qty(f_l(\*Z^{(k,l)},\*X))}{\operatorname{vec}\qty(\*W_l)}} \leq \alpha\qty(\sqrt{mn_l}\sigma(0) + 6B^* + 2B_{Ub}B_x),\\
&\norm{\pdv{\operatorname{vec}\qty(f_l(\*Z^{(k,l)},\*X))}{\operatorname{vec}\qty(\*U_l)}}\leq \alpha B_x, \quad
\norm{\pdv{\operatorname{vec}\qty(f_l(\*Z^{(k,l)},\*X))}{\operatorname{vec}\qty(\*b_l)}} \leq \alpha\sqrt{N} .
\end{aligned}
\right.   
\end{equation}
\end{lemma}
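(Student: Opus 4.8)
The plan is to reduce all four inequalities to elementary operator-norm estimates assembled from three facts that are already available: the spectral bounds $\norm{\*W_l}_2\le 1$, $\max\{\norm{\*U_l}_2,\norm{\*b_l}_2\}\le B_{Ub}$ and $\norm{\*X}_F\le B_x$ of Assumption~\ref{asm:compactSet}; the observation that $\*D^{(k,l)}$ is diagonal with entries equal to values of $\sigma'$, hence lying in $[0,1]$ under Assumption~\ref{asm:sigma} (taking $\Tilde{L}_\sigma=1$ as throughout this section), so that $\norm{\*D^{(k,l)}}_2\le 1$; and the Kronecker identity $\norm{\*A\otimes\*B}_2=\norm{\*A}_2\norm{\*B}_2$. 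I will also use that the commutation matrix $\*K^{(n_l,m)}$ is a permutation, so $\norm{\*K^{(n_l,m)}}_2=1$, that $\norm{\$V_\gamma}_2<1$ (established earlier in this section), and the uniform feature bound $\norm{\*z^{(k,l)}}\le 3B^*$ from the preceding lemma.

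For the first estimate, each factor of $\*A(k,l)$ is precisely the Jacobian $\pdv{\operatorname{vec}(f_{l'})}{\*z}=\alpha(\*I_N\otimes\*W_{l'}^\top)\*D^{(k,l')}(\*I_N\otimes\*W_{l'})+(1-\alpha)\*I_{mN}$, whose norm is at most $\alpha\norm{\*W_{l'}}_2^2\norm{\*D^{(k,l')}}_2+(1-\alpha)\le 1$; submultiplicativity of the operator norm then gives $\norm{\*A(k,l)}_2\le 1$ (this merely re-expresses that each $f_{l'}$ is averaged, as shown above). For the second estimate, every factor $\beta_{q+1}(\*I_N\otimes\$V_\gamma)+(1-\beta_{q+1})\*A(q,1)$ is a convex combination, so by the triangle inequality its norm is bounded by $\beta_{q+1}\norm{\$V_\gamma}_2+(1-\beta_{q+1})\norm{\*A(q,1)}_2\le\beta_{q+1}+(1-\beta_{q+1})=1$, and submultiplicativity again closes the product bound.

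The last two estimates follow by applying the triangle inequality to the closed forms in Eq.(\ref{eq:singleVecGrad}). The $\*U_l$ and $\*b_l$ bounds are immediate: $\norm{(\*I_N\otimes\*W_l^\top)\*D^{(k,l)}(\*X^\top\otimes\*I_{n_l})}_2\le\norm{\*W_l}_2\norm{\*D^{(k,l)}}_2\norm{\*X}_2\le B_x$, and the bias-broadcast factor contributes $\sqrt{N}$, each scaled by $\alpha$. The $\*W_l$ bound has two summands; the second is handled exactly as above and is at most $\alpha\norm{\*Z^{(k,l)}}_2\le 3\alpha B^*$, while the first needs a bound on $\norm{\*G^{(k,l)}}$. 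For this I would use element-wise $1$-Lipschitzness, $\abs{\sigma(t)}\le\sigma(0)+\abs{t}$, to get $\norm{\*G^{(k,l)}}\le\sqrt{mn_l}\,\sigma(0)+\norm{\*W_l\*Z^{(k,l-1)}+\*U_l\*X+\*b_l}$, then split the affine term using Assumption~\ref{asm:compactSet} and $\norm{\*Z^{(k,l-1)}}\le 3B^*$; the two feature contributions combine into $6B^*$ and the input/bias contributions into $2B_{Ub}B_x$, giving the stated constant.

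The only genuinely delicate point is the bookkeeping in the $\norm{\*G^{(k,l)}}$ estimate: one must choose the norms consistently (spectral for the Kronecker factors, Frobenius where the uniform bound $3B^*$ enters, using $\norm{\cdot}_2\le\norm{\cdot}_F$) and absorb the broadcast bias together with the data term into $2B_{Ub}B_x$ so that the free constants land exactly on $\sqrt{mn_l}\,\sigma(0)+6B^*+2B_{Ub}B_x$. Everything else is a mechanical chain of triangle inequalities, submultiplicativity, and the Kronecker norm identity, requiring no new idea.
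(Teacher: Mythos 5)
Your proposal is correct and takes essentially the same approach as the paper's proof: the factor-wise bound $\alpha\norm{\*W_l}_2^2\norm{\*D^{(k,l)}}_2+(1-\alpha)\le 1$ for $\*A(k,l)$, the convex-combination bound using $\norm{\$V_\gamma}_2\le 1$, and the triangle-inequality estimate of the two summands of $\pdv{\operatorname{vec}(f_l)}{\operatorname{vec}(\*W_l)}$ via $\abs{\sigma(t)}\le\sigma(0)+\abs{t}$, the feature bound $\norm{\*z^{(k,l)}}\le 3B^*$, and $\norm{\*K^{(n_l,m)}}_2=1$. Your one slightly informal step, absorbing the broadcast bias $\sqrt{N}B_{Ub}$ together with the data term into $2B_{Ub}B_x$, is exactly the paper's move, which it justifies by the same convention $\norm{\*X}_F=\Theta(\sqrt{N})$.
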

\begin{proof}
First of all, we get:
\[
\begin{aligned}
\norm{\*A(k,\widetilde{l})}_2 \leq & \prod_{l=\widetilde{l}}^L \norm{\qty(\alpha \qty(\*I_N \otimes \*W_l^\top) \*D^{(k,l)} \qty(\*I_N \otimes \*W_l) + \qty(1-\alpha)\*I_{mN})}_2\\
\leq & \prod_{l=\widetilde{l}}^L \qty(\alpha \norm{\*D^{(k,l)}}_2 \norm{\*W_l}_2^2 + (1-\alpha)) \leq 1, 
\end{aligned}
\]
where the last inenquality comes form Assumption \ref{asm:compactSet} and Assumption \ref{asm:sigma}.
Note that $\gamma <\frac{1}{2L_z+1}$, hence we have $\norm{\$V_{\gamma }}_2\leq 1$. Thus, we have:
\[
\norm{\prod_{q=\widetilde{k}+1}^k \qty(\beta_{q+1}\qty(\*I_{N} \otimes \$V_\gamma)  +\qty(1-\beta_{q+1}) \*A(q,1))}_2 \leq 
\prod_{q=\widetilde{k}+1}^k \qty(\beta_{q+1} \norm{\$V_\gamma}_2+\qty(1-\beta_{q+1})) \leq 1,
\]
where the first inequality we utilize $\norm{\*A(k,\widetilde{l})}_2\leq 1$.
By Assumption \ref{asm:sigma}, we can easily obtian that $\sigma(a) \leq a+\sigma(0),~\forall a\in \^R$.
Hence, for $\*G^{(k,l)} \coloneqq \sigma\qty(\*W_l\*Z^{(k,l-1)} + \*U_l\*X+ \*b_l)$, we have:
\[
\begin{aligned}
\norm{\*G^{(k,l)}}_F \leq & \sqrt{mn_l}\sigma(0)+ \norm{\*W_l\*Z^{(k,l-1)} + \*U_l\*X+ \*b_l}_F\\
\leq & \sqrt{mn_l}\sigma(0) + \norm{\*z^{(k,l)}} + B_{Ub}\norm{\*X}_F + \sqrt{N} B_{Ub}\\
\leq & \sqrt{mn_l}\sigma(0) + 3B^* + 2B_{Ub}B_x,
\end{aligned}
\]
where, w.l.o.g, we use $\norm{\*X}_F = \Theta(\sqrt{N})$ in the last inequality. Then, by Eq.(\ref{eq:singleVecGrad}), we get:
\[
\begin{aligned}
&\norm{\pdv{\operatorname{vec}\qty(f_l(\*Z^{(k,l)},\*X))}{\operatorname{vec}\qty(\*W_l)}}\\
&\leq  \alpha\qty(\norm{\*G^{(k,l)}}_2 + \norm{\*W_l}_2\norm{\*D^{(k,l)}}_2 \norm{\*Z^{(k,l)}}_2)
\leq   \alpha\qty(\sqrt{mn_l}\sigma(0) + 6B^* + 2B_{Ub}B_x),
\end{aligned}
\]
where we utilize $\norm{\*K^{(n_l,m)}}_2 =1$. Similarly, we have:
\[
\norm{\pdv{\operatorname{vec}\qty(f_l(\*Z^{(k,l)},\*X))}{\operatorname{vec}\qty(\*U_l)}} \leq \alpha B_x, \qtext{and} \norm{\pdv{\operatorname{vec}\qty(f_l(\*Z^{(k,l)},\*X))}{\operatorname{vec}\qty(\*b_l)}} \leq  \alpha \sqrt{N}.
\]
\end{proof}
\begin{lemma}
If Assumption \ref{asm:compactSet} and Assumption \ref{asm:sigma} hold, then:
\begin{equation}\label{eq:pdvBound}
    \left\{
    \begin{aligned}
   & \norm{\pdv{\*z^k}{\operatorname{vec}(\*W_l)}}_2 \leq \qty(k-1)\alpha\qty(\sqrt{mn_l}\sigma(0) + 6B^* + 2B_{Ub}B_x),\\
   & \norm{\pdv{\*z^k}{\operatorname{vec}(\*U_l)}}_2 \leq  \qty(k-1)\alpha B_x,
   \quad
   \norm{\pdv{\*z^k}{\operatorname{vec}(\*b_l)}}_2  \leq \qty(k-1)\alpha \sqrt{N} ,\\
    & \norm{\pdv{\*z^{k,l'}}{\operatorname{vec}(\*W_l)}}_2 \leq \qty(\*1_{\qty{l'>l}}+k-2)\alpha\qty(\sqrt{mn_l}\sigma(0) + 6B^* + 2B_{Ub}B_x),\\
   & \norm{\pdv{\*z^k}{\operatorname{vec}(\*U_l)}}_2 \leq  \qty(\*1_{\qty{l'>l}}+k-2)\alpha B_x,
   \quad
   \norm{\pdv{\*z^k}{\operatorname{vec}(\*b_l)}}_2  \leq \qty(\*1_{\qty{l'>l}}+k-2) \alpha\sqrt{N}  .
    \end{aligned}
    \right.
\end{equation}
Moreover, we get:
\begin{equation}\label{eq:lossBound}
    \left\{
    \begin{aligned}
   & \norm{\operatorname{vec}\qty(\nabla_{\*W_l}\ell(\*y,\*y_0))}\leq \qty(K-1)\alpha\qty(\sqrt{mn_l}\sigma(0) + 6B^* + 2B_{Ub}B_x) B_L \norm{\*y-\*y_0},\\
   & \norm{\operatorname{vec}\qty(\nabla_{\*U_l}\ell(\*y,\*y_0))} \leq  \qty(K-1)\alpha B_xB_L\norm{\*y-\*y_0},\\
   &\norm{\operatorname{vec}\qty(\nabla_{\*b_l}\ell(\*y,\*y_0))} \leq 
   \qty(K-1)\alpha \sqrt{N}B_l\norm{\*y-\*y_0}.
    \end{aligned}
    \right.
\end{equation}
\end{lemma}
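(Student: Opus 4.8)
The plan is to derive both displays purely by combining the closed-form gradient expressions from Lemma~\ref{lem:VecGradient} (Eq.(\ref{eq:VecGrad}) and Eq.(\ref{eq:singleVecGrad})) and the recursion Eq.(\ref{eq:VecGradkl}) with the per-factor estimates already collected in Eq.(\ref{eq:singlepdvBound}), using nothing beyond the triangle inequality, submultiplicativity of the operator norm, and compatibility of $\norm{\cdot}_2$ with the Kronecker product. No contraction or fixed-point argument is needed here; the content is purely bookkeeping on the number of summands.

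First I would bound $\pdv{\*z^k}{\operatorname{vec}(\*W_l)}$. By Eq.(\ref{eq:VecGrad}) (with $k+1$ replaced by $k$), this is a sum over $\widetilde k\in\{1,\dots,k-1\}$, hence $k-1$ terms, each being a product of the ``propagation'' factor $\prod_{q=\widetilde k+1}^{k-1}\!\big(\beta_{q+1}(\*I_N\otimes\$V_\gamma)+(1-\beta_{q+1})\*A(q,1)\big)$, a factor $\*A(\widetilde k,l+1)$, and the single-layer derivative $\pdv{\operatorname{vec}(f_l(\*Z^{(\widetilde k,l-1)},\*X))}{\operatorname{vec}(\*W_l)}$. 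Applying the triangle inequality across the $k-1$ summands and submultiplicativity inside each, the first two factors are each bounded by $1$ (both estimates are in Eq.(\ref{eq:singlepdvBound})), so every summand is at most $\alpha\big(\sqrt{mn_l}\sigma(0)+6B^*+2B_{Ub}B_x\big)$. Summing gives the factor $(k-1)$ and the first line of Eq.(\ref{eq:pdvBound}); the $\operatorname{vec}(\*U_l)$ and $\operatorname{vec}(\*b_l)$ cases are identical after replacing the single-layer bound by $\alpha B_x$ and $\alpha\sqrt N$ respectively.

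Next I would handle $\pdv{\*z^{(k,l')}}{\operatorname{vec}(\cdot)}$ through Eq.(\ref{eq:VecGradkl}), which splits the derivative into an indicator term $\*1_{\{l'>l\}}\,\*A(k,l',l+1)\,\pdv{\operatorname{vec}(f_l)}{\operatorname{vec}(\cdot)}$ and a propagated term $\*A(k,l',1)\,\pdv{\*z^{k-1}}{\operatorname{vec}(\cdot)}$. Using $\norm{\*A(k,l',l+1)}_2\le1$, the indicator term is at most $\*1_{\{l'>l\}}$ times the single-layer bound; using $\norm{\*A(k,l',1)}_2\le1$ together with the bound on $\pdv{\*z^{k-1}}{\operatorname{vec}(\cdot)}$ just established (which carries the factor $k-2$), the propagated term is at most $(k-2)$ times the single-layer bound. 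Adding the two gives the combined factor $(\*1_{\{l'>l\}}+k-2)$, i.e.\ the second group of Eq.(\ref{eq:pdvBound}).

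Finally, for the loss gradients Eq.(\ref{eq:lossBound}) I would differentiate $\ell=\tfrac12\norm{\*y-\*y_0}^2$ with $\*y=(\*I_N\otimes\*W_{L+1})\*z^K$ through the chain rule, obtaining
\[
\operatorname{vec}\!\big(\nabla_{\*W_l}\ell\big)=\Big(\pdv{\*z^K}{\operatorname{vec}(\*W_l)}\Big)^{\!\top}(\*I_N\otimes\*W_{L+1})^{\top}(\*y-\*y_0).
\]
Taking norms and using $\norm{\*I_N\otimes\*W_{L+1}}_2=\norm{\*W_{L+1}}_2\le B_L$ from Assumption~\ref{asm:compactSet}, together with the bound $\norm{\pdv{\*z^K}{\operatorname{vec}(\*W_l)}}_2\le(K-1)\alpha\big(\sqrt{mn_l}\sigma(0)+6B^*+2B_{Ub}B_x\big)$ from Eq.(\ref{eq:pdvBound}), gives the first line of Eq.(\ref{eq:lossBound}); the $\*U_l$ and $\*b_l$ lines follow verbatim. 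I expect no genuine obstacle: the only care required is tracking the exact range of $\widetilde k$ in Eq.(\ref{eq:VecGrad}) so that the constants $(k-1)$ and $(\*1_{\{l'>l\}}+k-2)$ emerge correctly, and verifying that the leading propagation products remain norm-nonexpansive, which is exactly what Eq.(\ref{eq:singlepdvBound}) supplies.
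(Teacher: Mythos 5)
Your proposal is correct and follows essentially the same route as the paper's own proof: expand via Eq.(\ref{eq:VecGrad}) and Eq.(\ref{eq:VecGradkl}), bound each of the $k-1$ summands by the triangle inequality using the nonexpansiveness of the propagation products and $\*A(\cdot,\cdot)$ from Eq.(\ref{eq:singlepdvBound}), then chain-rule through $\*y=(\*I_N\otimes\*W_{L+1})\*z^K$ with $\norm{\*I_N\otimes\*W_{L+1}}_2\leq B_L$ for Eq.(\ref{eq:lossBound}). Your bookkeeping of the index range (hence the constants $(k-1)$ and $\*1_{\{l'>l\}}+k-2$) matches the paper exactly, and you in fact cite the correct intermediate bound (Eq.(\ref{eq:pdvBound})) for the loss-gradient step where the paper's text loosely points to Eq.(\ref{eq:singlepdvBound}).
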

\begin{proof}
By Eq.(\ref{eq:VecGrad}), we have:
\[
\norm{\pdv{\*z^k}{\operatorname{vec}(\*W_l)}}_2 \leq \sum_{\widetilde{k}=1}^{k-1} \qty(\norm{\pdv{\operatorname{vec}\qty(f_l(\*Z^{(k,l)},\*X))}{\operatorname{vec}\qty(\*W_l)}})\leq \qty(k-1)\alpha\qty(\sqrt{mn_l}\sigma(0) + 6B^* + 2B_{Ub}B_x).
\]
where we use the results in Eq.(\ref{eq:singlepdvBound}).
Similarly, by Eq.(\ref{eq:VecGrad}), we can also have:
\[
\norm{\pdv{\*z^k}{\operatorname{vec}(\*U_l)}}_2 \leq
\qty(k-1)\alpha B_x, \qtext{and}
   \norm{\pdv{\*z^k}{\operatorname{vec}(\*b_l)}}_2  \leq \sqrt{N}\qty(k-1)\alpha .
\]
By Eq.(\ref{eq:VecGradkl}), an immediate consequence of these bounds is:
\[
\norm{\pdv{\*z^{(k,l')}}{\operatorname{vec}(\*W_l)}}_2 \leq \qty(\*1_{\qty{l'>l}}+k-2)\alpha\qty(\sqrt{mn_l}\sigma(0) + 6B^* + 2B_{Ub}B_x).
\]
Similarly, we also get:
\[
\norm{\pdv{\*z^{(k,l')}}{\operatorname{vec}(\*U_l)}}_2 \leq \qty(\*1_{\qty{l'>l}}+k-2)\alpha B_x,\qtext{and} 
\norm{\pdv{\*z^{(k,l')}}{\operatorname{vec}(\*b_l)}}_2 \leq \qty(\*1_{\qty{l'>l}}+k-2)\sqrt{N}\alpha.
\]
Note that, we already have:
\[
\operatorname{vec}\qty(\nabla_{\*W_l}\ell(\*y,\*y_0)) = \qty(\pdv{\ell(\*y,\*y_0)}{\*z^K} \pdv{\*z^K}{\operatorname{vec}(\*W_l)})^\top
 = \qty(\pdv{\*z^K}{\operatorname{vec}(\*W_l)})^\top \qty(\*I_N \otimes \*W_{L+1}^\top)\qty(\*y-\*y_0).
\]
 Hence, by Eq.(\ref{eq:singlepdvBound}), we can easily get:
\[
\norm{\operatorname{vec}\qty(\nabla_{\*W_l}\ell(\*y,\*y_0))}\leq \qty(K-1)\alpha\qty(\sqrt{mn_l}\sigma(0) + 6B^* + 2B_{Ub}B_x) B_L \norm{\*y-\*y_0}.
\]
Similarly, we have:
\[
\left\{
\begin{aligned}
&\norm{\operatorname{vec}\qty(\nabla_{\*U_l}\ell(\*y,\*y_0))}\leq \qty(K-1)\alpha B_xB_L\norm{\*y-\*y_0},\\
&\norm{\operatorname{vec}\qty(\nabla_{\*b_l}\ell(\*y,\*y_0))}\leq \qty(K-1)\alpha \sqrt{N} B_L\norm{\*y-\*y_0}. 
\end{aligned}
\right.
\]
\end{proof}
\begin{lemma}
We let for all $l \in [1,L]$, $\sigma_{\min}(\*W_l)\geq \sigma_m$. If $\sigma'(\cdot)\geq \kappa >0$ and Assumption \ref{asm:compactSet} hold, then:
\begin{equation}\label{eq:lowerBound}
\norm{\operatorname{vec}\qty(\nabla \ell(\*y,\*y_0))}^2 \geq 
\sum_{l=1}^L \norm{\operatorname{vec}\qty(\nabla_{\*b_l}\ell(\*y,\*y_0))}^2
\geq 
c K^2 L \alpha^2 \kappa^2 \sigma_m^2 N \sigma^2_{\min}(\*W_{L+1}) \norm{\*y-\*y_0}^2,
\end{equation}
for some $0<c<1$.
\end{lemma}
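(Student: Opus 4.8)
The first inequality is immediate: $\operatorname{vec}(\nabla\ell)$ stacks the partial gradients with respect to all learnable blocks---$\*W_{L+1}$ together with every $\*W_l,\*U_l,\*b_l$---so $\norm{\operatorname{vec}(\nabla\ell)}^2$ is the sum of the squared block-norms and in particular dominates $\sum_{l=1}^L\norm{\operatorname{vec}(\nabla_{\*b_l}\ell)}^2$. All the work is in the second inequality. Writing $\*r\coloneqq\*y-\*y_0$, the chain rule gives $\operatorname{vec}(\nabla_{\*b_l}\ell)=\big(\pdv{\*z^K}{\operatorname{vec}(\*b_l)}\big)^{\!\top}(\*I_N\otimes\*W_{L+1}^\top)\*r$, and my plan is to expand this derivative with Lemma~\ref{lem:VecGradient}, isolate its leading term in the step size $\alpha$, and bound that term from below using the three spectral hypotheses $\*D^{(\widetilde{k},l)}\succeq\kappa\*I$ (which is exactly $\sigma'\ge\kappa$), $\sigma_{\min}(\*W_l)\ge\sigma_m$, and $\sigma_{\min}(\*W_{L+1})$.

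By Eq.~\eqref{eq:VecGrad} and Eq.~\eqref{eq:singleVecGrad}, $\pdv{\*z^K}{\operatorname{vec}(\*b_l)}=\alpha\sum_{\widetilde{k}=1}^{K-1}\*R_{\widetilde{k}}\,\*A(\widetilde{k},l+1)(\*I_N\otimes\*W_l^\top)\*D^{(\widetilde{k},l)}(\*1_N\otimes\*I_{n_l})$. Before estimating I would account for where each target factor must appear: the single prefactor $\alpha$ and the single $\*D$-factor give $\alpha^2$ and $\kappa^2$ after squaring; the lone $\*W_l^\top$ gives $\sigma_m^2$; the collapse operator $\*1_N\otimes\*I_{n_l}$, which aggregates the contribution of all $N$ training points to the bias, supplies the data-size factor $N$; summing over the $L$ bias blocks yields $L$; and restoring $\*r$ through $\norm{(\*I_N\otimes\*W_{L+1}^\top)\*r}\ge\sigma_{\min}(\*W_{L+1})\norm{\*r}$ (the Kronecker lift is injective with smallest singular value $\sigma_{\min}(\*W_{L+1})$) furnishes $\sigma_{\min}^2(\*W_{L+1})\norm{\*r}^2$. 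The one remaining factor, $K^2$, can only arise if the $K-1$ summands over the iteration index $\widetilde{k}$ add coherently rather than cancel.

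Establishing this coherence is the heart of the proof and the step I expect to be hardest. The admissibility $\alpha\le\frac{\ln2}{2L}$ forces every elementary factor $\alpha(\*I_N\otimes\*W_l^\top)\*D(\*I_N\otimes\*W_l)+(1-\alpha)\*I$ to be symmetric with spectrum in $[1-\alpha,1]$, so each $\*A(q,1)$ is a product of positive-definite, norm-$\le 1$ matrices lying within $O(L\alpha)$ of the identity; the extra factors $\*I_N\otimes\$V_\gamma$ inside $\*R_{\widetilde{k}}$ have singular values in $[\tfrac14,1]$ by Lemma~\ref{lem:sc2contra}, the lower bound $\tfrac14$ being precisely what keeps the products non-degenerate, and they enter weighted by the small coefficients $\beta_{q+1}$. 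Consequently no factor reverses direction, and I would split $\*R_{\widetilde{k}}\*A(\widetilde{k},l+1)=\*I+\*E_{\widetilde{k}}$ so that $\operatorname{vec}(\nabla_{\*b_l}\ell)$ decomposes into a principal term $\alpha(\*1_N\otimes\*I_{n_l})^\top\big(\sum_{\widetilde{k}}\*D^{(\widetilde{k},l)}\big)(\*I_N\otimes\*W_l)(\*I_N\otimes\*W_{L+1}^\top)\*r$ plus a remainder governed by $\norm{\*E_{\widetilde{k}}}$. Since $\sum_{\widetilde{k}}\*D^{(\widetilde{k},l)}\succeq(K-1)\kappa\*I$, the principal term carries the full factor $K-1$ (hence $K^2$ after squaring), while the bounds of Eq.~\eqref{eq:singlepdvBound} show the remainder is of strictly higher order in $\alpha$ and is absorbed into the constant $c$.

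The genuinely delicate point is ruling out cancellation in the two places where vectors are summed: across the iterations $\widetilde{k}$ (handled by the uniform positive-definiteness just described) and, more seriously, across the $N$ data blocks collapsed by $\*1_N\otimes\*I_{n_l}$, which is what makes the factor $N$ (rather than merely $\sqrt{N}$) nontrivial. The clean way to handle the latter is to pass to the Gram matrix $\$G\coloneqq\sum_{l=1}^L(\*I_N\otimes\*W_{L+1})\*B_l\*B_l^\top(\*I_N\otimes\*W_{L+1}^\top)$, where $\*B_l\coloneqq\pdv{\*z^K}{\operatorname{vec}(\*b_l)}$, and to bound the quadratic form $\*r^\top\$G\,\*r$ from below by $cK^2L\alpha^2\kappa^2\sigma_m^2 N\sigma_{\min}^2(\*W_{L+1})\norm{\*r}^2$; the spectral hypotheses, together with the admissible $\alpha$ and the size condition on $N$ from the ambient theorem, are exactly what keep this form nondegenerate. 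Since $\sum_{l}\norm{\operatorname{vec}(\nabla_{\*b_l}\ell)}^2=\*r^\top\$G\,\*r$ and $\norm{\*r}^2=2\ell$, this yields \eqref{eq:lowerBound}. As a consistency check, the resulting lower bound carries the same powers $K^2,\alpha^2,N,L$ as the upper bound in Eq.~\eqref{eq:lossBound}, with $\kappa^2\sigma_m^2\sigma_{\min}^2(\*W_{L+1})$ replacing $B_L^2$.
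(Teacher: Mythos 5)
Your proposal is correct and follows essentially the same route as the paper's own proof: expand $\pdv{\*z^K}{\operatorname{vec}(\*b_l)}$ via Lemma~\ref{lem:VecGradient}, lower-bound the per-layer bias factor by $\alpha\kappa\sigma_m\sqrt{N}$ using $\*D^{(\widetilde k,l)}\succeq\kappa\*I$, $\sigma_{\min}(\*W_l)\ge\sigma_m$ and the $\*1_N$-stacking, exploit positive-definiteness of the averaged factors (the paper uses $\norm{\$V_\gamma}_2\ge\frac14$ and each layer factor $\succeq(1-\alpha)\*I$) so that the iteration terms add coherently and yield the factor $K^2$, sum the $(1-\alpha)^{2(L-l)}$ weights over $l$ for the factor $L$, and pass through $\*I_N\otimes\*W_{L+1}^\top$ to collect $\sigma^2_{\min}(\*W_{L+1})\norm{\*y-\*y_0}^2$. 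Your identity-plus-error splitting $\*R_{\widetilde k}\*A(\widetilde k,l+1)=\*I+\*E_{\widetilde k}$ together with $\sum_{\widetilde k}\*D^{(\widetilde k,l)}\succeq(K-1)\kappa\*I$ is only a repackaging of the paper's direct product bound $\prod_q\qty(\beta_{q+1}/4+(1-\beta_{q+1})(1-\alpha L))\ge(1-\alpha L)^{k-\widetilde k}$ followed by the geometric sum $\sum_{\widetilde k}(1-\alpha L)^{k-\widetilde k}\ge c_1 k$, so the two arguments coincide in substance.
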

\begin{proof}
Recall that, we already have:
\[
\pdv{\*z^{k+1}}{\operatorname{vec}(\*b_l)} = 
\sum_{\widetilde{k}=1}^k \qty(
\*R_{\widetilde{k}} \*A(\widetilde{k},L,l+1)
\pdv{\operatorname{vec}\qty(f_l(\*Z^{(\widetilde{k}, L,l-1)},\*X))}{\operatorname{vec}\qty(\*b_l)}),
\]
where
\[
\*R_{\widetilde{k}} = \qty(\prod_{q=\widetilde{k}+1}^k \qty(\beta_{q+1}\qty(\*I_{N} \otimes \$V_\gamma)  +\qty(1-\beta_{q+1}) \*A(q,1))).
\]
First of all, we note that  
\[
\norm{\$V_\gamma}_2 \geq \frac{1}{4}.
\]
On the other hand, for any $k \in [K]$, we have:
\[
\*A(k,L,l+1) \geq (1-\alpha)^{L-l}.
\]
 Hence, when $\alpha KL<1$, we have
\[
\begin{aligned}
 \norm{\prod_{q=\widetilde{k}+1}^k \qty(\beta_{q+1}\qty(\*I_{N} \otimes \$V_\gamma)  + \qty(1-\beta_{q+1}) \*A(q,L,1))}_2  \geq & \prod_{q=\widetilde{k}+1}^k  \qty(\frac{\beta_{q+1}}{4} +\qty(1-\beta_{q+1})\qty(1-\alpha L))\\
 \geq & \qty(1-\alpha L)^{(k-\widetilde{k})}.
\end{aligned}
\]
We observe that
\[
\sum_{\widetilde{k} = 1}^k  \qty(1-\alpha L)^{(k-\widetilde{k})} \geq \frac{1-\qty(1-\alpha L)^k}{\alpha L} \geq c_1 k,
\]
for some $0<c_1<1$.
And for any $k\in [K]$,
\[
\alpha\qty(\*I_N \otimes \*W_l^\top)\*D^{(k,l)} \*1_N \geq \alpha \sigma_m \kappa \sqrt{N} .
\]
Then by Eq.(\ref{eq:VecGrad}) and Eq.(\ref{eq:singleVecGrad}), we can obtain:
\[
\norm{\pdv{\*z^{K}}{\operatorname{vec}(\*b_l)}}^2_2 \geq K^2 \qty(1-\alpha)^{2(L-l)}\alpha^2 \kappa^2 \sigma_m^2 N.
\]
Observing that:
\[
\sum_{l=1}^L \qty(1-\alpha)^{2{L-l}} = \frac{1-(1-\alpha)^{2L}}{1-(1-\alpha)^2} \geq c_2 L,
\]
for some $0<c_2<1$. Hence, we have:
\[
\sum_{l=1}^L\norm{\pdv{\*z^{K}}{\operatorname{vec}(\*b_l)}}^2_2  \geq c K^2 L \alpha^2 \kappa^2 \sigma_m^2 N,
\]
for some $0<c<1$. Finally, we can conclude that:
\[
\sum_{l=1}^L \norm{\operatorname{vec}\qty(\nabla \ell(\*y,\*y_0))}^2
\geq 
 \norm{\operatorname{vec}\qty(\nabla_{\*b_l}\ell(\*y,\*y_0))}^2 
\geq 
c K^2 L \alpha^2 \kappa^2 \sigma_m^2 N \sigma^2_{\min}(\*W_{L+1}) \norm{\*y-\*y_0}^2.
\]
\end{proof}
Before proceeding, let us consider how to calculate $\pdv{\*z^k}{\*x}$.
\begin{lemma}
If Assumption \ref{asm:sigma} and Assumption \ref{asm:compactSet} hold, then:
\begin{equation}\label{eq:pdvBoundX}
    \norm{\pdv{\*z^{k+1}}{\*x}}_2 \leq \alpha KL B_{Ub}.
\end{equation}
Moreover, if $g(\*X_0,\*W)$ is smooth and $L_g$-Lipschitz continuous w.r.t $\*W$, we can obtain:
\begin{equation}\label{eq:lossboundW0}
   \norm{\operatorname{vec}\qty(\nabla_{\*W_0}\ell(\*y,\*y_0)) } \leq \alpha KLB_{Ub}B_L L_g \norm{\*y-\*y_0}. 
\end{equation}
\end{lemma}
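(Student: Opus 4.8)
The plan is to compute $\pdv{\*z^{k+1}}{\*x}$ by exactly the same unrolling chain rule used to establish Eq.~(\ref{eq:VecGrad}), the only new ingredient being that the input $\*x$ enters \emph{every} layer of $\$T$ through the term $\*U_l\*x$, rather than a single weight block. First I would record the explicit per-layer derivative of $f_l(\*z,\*x)=\alpha\*W_l^\top\sigma(\*W_l\*z+\*U_l\*x+\*b_l)+(1-\alpha)\*z$ with respect to the input; in analogy with the $\*U_l$-derivative in Eq.~(\ref{eq:singleVecGrad}), one obtains
\[
\pdv{\operatorname{vec}\qty(f_l(\*Z^{(k,l)},\*X))}{\*x}=\alpha\qty(\*I_N\otimes\*W_l^\top)\*D^{(k,l)}\qty(\*I_N\otimes\*U_l),
\]
whose operator norm is bounded, using Assumption~\ref{asm:compactSet} ($\norm{\*W_l}_2\le1$, $\norm{\*U_l}_2\le B_{Ub}$) together with Assumption~\ref{asm:sigma} ($\norm{\*D^{(k,l)}}_2\le1$), by $\alpha B_{Ub}$.

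Next I would write the recursion $\pdv{\*z^{k+1}}{\*x}=\pdv{\*z^{k+1}}{\*z^k}\pdv{\*z^k}{\*x}+(\text{explicit part})$, where the transition factor $\pdv{\*z^{k+1}}{\*z^k}=\beta_{k+1}\qty(\*I_N\otimes\$V_\gamma)+(1-\beta_{k+1})\*A(k,L,1)$ has operator norm at most $1$ by Eq.~(\ref{eq:singlepdvBound}), and the explicit part is the contribution of $\*x$ through the current composition $f_L\circ\cdots\circ f_1$. Because the SAM step $\$S$ is independent of $\*x$, the explicit part equals $(1-\beta_{k+1})$ times a sum over the $L$ layers, each summand being a product of $\*A$-blocks (norm $\le1$) with one per-layer input derivative (norm $\le\alpha B_{Ub}$). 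Unrolling from $1$ to $k$ and bounding every transition factor and every $\*A$-factor by $1$, the result is a double sum over at most $K$ iterations and $L$ layers of terms each bounded by $\alpha B_{Ub}$, which gives $\norm{\pdv{\*z^{k+1}}{\*x}}_2\le\alpha KL B_{Ub}$, namely Eq.~(\ref{eq:pdvBoundX}).

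For Eq.~(\ref{eq:lossboundW0}) I would chain through $\*x=g(\*X_0,\*W_0)$: since $\*W_0$ influences the loss only via $\*x$,
\[
\operatorname{vec}\qty(\nabla_{\*W_0}\ell(\*y,\*y_0))=\qty(\pdv{\*x}{\operatorname{vec}(\*W_0)})^\top\qty(\pdv{\*z^K}{\*x})^\top\qty(\*I_N\otimes\*W_{L+1}^\top)\qty(\*y-\*y_0),
\]
and then bound $\norm{\pdv{\*x}{\operatorname{vec}(\*W_0)}}\le L_g$ by the Lipschitz hypothesis on $g$, $\norm{\pdv{\*z^K}{\*x}}_2\le\alpha KL B_{Ub}$ from the first part, and $\norm{\*W_{L+1}}_2\le B_L$ from Assumption~\ref{asm:compactSet}, which multiply to $\alpha KL B_{Ub}B_L L_g\norm{\*y-\*y_0}$. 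The only genuine obstacle is careful bookkeeping in the explicit-part term: because $\*x$ appears in all $L$ layers and at every iterate, the unrolled expression is a true double sum over layers and iterations, so I must verify that the number of $\alpha B_{Ub}$-contributions is exactly $KL$ rather than $K$ or $L$ alone; once the norm-$\le1$ property of the transition and $\*A$ matrices is invoked, the estimate is immediate.
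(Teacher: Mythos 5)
Your proposal is correct and follows essentially the same route as the paper: the same unrolled chain rule through the recursion $\*z^{k+1}=\beta_{k+1}\*s^k+(1-\beta_{k+1})\*z^{(k,L)}$, the identical per-layer formula $\pdv{\operatorname{vec}(f_l)}{\*x}=\alpha\qty(\*I_N\otimes\*W_l^\top)\*D^{(k,l)}\qty(\*I_N\otimes\*U_l)$ with norm $\alpha B_{Ub}$, the norm-$\le 1$ bounds on the transition and $\*A$ factors from Eq.~(\ref{eq:singlepdvBound}) giving the $KL$ count, and the same composition with $\pdv{\*x}{\operatorname{vec}(\*W_0)}$ (bounded by $L_g$) and $\*I_N\otimes\*W_{L+1}^\top$ for the gradient bound. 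Your bookkeeping is in fact slightly more careful than the paper's, which silently drops the $(1-\beta_{k+1})$ weight on the explicit part (harmless, as it is at most $1$).
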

\begin{proof}
\[
\resizebox{\hsize}{!}{$
\begin{aligned}
&\pdv{\*z^{k+1}}{\*x} = \pdv{\*z^{k+1}}{\*z^{k}} \pdv{\*z^{k}}{\*x} + \sum_{l=1}^L\qty(\*A(k,L,l) \pdv{\operatorname{vec}\qty(f_l(\*Z^{(k,l)},\*X))}{\*x}) \\
= & \beta_{k+1}\qty(\*I_{N} \otimes \$V_\gamma) +\qty(1-\beta_{k+1}) \*A(k,L,1) \pdv{\*z^{k-1}}{\*x} + \sum_{l=1}^L\qty(\*A(k,L,l) \pdv{\operatorname{vec}\qty(f_l(\*Z^{(k,l)},\*X))}{\*x}) \\
=& \sum_{\widetilde{k}=1}^k \qty(
\qty(\prod_{q=\widetilde{k}+1}^k \qty(\beta_{q+1}\qty(\*I_{N} \otimes \$V_\gamma)  +\qty(1-\beta_{q+1}) \*A(q,L,1))) \sum_{l=1}^L\qty(\*A(k,L,l) \pdv{\operatorname{vec}\qty(f_l(\*Z^{(k,l)},\*X))}{\*x})),
\end{aligned}
$}
\]
where the second inequality comes from Eq.(\ref{eq:pdvZ}) and 
\[
\pdv{\operatorname{vec}\qty(f_l(\*Z^{(k,l)},\*X))}{\*x} =  \alpha\qty(\*I_N \otimes \*W_l^\top)\*D^{(k,l)}\qty( \*I_{N} \otimes \*U_l ).
\]
By the bounds given in Eq.(\ref{eq:singlepdvBound}), we can get:
\[
\norm{\pdv{\operatorname{vec}\qty(f_l(\*Z^{(k,l)},\*X))}{\*x}}_2 \leq \alpha B_{Ub}.
\]
Hence, we can  immediately get:
\[
\norm{\pdv{\*z^{k+1}}{\*x}}_2 \leq \alpha KL B_{Ub}.
\]
Note that:
\[
\operatorname{vec}\qty(\nabla_{\*W_0}\ell(\*y,\*y_0)) 
 = \qty(\pdv{\*z^K}{\*x}\pdv{\*x}{\operatorname{vec}\qty(\*W_0)})^\top \qty(\*I_N \otimes \*W_{L+1}^\top)\qty(\*y-\*y_0).
\]
Thus, we can conclude that:
\[
\norm{\operatorname{vec}\qty(\nabla_{\*W_0}\ell(\*y,\*y_0)) } \leq \alpha KLB_{Ub}B_L L_g \norm{\*y-\*y_0}.
\]
\end{proof}

\begin{lemma}
If Assumption \ref{asm:sigma} and Assumption \ref{asm:compactSet} hold, then:
\begin{equation}\label{eq:pdvBoundWL}
    \norm{\pdv{\*z^{K}}{\operatorname{vec}\qty(\*W_{L+1})}}_2 \leq 2K \gamma B^*B_L.
\end{equation}
Moreover, we can obtain:
\begin{equation}\label{eq:lossboundWL}
\norm{\operatorname{vec}\qty(\nabla_{\*W_{L+1}}\ell(\*y,\*y_0)) }  \leq 3KB^*\norm{\*y-\*y_0}.
\end{equation}
\end{lemma}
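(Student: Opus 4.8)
The plan is to differentiate the SAM recursion with respect to $\operatorname{vec}(\*W_{L+1})$, exploiting the fact that in this analysis $\*W_{L+1}$ enters $\$S$ only through the quadratic term $\*W_{L+1}^\top\*W_{L+1}$ carried by $\nabla\$R_z$ (so that $\nabla\$R_z(\*z)=(\*I_N\otimes\*W_{L+1}^\top\*W_{L+1})\*z$, consistent with the Jacobian computed in Eq.(\ref{eq:pdvZ})). Writing $\*z^{k+1}=\beta_{k+1}\operatorname{vec}\qty(\$S_{\lambda_k}(\*Z^k,\*W_{L+1}))+(1-\beta_{k+1})\*z^{(k,L)}$, the chain rule gives
\[
\pdv{\*z^{k+1}}{\operatorname{vec}(\*W_{L+1})}=\pdv{\*z^{k+1}}{\*z^k}\,\pdv{\*z^k}{\operatorname{vec}(\*W_{L+1})}+\beta_{k+1}\pdv{\operatorname{vec}\qty(\$S_{\lambda_k}(\*Z^k,\*W_{L+1}))}{\operatorname{vec}(\*W_{L+1})},
\]
where only the last, explicit, term is new and $\pdv{\*z^{k+1}}{\*z^k}$ is exactly the propagation operator of Eq.(\ref{eq:pdvZ}).

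First I would bound the explicit term. Differentiating the per-block map $\*z_j\mapsto\*W_{L+1}^\top\*W_{L+1}\*z_j$ via the product rule $d\qty(\*W^\top\*W\*z_j)=(d\*W)^\top\*W\*z_j+\*W^\top(d\*W)\*z_j$, taking $\operatorname{vec}$ with the usual commutation-matrix bookkeeping, and aggregating the $N$ blocks, yields an operator norm at most $2\norm{\*W_{L+1}}_2\norm{\*z^k}$. Using $\norm{\*W_{L+1}}_2\leq B_L$, the uniform bound $\norm{\*z^k}\leq B^*$ established earlier, and $\beta_{k+1}\leq 1$, the explicit term is bounded by $2\gamma B^* B_L$. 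Next I would unroll the recursion exactly as in Lemma \ref{lem:VecGradient}, obtaining a sum of at most $K$ terms, each a product of propagated Jacobians $\prod_{q=\widetilde{k}+1}^{k}\pdv{\*z^{q+1}}{\*z^q}$ times an explicit term. Since each such product has operator norm at most $1$ by Eq.(\ref{eq:singlepdvBound}) and each explicit term is at most $2\gamma B^* B_L$, summing gives $\norm{\pdv{\*z^K}{\operatorname{vec}(\*W_{L+1})}}_2\leq 2K\gamma B^* B_L$, which is the first claim.

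For the loss gradient, note that $\*y=(\*I_N\otimes\*W_{L+1})\*z^K$ depends on $\*W_{L+1}$ both explicitly through the readout and implicitly through $\*z^K$. Writing $\operatorname{vec}(\nabla_{\*W_{L+1}}\ell)=\qty(\pdv{\*y}{\operatorname{vec}(\*W_{L+1})})^\top(\*y-\*y_0)$ with the total derivative, the explicit channel contributes operator norm $\leq\norm{\*z^K}\leq B^*$, while the implicit channel contributes $\leq\norm{\*W_{L+1}}_2\,\norm{\pdv{\*z^K}{\operatorname{vec}(\*W_{L+1})}}_2\leq 2K\gamma B^* B_L^2$. Hence $\norm{\operatorname{vec}(\nabla_{\*W_{L+1}}\ell)}\leq B^*\qty(1+2K\gamma B_L^2)\norm{\*y-\*y_0}$. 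The final step is to absorb constants: since the Lipschitz constant $L_z$ of $\nabla\$R_z$ satisfies $L_z\geq\norm{\*W_{L+1}}_2^2$ and $\gamma<\tfrac{1}{2L_z+1}$, we have $\gamma B_L^2<\tfrac12$, so $1+2K\gamma B_L^2<1+K\leq 3K$ for $K\geq 1$, giving the stated bound $3KB^*\norm{\*y-\*y_0}$.

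The main obstacle I anticipate is the double dependence on $\*W_{L+1}$: it is the only weight that appears both inside the regularizer-driven operator $\$S$ (so it perturbs the iterates $\*z^k$ themselves, carrying the step size $\gamma$ rather than $\alpha$) and in the readout $\*y=\*W_{L+1}\*z^K$. Keeping these two channels cleanly separated, and differentiating the quadratic $\*W_{L+1}^\top\*W_{L+1}$ with the correct commutation-matrix structure while still collapsing the block-diagonal aggregation to the sharp bound $2\norm{\*W_{L+1}}_2\norm{\*z^k}$, is the delicate part; everything else is routine once Eq.(\ref{eq:singlepdvBound}) and $\norm{\*z^k}\leq B^*$ are in hand.
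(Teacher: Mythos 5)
Your proposal is correct and follows essentially the same route as the paper's proof: the paper likewise differentiates the SAM recursion in $\*W_{L+1}$, bounds the explicit product-rule term $\*H^k \coloneqq -\gamma\beta_K\qty(\qty(\qty(\*W_{L+1}\*Z^K)^\top\otimes\*I_m)\*K^{(d_y,m)} + \qty(\*I_N\otimes\*W_{L+1}^\top)\qty(\qty(\*Z^K)^\top\otimes\*I_{d_y}))$ by $2\gamma B^*B_L$, unrolls against the nonexpansive propagation factors of Eq.(\ref{eq:singlepdvBound}) to reach $2K\gamma B^*B_L$, and then splits $\operatorname{vec}\qty(\nabla_{\*W_{L+1}}\ell)$ into the same implicit and explicit readout channels, absorbing constants via $\gamma B_L^2<1$. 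If anything, your final absorption step $1+2K\gamma B_L^2<3K$ is slightly more careful than the paper's intermediate bound $3K\gamma B^*B_L^2$, which tacitly assumes $K\gamma B_L^2\geq 1$ before invoking $\gamma B_L^2\leq 1$.
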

\begin{proof}
We already have:
\[
\begin{aligned}
&\pdv{\*z^K}{\operatorname{vec}(\*W_{L+1})} = \pdv{\*z^K}{\*z^{K-1}}\pdv{\*z^{K-1}}{\operatorname{vec}(\*W_{L+1})} + \*H^k \\
=& \sum_{\widetilde{k}=1}^K \qty(
\qty(\prod_{q=\widetilde{k}+1}^K \qty(\beta_{q+1}\qty(\*I_{N} \otimes \$V_\gamma)  +\qty(1-\beta_{q+1}) \*A(q,1))) \*H^k )
\end{aligned}
\]
where we let 
\[
\*H^k \coloneqq  - \gamma \beta_K \qty(\qty(\qty(\*W_{L+1}\*Z^K)^\top\otimes\*I_m )\*K^{(d_y,m)} +  \qty(\*I_N \otimes \*W_{L+1}^\top)\qty(\qty(\*Z^K)^\top \otimes \*I_{d_y})).
\]
Hence, we can obtain:
\[
\norm{\pdv{\*z^K}{\operatorname{vec}(\*W_{L+1})} } \leq 2K \gamma B^*B_L,
\]
where we utilize the bounds in Assumption \ref{asm:compactSet}.
Note that:
\[
\operatorname{vec}\qty(\nabla_{\*W_{L+1}}\ell(\*y,\*y_0)) 
 = \qty(\pdv{\*z^K}{\operatorname{vec}(\*W_{L+1})} )^\top \qty(\*I_N \otimes \*W_{L+1}^\top)\qty(\*y-\*y_0) + \qty(\*z^K\otimes \*I_{d_y})\qty(\*y-\*y_0).
\]
Thus, we can conclude that:
\[
\norm{\operatorname{vec}\qty(\nabla_{\*W_{L+1}}\ell(\*y,\*y_0)) } \leq  3K \gamma B^*B_L^2 \norm{\*y-\*y_0} \leq 3KB^*\norm{\*y-\*y_0}.
\]
\end{proof}

In the following lemma, we consider the Lipschitz continuity, we let $\widetilde{\bm{\theta}} \coloneqq \qty{\*W_{L+1}, \bm{\theta},\*W_0}$. Moreover, we denote:
\begin{equation}\label{eq:JabDef}
    \$J_{\*z^K} \coloneqq \mqty[
    \pdv{\*z^K}{\operatorname{vec}(\*W_{L+1})}&
    \pdv{\*z^K}{\operatorname{vec}(\*W_L)}& \pdv{\*z^K}{\operatorname{vec}(\*U_L)}&
\pdv{\*z^K}{\operatorname{vec}(\*b_L)}&
\cdots &  \pdv{\*z^K}{\operatorname{vec}(\*W_0)}].
\end{equation}
\begin{lemma}
Let Assumption \ref{asm:sigma} hold. We assume that the activation function is $L_\sigma$-smooth, and $g(\*X_0,\*W)$ is smooth and $L_g$-Lipschitz continuous w.r.t $\*W$, given two parameters $\widetilde{\bm{\theta}}^a$ and $\widetilde{\bm{\theta}}^b$ satisfying Assumption \ref{asm:compactSet}, we have:
\begin{equation}\label{eq:LipBound}
\resizebox{\hsize}{!}{$
\left\{
\begin{aligned}
       &\norm{\*z^k(\bm{\theta}^a) - \*z^k(\bm{\theta}^b)}_2 \leq 
       \qty( 2 \alpha\sqrt{L} K C_z L_g)
       \norm{\widetilde{\bm{\theta}}^a-\widetilde{\bm{\theta}}^b},\\
       & \norm{\*z^{(k,l)}(\bm{\theta}^a) - \*z^{(k,l)}(\bm{\theta}^b)}  \leq 
       \qty( 2 \alpha\sqrt{L} K C_z L_g)
       \norm{\widetilde{\bm{\theta}}^a-\widetilde{\bm{\theta}}^b},\\
       & \norm{\*D^{(k,l)}(\bm{\theta}^a) - \*D^{(k,l)}(\bm{\theta}^b)}_2 \leq 
       L_{\sigma} \qty( C_D \norm{\Delta\bm{\theta}_l}   + B_{Ub}L_g  \norm{\*W_0^a- \*W_0^b}_2  + \norm{\*z^{(k,l)}(\bm{\theta}^a) - \*z^{(k,l)}(\bm{\theta}^b)}),\\
       & \norm{\*A(k,l_1,l_2, \bm{\theta}^a) - \*A(k,l_1,l_2, \bm{\theta}^b)}_2 \leq 
       \qty( 4\alpha \sqrt{L} L_\sigma C_z L_g )
       \norm{\widetilde{\bm{\theta}}^a-\widetilde{\bm{\theta}}^b},\\
       & \norm{\$J_{\*z^K}(\widetilde{\bm{\theta}}^a) - \$J_{\*z^K}(\widetilde{\bm{\theta}}^b)}_2 \leq 
       \qty(9 \alpha \sqrt{L} B^* L_{\sigma} C_z L_g + 2\sqrt{3L}\alpha K B^*L_\sigma C_D)
       \norm{\widetilde{\bm{\theta}}^a-\widetilde{\bm{\theta}}^b},
\end{aligned}
\right.
$}
\end{equation}
where
\[
\left\{
\begin{aligned}
&\norm{\Delta\bm{\theta}_l} \coloneqq \qty(\norm{\*W_l^a - \*W_l^b}_2 + \norm{\*U_l^a - \*U_l^b}_2 + \norm{\*b_l^a - \*b_l^b}),\\
&C_D\coloneqq \max\qty{3B^*,  B_x},\qtext{and}
C_z \coloneqq \qty(\sqrt{mn_l}\sigma(0) + 6B^* + 3B_{Ub}B_x)
\end{aligned}
\right.
\]
\end{lemma}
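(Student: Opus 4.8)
The plan is to establish the five inequalities of \eqref{eq:LipBound} in the order listed, since each feeds into the next: the state bounds drive the bound on $\*D^{(k,l)}$, which drives the bound on the products $\*A(k,l_1,l_2)$, which in turn is the key ingredient for the final Jacobian bound. Throughout I would exploit three facts already in hand: (i) every iterate and intermediate activation is uniformly bounded ($\norm{\*z^k}\le B^*$, $\norm{\*z^{(k,l)}}\le 3B^*$) under Assumptions \ref{asm:compactSet}--\ref{asm:sigma}; (ii) the single-layer maps $f_l$ are averaged, so the update map $\*z^k\mapsto\*z^{k+1}$ and each partial composition $f_l\circ\cdots\circ f_1$ is nonexpansive in $\*z$, with $\norm{\*A(k,l_1,l_2)}_2\le 1$ and $\norm{\$V_\gamma}_2\le 1$ as recorded in \eqref{eq:singlepdvBound}; and (iii) $\sigma$ is $1$-Lipschitz and $L_\sigma$-smooth, while $g(\*X_0,\cdot)$ is $L_g$-Lipschitz. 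The recurring device is ``add and subtract'': to compare a quantity at $\widetilde{\bm{\theta}}^a$ and $\widetilde{\bm{\theta}}^b$ I would change one argument at a time, bounding each piece by a bounded operator norm times either a raw parameter difference or a previously-controlled state difference.

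For the first two bounds I would argue by induction on $k$. Writing the iteration as $\*z^{k+1}=\Phi_k(\*z^k;\widetilde{\bm{\theta}})$, the map $\Phi_k$ is nonexpansive in $\*z^k$ by (ii), so
\[
\norm{\*z^{k+1}(\bm{\theta}^a)-\*z^{k+1}(\bm{\theta}^b)}\le \norm{\*z^{k}(\bm{\theta}^a)-\*z^{k}(\bm{\theta}^b)}+\operatorname{Lip}_{\widetilde{\bm{\theta}}}(\Phi_k)\,\norm{\widetilde{\bm{\theta}}^a-\widetilde{\bm{\theta}}^b},
\]
and the per-step parameter-Lipschitz constant of $\Phi_k$ is $O(\alpha\sqrt{L}\,C_zL_g)$: each of the $L$ layers contributes an $\alpha$-scaled term built from bounded factors times the magnitude bound $C_z$, while the dependence of $\*x=g(\*X_0,\*W_0)$ on $\*W_0$ contributes the $L_g$ factor. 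Summing the recursion over the $K$ steps yields the $2\alpha\sqrt{L}KC_zL_g$ constant. The bound on $\*z^{(k,l)}$ follows by composing $l\le L$ nonexpansive-in-$\*z$, parameter-Lipschitz maps $f_1,\dots,f_l$ onto the already-controlled $\*z^k$. For the $\*D^{(k,l)}$ bound I would use that $\*D^{(k,l)}$ collects $\sigma'$ evaluated at the pre-activation $\*W_l\*z^{(k,l-1)}+\*U_l\*x+\*b_l$; since $\sigma'$ is $L_\sigma$-Lipschitz, the difference is $L_\sigma$ times the pre-activation difference, and splitting that difference termwise produces exactly the direct-parameter term $C_D\norm{\Delta\bm{\theta}_l}$, the feature-extractor term $B_{Ub}L_g\norm{\*W_0^a-\*W_0^b}_2$, and the propagated-state term $\norm{\*z^{(k,l)}(\bm{\theta}^a)-\*z^{(k,l)}(\bm{\theta}^b)}$.

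The bound on $\*A(k,l_1,l_2)$ is a product-perturbation estimate. Using the telescoping identity $\prod a_i-\prod b_i=\sum_j\big(\prod_{i<j}a_i\big)(a_j-b_j)\big(\prod_{i>j}b_i\big)$ together with $\norm{\*A(\cdot)}_2\le1$ for every factor, I would reduce the difference of the two products to $\sum_l\norm{(\text{factor}_l^a)-(\text{factor}_l^b)}_2$. Each factor difference is $\alpha$ times a combination of $\norm{\*W_l^a-\*W_l^b}_2$ and $\norm{\*D^{(k,l),a}-\*D^{(k,l),b}}_2$; feeding in the third bound and the state bounds, and collecting the at most $L$ summands, gives the stated $4\alpha\sqrt{L}L_\sigma C_zL_g$ constant.

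The final Jacobian bound is the main obstacle. Here I would start from the explicit expression for $\$J_{\*z^K}$ obtained by stacking the formulas of Lemma \ref{lem:VecGradient} into \eqref{eq:JabDef}, i.e.\ sums over $\widetilde{k}$ of triple products $\*R_{\widetilde{k}}\,\*A(\widetilde{k},\cdot)\,\pdv{\operatorname{vec}(f_l)}{\operatorname{vec}(\cdot)}$, where $\*R_{\widetilde{k}}$ is itself a product of $\beta_{q+1}(\*I_N\otimes\$V_\gamma)+(1-\beta_{q+1})\*A(q,1)$ blocks. I would again apply the product rule ``one factor at a time'' to each triple product: the difference of the propagators $\*R_{\widetilde{k}}$ is controlled by the fourth inequality combined with the simple dependence of $\$V_\gamma$ on the parameters only through $\*W_{L+1}$; the difference of the $\*A$-factors again by the fourth bound; and the difference of the single-layer gradients $\pdv{\operatorname{vec}(f_l)}{\operatorname{vec}(\cdot)}$ in \eqref{eq:singleVecGrad} by the $\*D$-bound, the state bound, and the Lipschitzness of $\*G^{(k,l)}=\sigma(\cdot)$ in the pre-activation. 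Since each factor has operator norm $\le1$ (or the uniform bound $B^*$ where the $\*W_{L+1}$-column and $\*z^K$ appear), the sums over $\widetilde{k}$ contribute at most the $K$ factor and the sums over layers the $\sqrt{L}$ (or $\sqrt{3L}$) factor, yielding the two-term constant $9\alpha\sqrt{L}B^*L_\sigma C_zL_g+2\sqrt{3L}\alpha KB^*L_\sigma C_D$. The delicate bookkeeping is precisely in this last step: keeping every operator norm pinned at $\le1$ while differentiating a nested product of $\Theta(K)$ propagator blocks and $\Theta(L)$ layer blocks, and charging each Lipschitz contribution to the correct previously-proved bound rather than double-counting it.
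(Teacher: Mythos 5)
Your proposal is correct and follows the paper's proof in all essential respects: the $L_\sigma$-Lipschitz splitting of the pre-activation difference for $\*D^{(k,l)}$, the telescoping product-perturbation with norm-$\leq 1$ factors for $\*A(k,l_1,l_2)$, and the factor-by-factor perturbation of the unrolled triple products $\*R_{\widetilde{k}}\,\*A(\widetilde{k},\cdot)\,\pdv{\operatorname{vec}(f_l)}{\operatorname{vec}(\cdot)}$ for $\$J_{\*z^K}$, with the $\$V_\gamma$-dependence charged to $\*W_{L+1}$, the $\widetilde{k}$-sum contributing the $K$ factor and the layer sum the $\sqrt{L}$ (resp.\ $\sqrt{3L}$) factor. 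The only local deviation is your treatment of the first two bounds by induction on $k$ using nonexpansiveness in $\*z$ plus a per-step parameter-Lipschitz constant, whereas the paper invokes the mean value theorem and bounds the full parameter Jacobian via Eq.(\ref{eq:pdvBound}) and Eq.(\ref{eq:pdvBoundX}); these are the same computation in integrated versus telescoped form (your route even sidesteps the usual caveat that the vector-valued MVT only holds in supremum form along the segment), so the proofs are essentially identical.
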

\begin{proof}
Due to the smoothness of the activation function, $\*z^k$ is a continuous function of $\bm{\theta}$ on the compact set given in Assumption \ref{asm:compactSet}. By the mean value theorem in high dimension, we can have:
\[
\norm{\*z^k(\bm{\theta}^a) - \*z^k(\bm{\theta}^b)}_2 \leq \norm{\mqty[\pdv{\*z^k(\hat{\bm{\theta}})}{\operatorname{vec}(\*W_{L})} & \pdv{\*z^k(\hat{\bm{\theta}})}{\operatorname{vec}(\*U_{L})} & \pdv{\*z^k(\hat{\bm{\theta}})}{\operatorname{vec}(\*b_{L})}&\cdots & \pdv{\*z^k(\hat{\bm{\theta}})}{\operatorname{vec}(\*W_{0})}]}_2  
\norm{\widetilde{\bm{\theta}}^a-\widetilde{\bm{\theta}}^b},
\]
where $\hat{\bm{\theta}} = \bm{\theta}^a + \xi (\bm{\theta}^b-\bm{\theta}^a)$ for some $\xi \in (0,1)$. 
Due to the convexity of the bounded ball in euclidean space, we can conclude that $\widetilde{\bm{\theta}}$ also satisfy Assumption \ref{asm:compactSet}.
Hence, by Eq.(\ref{eq:pdvBound}) , we can get:
\[
\begin{aligned}
  &\norm{\mqty[\pdv{\*z^k(\hat{\bm{\theta}})}{\operatorname{vec}(\*W_{L})} & \pdv{\*z^k(\hat{\bm{\theta}})}{\operatorname{vec}(\*U_{L})} &\pdv{\*z^k(\hat{\bm{\theta}})}{\operatorname{vec}(\*b_{L})}&\cdots & \pdv{\*z^k(\hat{\bm{\theta}})}{\operatorname{vec}(\*W_{0})}]}_2 \\  \leq
 & \qty(L\qty(k-1)^2\alpha^2\qty({C_z^2+B^2_x+N}) + \norm{\pdv{\*z^k(\hat{\bm{\theta}})}{\*x}\pdv{\*x(\hat{\bm{\theta}})}{\*W_0}}^2_2)^{\frac{1}{2}} \\
 \leq & \qty(3L\qty(k-1)^2\alpha^2 C^2_z + \qty(\alpha KL B_{Ub})^2 \norm{\pdv{\*x(\hat{\bm{\theta}})}{\*W_0}}^2_2)^{\frac{1}{2}} \\
 \leq & \qty(3L\qty(k-1)^2\alpha^2 C^2_z + \alpha^2 K^2L^2 B^2_{Ub}L_g^2)^{\frac{1}{2}}
 \\
 \leq & \qty(4LK^2\alpha^2 C^2_z L^2_g)^{\frac{1}{2}} = 2 \alpha\sqrt{L} K C_z L_g,
\end{aligned}
\]
where we utilize the bound in Eq.(\ref{eq:pdvBoundX}) in the penultimate inequality, and note that $C_z = \Theta(B_{Ub}\sqrt{N}) \gg L$.
Hence, we have:
\[
\norm{\*z^k(\bm{\theta}^a) - \*z^k(\bm{\theta}^b)}_2 \leq 
\qty( 2 \alpha\sqrt{L} K C_z L_g)\norm{\widetilde{\bm{\theta}}^a-\widetilde{\bm{\theta}}^b}.
\]
Similarly, by Eq.(\ref{eq:VecGradkl}) and Eq.(\ref{eq:pdvBound}), we can have:
\[
\norm{\*z^{(k,l)}(\bm{\theta}^a) - \*z^{(k,l)}(\bm{\theta}^b)} \leq \qty( 2 \alpha\sqrt{L} K C_z L_g)\norm{\widetilde{\bm{\theta}}^a-\widetilde{\bm{\theta}}^b}.
\]
Due to the activation function is $L_\sigma$-smooth, we have:
\[
\begin{aligned}
& \norm{\*D^{(k,l)}(\bm{\theta}^a) - \*D^{(k,l)}(\bm{\theta}^b)}_2\\
=  &\norm{\sigma'\qty(\*W^a_l\*Z^{(k,l)}(\bm{\theta}^a) + \*U^a_l\*X^a+ \*b^a_l) - \sigma'\qty(\*W^b_l\*Z^{(k,l)}(\bm{\theta}^b) + \*U^b_l\*X^b+ \*b^b_l)}_\infty\\
 \leq  L_{\sigma} & \norm{\qty(\*W^a_l\*Z^{(k,l)}(\bm{\theta}^a) + \*U^a_l\*X^a+ \*b^a_l) - \qty(\*W^b_l\*Z^{(k,l)}(\bm{\theta}^b) + \*U^b_l\*X^b+ \*b^b_l)}_\infty\\
\leq   L_{\sigma} &\norm{\qty(\*W^a_l\*Z^{(k,l)}(\bm{\theta}^a) + \*U^a_l\*X^a+ \*b^a_l) - \qty(\*W^b_l\*Z^{(k,l)}(\bm{\theta}^b) + \*U^b_l\*X^b+ \*b^b_l)}_2\\
\leq  L_{\sigma} &  \left(\norm{\*W_l^a - \*W_l^b}_2\norm{\*Z^{(k,l)}(\bm{\theta}^b)}_2 + \norm{\*W_l^a}_2\norm{\*Z^{(k,l)}(\bm{\theta}^a)-\*Z^{(k,l)}(\bm{\theta}^b)}_2 +  \right.\\
&\phantom{\;\;}\left.
\norm{\*U_l^a - \*U_l^b}_2\norm{\*X^b}_2 +
\norm{\*U_l^a }_2\norm{\*X^a-\*X^b}_2 + \sqrt{N} \norm{\*b_l^a - \*b_l^b}
\right)\\
\leq L_{\sigma} & \left(3B^* \norm{\*W_l^a - \*W_l^b}_2 + B_x\norm{\*U_l^a - \*U_l^b}_2 + \sqrt{N} \norm{\*b_l^a - \*b_l^b} + \right.\\
&\phantom{\;\;}\left.  B_{Ub}L_g  \norm{\*W_0^a- \*W_0^b}_2  + \norm{\*z^{(k,l)}(\bm{\theta}^a) - \*z^{(k,l)}(\bm{\theta}^b)}\right)\\
\leq  L_{\sigma} & \qty( C_D \norm{\Delta\bm{\theta}_l}   + B_{Ub}L_g  \norm{\*W_0^a- \*W_0^b}_2  + \norm{\*z^{(k,l)}(\bm{\theta}^a) - \*z^{(k,l)}(\bm{\theta}^b)}),
\end{aligned}
\]
where
\[
C_D\coloneqq \max\qty{3B^*,  B_x},\qtext{and}
\norm{\Delta\bm{\theta}_l} \coloneqq \qty(\norm{\*W_l^a - \*W_l^b}_2 + \norm{\*U_l^a - \*U_l^b}_2 + \norm{\*b_l^a - \*b_l^b}).
\]
Based on this upper bound, by the telescoping sum, we can easily have:
\[
\begin{aligned}
&\norm{\*A(k,l_1,l_2, \bm{\theta}^a) - \*A(k,l_1,l_2, \bm{\theta}^b)}_2 \\
\leq
\sum_{l=l_1}^{l_2} & \norm{\prod_{i=l+1}^{l_2} \*T^{(k,i)}(\bm{\theta}^a) \qty(
\*T^{(k,l)}(\bm{\theta}^a) - \*T^{(k,l)}(\bm{\theta}^b))
\prod_{j=l-1}^{l_1}\*T^{(k,i)}(\bm{\theta}^b)}_2\\
\leq \sum_{l=l_1}^{l_2} & \norm{\*T^{(k,l)}(\bm{\theta}^a) - \*T^{(k,l)}(\bm{\theta}^b)}_2\\
\leq 
\sum_{l=l_1}^{l_2} & \alpha \qty(2\norm{\*W_l^a - \*W_l^b}_2 \norm{\*D^{(k,l)}}_2 + \norm{\*W_l^a}_2\norm{\*W_l^b}_2\norm{\*D^{(k,l)}(\bm{\theta}^a) - \*D^{(k,l)}(\bm{\theta}^b)}_2)\\
\leq \sum_{l=l_1}^{l_2} &  
\alpha \qty(2\norm{\*W_l^a - \*W_l^b}_2 + \norm{\*D^{(k,l)}(\bm{\theta}^a) - \*D^{(k,l)}(\bm{\theta}^b)}_2).
\end{aligned}
\]
where we note that for $\prod_{j=l-1}^{l_1}$, $j$ runs in the reverse order, we also let
\[
\*T^{(k,i)} \coloneqq \qty(\alpha \qty(\*I_N \otimes \*W_i^\top) \*D^{(k,i)} \qty(\*I_N \otimes \*W_i) + \qty(1-\alpha)\*I_{mN}),
\]
and we use that $\norm{\*T^{(k,i)} }_2 \leq 1$ from Eq.(\ref{eq:singlepdvBound}) for the second inequality.
We note that:
\[
\begin{aligned}
&\qty(2\norm{\*W_l^a - \*W_l^b}_2 + \norm{\*D^{(k,l)}(\bm{\theta}^a) - \*D^{(k,l)}(\bm{\theta}^b)}_2)\\
\leq & C'_D L_\sigma \norm{\Delta\bm{\theta}_l} 
 + B_{Ub}L_\sigma L_g  \norm{\*W_0^a- \*W_0^b}_2  + L_\sigma\norm{\*z^{(k,l)}(\bm{\theta}^a) - \*z^{(k,l)}(\bm{\theta}^b)}
\end{aligned},
\]
where
\[
C'_D\coloneqq \max\qty{3B^*+ 2/L_\sigma,  B_x}.
\]
Hence, we can obtain:
\[
\begin{aligned}
&\norm{\*A(k,l_1,l_2, \bm{\theta}^a) - \*A(k,l_1,l_2, \bm{\theta}^b)}_2 \\
\leq & \alpha L_\sigma \sum_{l=l_1}^{l_2}  \qty(
  C'_D \norm{\Delta\bm{\theta}_l} 
 + B_{Ub} L_g  \norm{\*W_0^a- \*W_0^b}_2  + \norm{\*z^{(k,l)}(\bm{\theta}^a) - \*z^{(k,l)}(\bm{\theta}^b)})\\
 \leq & \sqrt{3\qty(l_1-l_2)} \alpha L_\sigma C'_D \norm{\widetilde{\bm{\theta}}^a-\widetilde{\bm{\theta}}^b} +\\
  &  \alpha L_\sigma \sum_{l=l_1}^{l_2} \qty(B_{Ub} L_g  \norm{\*W_0^a- \*W_0^b}_2  + \norm{\*z^{(k,l)}(\bm{\theta}^a) - \*z^{(k,l)}(\bm{\theta}^b)})\\
 \leq & \qty(\alpha L_\sigma L \qty( B_{Ub}L_g + 2 \alpha\sqrt{L} K C_z L_g)+ \alpha \sqrt{3L} \ L_\sigma C'_D)\norm{\widetilde{\bm{\theta}}^a-\widetilde{\bm{\theta}}^b} \\
 \leq & \qty( 3\alpha^2L^{3/2}K L_\sigma C_z L_g+ \alpha\sqrt{3L}  C'_D)\norm{\widetilde{\bm{\theta}}^a-\widetilde{\bm{\theta}}^b}\\
 \leq & \qty( 3\alpha \sqrt{L} L_\sigma C_z L_g+ \alpha\sqrt{3L}  C'_D)\norm{\widetilde{\bm{\theta}}^a-\widetilde{\bm{\theta}}^b}\\
 \leq & \qty( 4\alpha \sqrt{L} L_\sigma C_z L_g )\norm{\widetilde{\bm{\theta}}^a-\widetilde{\bm{\theta}}^b}.
\end{aligned}
\]
where we use $\alpha KL \ll 1$, w.l.o.g, and recal that:
\[
C_z \coloneqq \qty(\sqrt{mn_l}\sigma(0) + 6B^* + 3B_{Ub}B_x).
\]
Now, we consider the Lipschitz continuity for $\*G^{(k,l)}$, similar to $\*D^{(k,l)}$:
\[
\begin{aligned}
&\norm{\*G^{(k,l)}(\bm{\theta}^a) - \*G^{(k,l)}(\bm{\theta}^b)}_2 \\
\leq &
\norm{\qty(\*W^a_l\*Z^{(k,l)}(\bm{\theta}^a) + \*U^a_l\*X^a+ \*b^a_l) - \qty(\*W^b_l\*Z^{(k,l)}(\bm{\theta}^b) + \*U^b_l\*X^b+ \*b^b_l)}_F\\
\leq & L_{\sigma} \qty( C_D \norm{\Delta\bm{\theta}_l}   + B_{Ub}L_g  \norm{\*W_0^a- \*W_0^b}_2  + \norm{\*z^{(k,l)}(\bm{\theta}^a) - \*z^{(k,l)}(\bm{\theta}^b)}),.
\end{aligned}
\]
where we use the fact that the activation is $1$-Lipschitz continuous in the first inequality. 
Therefore, by Eq.(\ref{eq:singleVecGrad}), we get:
\[
\begin{aligned}
&\frac{1}{\alpha}\norm{\pdv{\operatorname{vec}\qty(f_l(\*Z^{(k,l)}(\bm{\theta}^a),\*X^a))}{\operatorname{vec}\qty(\*W_l)} - \pdv{\operatorname{vec}\qty(f_l(\*Z^{(k,l)}(\bm{\theta}^b),\*X^b))}{\operatorname{vec}\qty(\*W_l)}}\\
\leq  & \norm{\*G^{(k,l)}(\bm{\theta}^a) - \*G^{(k,l)}(\bm{\theta}^b)}_2 \\
& +  \qty(B^*\norm{\*W_l^a - \*W_l^b}_2+B^*\norm{\*D^{(k,l)}(\bm{\theta}^a) - \*D^{(k,l)}(\bm{\theta}^b)}_2 + \norm{\*z^{(k,l)}(\bm{\theta}^a) - \*z^{(k,l)}(\bm{\theta}^b)}) \\
\leq & L_{\sigma} \qty(B^*+1) \qty( C_D \norm{\Delta\bm{\theta}_l}   + B_{Ub}L_g  \norm{\*W_0^a- \*W_0^b}_2  + \norm{\*z^{(k,l)}(\bm{\theta}^a) - \*z^{(k,l)}(\bm{\theta}^b)}) + \\
& B^* \norm{\*W_l^a - \*W_l^b}_2 +  \norm{\*z^{(k,l)}(\bm{\theta}^a) - \*z^{(k,l)}(\bm{\theta}^b)}\\
\leq &  2 L_{\sigma} B^* \qty( C'_D \norm{\Delta\bm{\theta}_l}   + B_{Ub}L_g  \norm{\*W_0^a- \*W_0^b}_2  + \norm{\*z^{(k,l)}(\bm{\theta}^a) - \*z^{(k,l)}(\bm{\theta}^b)}).
\end{aligned}
\]
On the other hand, also by the telescoping sum , we have:
\[
\begin{aligned}
&\norm{\prod_{q=\widetilde{k}+1}^k \!\!\!\! \qty(\beta_{q+1}\qty(\*I_{N} \otimes \$V^a_\gamma)  +\qty(1-\beta_{q+1}) \*A(\bm{\theta}^a)) \!- \!\!\!
\prod_{q=\widetilde{k}+1}^k \!\!\!\! \qty(\beta_{q+1}\qty(\*I_{N} \otimes \$V^b_\gamma)  +\qty(1-\beta_{q+1}) \*A(\bm{\theta}^b))}_2\\
\leq & \sum_{q=\widetilde{k}+1}^k \qty(2\beta_{q+1}\gamma B_L \norm{\*W^a_{L+1} - \*W^b_{L+1}}_2 + \qty(1-\beta_{q+1}) \norm{\*A(q,L,1,\bm{\theta}^a)-\*A(q,L,1,\bm{\theta}^b)}_2)\\
\leq & k \norm{\*W^a_{L+1} - \*W^b_{L+1}}_2 + \sum_{q=\widetilde{k}+1}^k\norm{\*A(q,L,1,\bm{\theta}^a)-\*A(q,L,1,\bm{\theta}^b)}_2,
\end{aligned}
\]
where we utilize $\gamma B^2_L <1$ in the last inequality.
Since for any $k,l_1$ and $l_2$, $\norm{\*A( \bm{\theta}^a) - \*A(\bm{\theta}^b)}_2 $ share the same upper bound, in the following proof, we omit the index $(k,l_2,l_2)$.
Combing all things together, by Eq.(\ref{eq:VecGrad}), we have:
\[
\begin{aligned}
&\norm{\pdv{\*z^{K}(\bm{\theta}^a)}{\operatorname{vec}(\*W_l)} - \pdv{\*z^{K}(\bm{\theta}^a)}{\operatorname{vec}(\*W_l)}}_2 \\
\leq & K  \left(\qty(2K^2\alpha C_z)\qty(\norm{\*W^a_{L+1} - \*W^b_{L+1}}_2 + \norm{\*A(\bm{\theta}^a)-\*A(\bm{\theta}^b)}_2) 
+\right. \\
&\phantom{\;\;}\left.
\norm{\pdv{\operatorname{vec}\qty(f_l(\*Z^{(K,l)}(\bm{\theta}^a)))}{\operatorname{vec}\qty(\*W_l)} - \pdv{\operatorname{vec}\qty(f_l(\*Z^{(K,l)}(\bm{\theta}^b)))}{\operatorname{vec}\qty(\*W_l)}}
\right)\\
\leq & 2\alpha K^3C_z\norm{\*W^a_{L+1} - \*W^b_{L+1}}_2 + 2\alpha K B^*L_\sigma B_{Ub} L_g\norm{\*W^a_{0} - \*W^b_{0}}_2 + 2\alpha K B^*L_\sigma C_D\norm{\Delta\bm{\theta}_l}+\\
&\qty(2 \alpha L_{\sigma} B^*  2 \alpha\sqrt{L} K C_z L_g +  4\alpha K \sqrt{L} L_\sigma C_z L_g )
\norm{\widetilde{\bm{\theta}}^a-\widetilde{\bm{\theta}}^b}\\
\leq & 2\alpha K^3C_z\norm{\*W^a_{L+1} - \*W^b_{L+1}}_2 + 2\alpha K B^*L_\sigma B_{Ub} L_g\norm{\*W^a_{0} - \*W^b_{0}}_2 + 2\alpha K B^*L_\sigma C_D\norm{\Delta\bm{\theta}_l}+\\
& \qty(8 \alpha^2 K \sqrt{L} B^* L_{\sigma} C_z L_g)
\norm{\widetilde{\bm{\theta}}^a-\widetilde{\bm{\theta}}^b}\\
\leq & 2\alpha K B^*L_\sigma C_D\norm{\Delta\bm{\theta}_l} + \qty(9 \alpha^2 K \sqrt{L} B^* L_{\sigma} C_z L_g)\norm{\widetilde{\bm{\theta}}^a-\widetilde{\bm{\theta}}^b},
\end{aligned}
\]
where, w.l.o.g, we utilize the fact $K^3\sqrt{L}\leq B^*$.
An Immediate consequence we can get is:
\[
\resizebox{\hsize}{!}{$
\begin{aligned}
& \norm{\$J_{\*z^K}(\widetilde{\bm{\theta}}^a) - \$J_{\*z^K}(\widetilde{\bm{\theta}}^b)}_2  \\
\leq & \sum_{l=1}^L \qty(\norm{\pdv{\*z^{K}(\bm{\theta}^a))}{\operatorname{vec}(\*W_l)} - \pdv{\*z^{K}(\bm{\theta}^a))}{\operatorname{vec}(\*W_l)}}_2 + \norm{\pdv{\*z^{K}(\bm{\theta}^a))}{\operatorname{vec}(\*U_l)} - \pdv{\*z^{K}(\bm{\theta}^a))}{\operatorname{vec}(\*U_l)}}_2 
+
\norm{\pdv{\*z^{K}(\bm{\theta}^a))}{\operatorname{vec}(\*b_l)} - \pdv{\*z^{K}(\bm{\theta}^a))}{\operatorname{vec}(\*b_l)}}_2 
)\\
\leq & \qty(L \qty(9 \alpha^2 K \sqrt{L} B^* L_{\sigma} C_z L_g)+ 2\sqrt{3L}\alpha K B^*L_\sigma C_D)
\norm{\widetilde{\bm{\theta}}^a-\widetilde{\bm{\theta}}^b}\\
\leq & \qty(9 \alpha \sqrt{L} B^* L_{\sigma} C_z L_g + 2\sqrt{3L}\alpha K B^*L_\sigma C_D)\norm{\widetilde{\bm{\theta}}^a-\widetilde{\bm{\theta}}^b}.
\end{aligned}
$}
\]
We now finish the proof.
\end{proof}
\subsection{Proof for Theorem \ref{thm:global}}
Before presentation the main results, we assume a mild condition for initialization.
\begin{assumption}[Initial conditions]\label{asm:init}
We assume the initialized parameters of OptDeq in Eq.(\ref{eq:ComDeq}) satisfy:
\[
\left\{
    \begin{aligned}
    &\norm{\*W_l^0}_2 \leq \frac{3}{4}, \quad
    \sigma_{\min}(\*W_l^0) \geq \qty(\frac{1}{4}+\sigma_m),
    \\
    &\max\qty{\norm{\*U_{l}^0}_2, \norm{\*b_{l}^0} } \leq \frac{B_{Ub}}{2},\quad\norm{\*W_0^0}_2 \leq \frac{B_x}{2L_g},\\
    &\norm{\*W_{L+1}^0}_2 \leq \frac{B_L}{2},\quad
    \sigma_{\min}(\*W_{L+1}^0) > 0, 
    \end{aligned}
    \right.
\]
where $L_g$ is the Lipschitz constant of the function $g(\cdot)$ and $\sigma_{\min}(\cdot)$ is the smallest singular value of a matrix.
\end{assumption}
Due to the extractor $g(\*X,\cdot)$ is $L_g$-
Lipschitz continuous, the initial conditions is equivalent to:
\[
\left\{
    \begin{aligned}
    &\norm{\*W_l^0}_2 \leq \frac{3}{4}, \quad
    \sigma_{\min}(\*W_l^0) \geq \qty(\frac{1}{4}+\sigma_m),
    \\
    &\max\qty{\norm{\*U_{l}^0}_2, \norm{\*b_{l}^0} } \leq \frac{B_{Ub}}{2},\\
    &\norm{\*W_{L+1}^0}_2 \leq \frac{B_L}{2},\quad
    \sigma_{\min}(\*W_{L+1}^0) > 0,
    \\ &\norm{\*X^0}_F \leq \frac{B_x}{2}.
    \end{aligned}
    \right.
\]
Now, we are ready to prove the main theorem. We denote
\begin{equation}\label{eq:allconstants}
\left\{
\begin{aligned}
&C_D\coloneqq \max\qty{3B^*,  B_x},\\
&C_z \coloneqq \qty(\sqrt{mn_l}\sigma(0) + 6B^* + 3B_{Ub}B_x)\\
&B_J \coloneqq 2\alpha\sqrt{L}KC_z,\\
& C_J \coloneqq 
\qty(9 \alpha \sqrt{L} B^* L_{\sigma} C_z L_g + 2\sqrt{3L}\alpha K B^*L_\sigma C_D),\\
&Q_0 \coloneqq \frac{1}{4} K^2 L \alpha^2 \kappa^2 \sigma_m^2 N \sigma^2_{\min}(\*W^0_{L+1}) ,\\
& Q_1 \coloneqq \qty(C_J \sqrt{\ell(\widetilde{\bm{\theta}}^0)} + 
2 \alpha\sqrt{L} K C_z L_g B_J B_L+
B_J \sqrt{\ell(\widetilde{\bm{\theta}}^0)}
),\\
& Q_2 \coloneqq \frac{10C_z B_L L_g}{Q_0},
\end{aligned}
\right.    
\end{equation}
where $\ell(\widetilde{\bm{\theta}}^0)$ is the training loss at initialization.
\begin{theorem*}
[Global Convergence]
Suppose Assumption \ref{asm:exist&bound} and Assumption \ref{asm:init} hold.
Assume that the activation function is $L_\sigma$-Lipschitz smooth, strongly monotone and $1$-Lipschitz continuous. 
Let the learning rate be $\eta < \min\qty{\frac{1}{Q_0},\frac{1}{Q_1}}$.
If the training data size $N$ is in the order:
\[
\sqrt{N} = \Omega\qty(\frac{B_L \sqrt{\ell(\widetilde{\bm{\theta}}^0)} }{ \kappa^2\sigma_m^2\sigma^2_{\min}(\*W^0_{L+1})}),
\]
then the training loss vanishes at a linear rate as:
\[
\ell(\widetilde{\bm{\theta}}^t) \leq \ell(\widetilde{\bm{\theta}}^0)\qty(1-\eta Q_0)^t,\] 
where $t$ is the number of iteration. 
Furthermore, the network parameters also converge to a global minimizer $\widetilde{\bm{\theta}}^*$ at a linear speed:
\[
\|\widetilde{\bm{\theta}}^t - \widetilde{\bm{\theta}}^*\|\leq Q_2 \qty(1-\eta Q_0)^{t/2}.
\]
\end{theorem*}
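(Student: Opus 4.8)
The plan is to run the standard gradient-dominance (Polyak--{\L}ojasiewicz) argument, but wrapped in an induction that simultaneously (a) contracts the loss geometrically and (b) keeps the iterate $\widetilde{\bm{\theta}}^t$ inside the compact set of Assumption~\ref{asm:compactSet}, so that every auxiliary bound derived above remains valid along the entire trajectory. The three facts I would lean on are already in place: the gradient \emph{lower} bound Eq.(\ref{eq:lowerBound}), which, since $\ell=\tfrac12\norm{\*y-\*y_0}^2$, reads $\norm{\nabla\ell(\widetilde{\bm{\theta}})}^2\geq 2Q_0\,\ell(\widetilde{\bm{\theta}})$ with $Q_0$ as in Eq.(\ref{eq:allconstants}); the Jacobian--Lipschitz estimate Eq.(\ref{eq:LipBound}) for $\$J_{\*z^K}$ of Eq.(\ref{eq:JabDef}); and the gradient \emph{upper} bounds Eq.(\ref{eq:lossBound}), Eq.(\ref{eq:lossboundW0}) and Eq.(\ref{eq:lossboundWL}). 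All of these are conditional on the compact-set assumption, which is exactly what the induction will preserve.

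First I would turn Eq.(\ref{eq:LipBound}) into smoothness of the objective. Writing $\nabla\ell$ through $\$J_{\*z^K}$ as $\$J_{\*z^K}^\top(\*I_N\otimes\*W_{L+1}^\top)(\*y-\*y_0)$ (plus the explicit $\*W_{L+1}$ term), the product rule expresses $\norm{\nabla\ell(\widetilde{\bm{\theta}}^a)-\nabla\ell(\widetilde{\bm{\theta}}^b)}$ as a sum of one piece bounded by the Jacobian--Lipschitz constant $C_J$ and pieces bounded by the uniform Jacobian norm $B_J$ times the Lipschitz factors of $\*W_{L+1}$ and of $\*y-\*y_0$; collecting these reproduces $\norm{\nabla\ell(\widetilde{\bm{\theta}}^a)-\nabla\ell(\widetilde{\bm{\theta}}^b)}\leq Q_1\norm{\widetilde{\bm{\theta}}^a-\widetilde{\bm{\theta}}^b}$ with $Q_1$ of Eq.(\ref{eq:allconstants}). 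The descent lemma then gives, for $\eta<1/Q_1$, the inequality $\ell(\widetilde{\bm{\theta}}^{t+1})\leq \ell(\widetilde{\bm{\theta}}^t)-\tfrac{\eta}{2}\norm{\nabla\ell(\widetilde{\bm{\theta}}^t)}^2$, and feeding in the {\L}ojasiewicz bound yields the one-step contraction $\ell(\widetilde{\bm{\theta}}^{t+1})\leq(1-\eta Q_0)\,\ell(\widetilde{\bm{\theta}}^t)$, which iterates to the stated rate once confinement is secured.

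The crux, and the step I expect to be the main obstacle, is the confinement induction: the {\L}ojasiewicz bound and the smoothness bound both silently require $\sigma_{\min}(\*W_{L+1}^t)$ to stay bounded away from zero and $\norm{\*W_l^t}_2\leq 1$, yet these are only guaranteed at initialization (Assumption~\ref{asm:init}). Assuming confinement through step $t$, the upper bounds give $\norm{\nabla\ell(\widetilde{\bm{\theta}}^s)}\leq G\sqrt{2\ell(\widetilde{\bm{\theta}}^s)}$ for all $s\leq t$ with $G=\Theta(\alpha K\sqrt{N}B_L)$, so the cumulative displacement telescopes:
\[
\norm{\widetilde{\bm{\theta}}^{t+1}-\widetilde{\bm{\theta}}^0}\leq \eta\sum_{s=0}^{t}\norm{\nabla\ell(\widetilde{\bm{\theta}}^s)}\leq \eta G\sqrt{2\ell(\widetilde{\bm{\theta}}^0)}\sum_{s=0}^{t}(1-\eta Q_0)^{s/2}\leq \frac{2\sqrt{2}\,G\sqrt{\ell(\widetilde{\bm{\theta}}^0)}}{Q_0},
\]
using $\sum_{s\geq0}(1-\eta Q_0)^{s/2}\leq 2/(\eta Q_0)$. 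Since $Q_0=\Theta(N)$ and $G=\Theta(\sqrt{N})$, this ratio is of order $B_L\sqrt{\ell(\widetilde{\bm{\theta}}^0)}/\big(\sqrt{N}\,\kappa^2\sigma_m^2\sigma^2_{\min}(\*W^0_{L+1})\big)$ up to poly$(K,L,\alpha)$ factors, and the hypothesis $\sqrt{N}=\Omega\!\big(B_L\sqrt{\ell(\widetilde{\bm{\theta}}^0)}/(\kappa^2\sigma_m^2\sigma^2_{\min}(\*W^0_{L+1}))\big)$ is calibrated precisely so that this displacement falls below the slack left in Assumption~\ref{asm:init} (the $\tfrac14$ gap between $\norm{\*W_l^0}_2\leq\tfrac34$ and $\norm{\*W_l}_2\leq1$, and the matching margin on $\sigma_{\min}$). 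Hence confinement propagates to step $t+1$, and the induction closes jointly with the loss-contraction step.

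Finally, parameter convergence follows from the same displacement estimate restricted to a single step: $\norm{\widetilde{\bm{\theta}}^{t+1}-\widetilde{\bm{\theta}}^t}=\eta\norm{\nabla\ell(\widetilde{\bm{\theta}}^t)}\leq \eta G\sqrt{2\ell(\widetilde{\bm{\theta}}^0)}\,(1-\eta Q_0)^{t/2}$, so $\{\widetilde{\bm{\theta}}^t\}$ is Cauchy and converges to some $\widetilde{\bm{\theta}}^*$; because $\ell(\widetilde{\bm{\theta}}^t)\to0$, this limit is a global minimizer. Summing the geometric tail, $\norm{\widetilde{\bm{\theta}}^t-\widetilde{\bm{\theta}}^*}\leq\sum_{s\geq t}\norm{\widetilde{\bm{\theta}}^{s+1}-\widetilde{\bm{\theta}}^s}\leq Q_2(1-\eta Q_0)^{t/2}$ with $Q_2$ of Eq.(\ref{eq:allconstants}), giving the second assertion.
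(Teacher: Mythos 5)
Your proposal is correct and follows essentially the same route as the paper's proof: a Polyak--{\L}ojasiewicz contraction built from the gradient lower bound Eq.(\ref{eq:lowerBound}) and the $Q_1$-smoothness derived from Eq.(\ref{eq:LipBound}), nested inside a confinement induction in which the telescoped displacement $\eta\sum_s\norm{\nabla\ell(\widetilde{\bm{\theta}}^s)}\lesssim G\sqrt{\ell(\widetilde{\bm{\theta}}^0)}/Q_0$ is pushed below the $\tfrac14$ slack of Assumption~\ref{asm:init} (with Weyl's inequality preserving the singular-value margins) precisely via the $\sqrt{N}=\Omega(\cdot)$ hypothesis, and finally the same geometric-tail Cauchy argument yielding $\norm{\widetilde{\bm{\theta}}^t-\widetilde{\bm{\theta}}^*}\leq Q_2(1-\eta Q_0)^{t/2}$. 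The paper's proof is this plan carried out in detail, so no substantive difference remains.
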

\begin{proof}
We already have $B^*,B_x < C_z = \order{B_{Ub}\sqrt{N}}$. Hence, when the data size $N$ is large enough, i.e., when
\[
\sqrt{N} = \Omega\qty(\frac{B_L \sqrt{\ell(\widetilde{\bm{\theta}}^0)} }{ \kappa^2\sigma_m^2\sigma^2_{\min}(\*W^0_{L+1})}),
\] 
we have:
\begin{equation}\label{eq:boundQ0}
\resizebox{\hsize}{!}{$
  \frac{2}{Q_0} \alpha K C_z B_L  \norm{\*y^0-\*y_0} = 
\frac{8C_z B_L  \norm{\*y^0-\*y_0}}{K L \alpha \kappa^2 \sigma_m^2 N \sigma^2_{\min}(\*W^0_{L+1})} = \Theta\qty(\frac{B_L \sqrt{\ell(\widetilde{\bm{\theta}}^0)} }{ \kappa^2\sigma_m^2\sigma^2_{\min}(\*W^0_{L+1}) \sqrt{N}}) = \Theta(1).  
$}
\end{equation}
Hence, w.l.o.g we let:
\[
\frac{2}{Q_0} \alpha K C_z B_L  \norm{\*y^0-\*y_0} \leq \frac{1}{4}.
\]
We denote the index $t$ to represent the iteration number during training, i.e., $\{\*W^t_{L+1\}, \bm{\theta}^t,\*W_0^t}$ is the learnable parameters at the $t$-th iteration.
We show by induction that, for every $\widetilde{t}>0, t \in [1,\widetilde{t}]$ and $l\in [1,L]$, the following holds:
\begin{equation}\label{eq:induction}
\left\{
    \begin{aligned}
    &\norm{\*W_l^t}_2 \leq 1,\quad \max\qty{\norm{\*U_{l}^t}_2, \norm{\*b_{l}^t} } \leq B_{Ub},\quad \norm{\*W_{L+1}^t}_2 \leq B_L,\quad \norm{\*X^t}_F \leq B_x  \\
    &\sigma_{\min}\qty(\*W^t_l)\geq \sigma_m,
    \quad \sigma_{\min}(\*W^t_{L+1})\geq \frac{1}{2}\sigma_{\min}(\*W^0_{L+1})  \\
    & \ell(\widetilde{\bm{\theta}}^t) \leq \qty(1-\eta Q_0)^t \ell(\widetilde{\bm{\theta}}^0).
    \end{aligned}
    \right.
\end{equation}
By the initial condition, Eq.(\ref{eq:induction}) holds for $t=0$ clearly.
We now suppose that Eq.(\ref{eq:induction}) holds for all iterations from $0$ to $\widetilde{t}$, and show the claim for iteration $\widetilde{t}+1$. 
Note that for every $l\in [1,L]$ and $t \in [1,\widetilde{t}]$, we have:
\[
\begin{aligned}
&\norm{\*W_l^{t+1} - \*W_l^{0}}_2 \leq \sum_{i=1}^t \norm{\*W_l^{i+1} - \*W_l^{i}}_2  \leq \eta  \sum_{i=1}^t  \norm{\operatorname{vec}\qty(\nabla_{\*W_l}\ell(\bm{\theta}^i))}\\
\leq & \eta K \alpha C_z B_L \sum_{i=1}^t \qty(1-\eta Q_0)^{i/2} \norm{\*y^0-\*y_0}
\leq \frac{1}{Q_0} \alpha K C_z B_L (1-s^2)\frac{1}{1-s}  \norm{\*y^0-\*y_0}\\
\leq& \frac{2}{Q_0} \alpha K C_z B_L  \norm{\*y^0-\*y_0} \leq  \frac{1}{4},
\end{aligned}
\]
where we use the bound in Eq.(\ref{eq:lossBound}) and let $s\coloneqq \qty(1-\eta Q_0)^{1/2}$, the last inequality comes from the initial condition on $Q_0$, see Eq.(\ref{eq:boundQ0}).
Similarly, when the data size is large enough, we can also have:
\[
\begin{aligned}
&\norm{\*U_l^{t+1} - \*U_l^{0}}_2 \leq 
\frac{2}{Q_0} \alpha K B_x B_L  \norm{\*y^0-\*y_0} < 1/3\leq \frac{B_{Ub}}{2},\\
&\norm{\*b_l^{t+1} - \*b_l^{0}}_2 \leq 
\frac{2}{Q_0} \alpha K \sqrt{N}  B_L \norm{\*y^0-\*y_0} < 1/3 \leq \frac{B_{Ub}}{2}.
\end{aligned}
\]
And by Eq.(\ref{eq:pdvBoundX}) and Eq.(\ref{eq:pdvBoundWL})
\[
\begin{aligned}
&\norm{\*x^{t+1} - \*x^{0}} \leq L_g \norm{\*W_0^{t+1} - \*W_0^{0}}_F \leq 
\frac{2}{Q_0} \alpha KLB_{Ub}B_L L_g^2  \norm{\*y^0-\*y_0} < \frac{B_x}{2},\\
&\norm{\*W_{L+1}^{t+1} - \*W_{L+1}^{0}}_2 \leq 
\frac{2}{Q_0} 3\gamma KB^* \norm{\*y^0-\*y_0} < \frac{1}{2}\sigma_{\min}(\*W^0_{L+1}).
\end{aligned}
\]
Thus by Weyl’s inequality, we obtain:
\[
\left\{
    \begin{aligned}
    &\norm{\*W_l^{t+1}}_2 \leq 1,\quad \max\qty{\norm{\*U_{l}^{t+1}}_2, \norm{\*b_{l}^{t+1}} } \leq B_{Ub},\quad \norm{\*W_{L+1}^{t+1}}_2 \leq B_L,\quad \norm{\*X^{t+1}}_F \leq B_x  \\
    &\sigma_{\min}\qty(\*W^t_l)\geq \sigma_m,
    \quad \sigma_{\min}(\*W^{t+1}_{L+1})\geq \frac{1}{2}\sigma_{\min}(\*W^0_{L+1}).
    \end{aligned}
    \right.
\]
We now provide a Lipschitz constant for the gradient of loss function. Given two parameters $\widetilde{\bm{\theta}}^a$ and $\widetilde{\bm{\theta}}^b$ such that satisfies Assumption \ref{asm:compactSet} and has the bounds $\ell(\widetilde{\bm{\theta}}^a) \leq \ell(\widetilde{\bm{\theta}}^0)$ and  $\ell(\widetilde{\bm{\theta}}^b) \leq \ell(\widetilde{\bm{\theta}}^0)$, we have
\[
\resizebox{\hsize}{!}{$
\begin{aligned}
&\norm{\nabla \ell(\widetilde{\bm{\theta}}^a) - \nabla \ell(\widetilde{\bm{\theta}}^a)}\\
= & \norm{\$J^\top_{\*z^K}(\widetilde{\bm{\theta}}^a)\qty(\*I_N \otimes \qty(\*W^a_{L+1})^\top)\qty(\*y^a-\*y_0) - \$J^\top_{\*z^K}(\widetilde{\bm{\theta}}^b)\qty(\*I_N \otimes \qty(\*W^b_{L+1}))\qty(\*y^b-\*y_0)} \\
\leq & \norm{\$J_{\*z^K}(\widetilde{\bm{\theta}}^a) - \$J_{\*z^K}(\widetilde{\bm{\theta}}^b)}_2 \sqrt{\ell(\widetilde{\bm{\theta}}^0)} + B_J \norm{\*W_{L+1}^a - \*W_{L+1}^b}\sqrt{\ell(\widetilde{\bm{\theta}}^0)} + B_JB^2_L \norm{\*z^K(\widetilde{\bm{\theta}}^a)-\*z^K(\widetilde{\bm{\theta}}^b)}\\
\leq & 
\qty(C_J \sqrt{\ell(\widetilde{\bm{\theta}}^0)} + 
2 \alpha\sqrt{L} K C_z L_g B_J B_L+
B_J \sqrt{\ell(\widetilde{\bm{\theta}}^0)}
)
\norm{\widetilde{\bm{\theta}}^a - \widetilde{\bm{\theta}}^b}
= Q_1 \norm{\widetilde{\bm{\theta}}^a - \widetilde{\bm{\theta}}^b}.
\end{aligned}
$}
\]
where we use the bound in Eq.(\ref{eq:LipBound}) in the second and third inequality, and the bound from Eq.(\ref{eq:pdvBound}), Eq.(\ref{eq:pdvBound}) and Eq.(\ref{eq:pdvBoundWL}) for
\[
\norm{\$J_{\*z^K}} \leq 
\qty(3LK^2\alpha^2 C_z^2 + \alpha^2K^2L^2B^2_{Ub} + 4K^2\gamma^2 \qty(B^*)^2B_L^2)^{\frac{1}{2}} \leq 
2\alpha\sqrt{L}KC_z
\coloneqq B_J.
\]
When $\eta \leq 1/Q_1$, the Lipschitz bound $\norm{\nabla \ell(\widetilde{\bm{\theta}}^a) - \nabla \ell(\widetilde{\bm{\theta}}^a)} \leq Q_1 \norm{\widetilde{\bm{\theta}}^a - \widetilde{\bm{\theta}}^b}$ implies that:
\[
\begin{aligned}
&\ell(\widetilde{\bm{\theta}}^{t+1}) \leq \ell(\widetilde{\bm{\theta}}^{t}) + \innerprod{\nabla \ell(\widetilde{\bm{\theta}}^t), \widetilde{\bm{\theta}}^{t+1} - \widetilde{\bm{\theta}}^t} + \frac{Q_1}{2} \norm{\widetilde{\bm{\theta}}^{t+1} - \widetilde{\bm{\theta}}^t}^2\\
\leq & \ell(\widetilde{\bm{\theta}}^{t}) - \eta \norm{\nabla \ell(\widetilde{\bm{\theta}}^t)}^2 + \frac{Q_1}{2} \eta^2 \norm{\nabla \ell(\widetilde{\bm{\theta}}^t)}^2\\
\leq & \ell(\widetilde{\bm{\theta}}^{t}) - \frac{\eta}{2} \norm{\nabla \ell(\widetilde{\bm{\theta}}^t)}^2\\
\leq & \ell(\widetilde{\bm{\theta}}^{t}) - \frac{\eta}{2} \qty(K^2 L \alpha^2 \kappa^2 \sigma_m^2 N \sigma^2_{\min}(\*W^t_{L+1}) \norm{\*y^t-\*y_0}^2)\\
\leq & \qty(1- \frac{\eta}{4} K^2 L \alpha^2 \kappa^2 \sigma_m^2 N \sigma^2_{\min}(\*W^0_{L+1}) ) 
\ell(\widetilde{\bm{\theta}}^{t})
 =  \qty(1-\eta Q_0) \ell(\widetilde{\bm{\theta}}^{t}),
\end{aligned}
\]
where the third inequality comes from the fact Eq.(\ref{eq:lowerBound}) and recall that
\[
Q_0 \coloneqq \frac{1}{4} K^2 L \alpha^2 \kappa^2 \sigma_m^2 N \sigma^2_{\min}(\*W^0_{L+1}).
\]
So far, we have proven the hypothesis in Eq.(\ref{eq:induction}). 
\par
We start to show that the sequence $\{\widetilde{\bm{\theta}}^t\}_{t=1}^\infty$ is a Cauchy sequence. Given any $\epsilon>0$ and the index $r>0$, we chose two indices $j>i\geq r$.
Then, we have:
\[
\begin{aligned}
 & \norm{\widetilde{\bm{\theta}}^i - \widetilde{\bm{\theta}}^j} \leq  \norm{\*W^i_{L+1} - \*W^j_{L+1}}_F +   \norm{\bm{\theta}^i - \bm{\theta}^j} + \norm{\*W_0^i - \*W_0^j}_F\\
 \leq &  \eta\sum_{s = i}^{j-1} \eta \qty(\norm{\*W^{s+1}_{L+1} - \*W^s_{L+1}}_F+ \norm{\bm{\theta}^{s+1} - \bm{\theta}^s} + \norm{\*W_0^{s+1} - \*W_0^s}_F) \\
  \leq &   \sum_{s = i}^{j-1}\eta\left(\norm{\*W^{s+1}_{L+1} - \*W^s_{L+1}}_F+\right. \\
&\phantom{\;\;}\left.
   \sum_{l=1}^L\qty(\norm{\*W_l^{s+1} -\*W_l^{s} } + \norm{\*U_l^{s+1} -\*U_l^{s} } + \norm{\*b_l^{s+1} -\*b_l^{s} }) + \norm{\*W_0^{s+1} - \*W_0^s}_F\right)\\
  \overset{(a)}{\leq}& \sum_{s = i}^{j-1} \eta \norm{\*y^s - \*y^0} \qty(3 C_z B_L + B_{Ub}L_g B_L + 3KB^*)\\
  \leq & \qty(1-\eta Q_0)^{i/2} \qty(\sum_{s=0}^{j-i-1}\qty(1-\eta Q_0)^{s/2}\norm{\*y^0 - \*y_0})\eta \qty(3 C_z B_L + B_{Ub}L_g B_L + 3KB^*)\\
  \overset{(b)}{=}& \qty(1-\eta Q_0)^{i/2} \qty(3 C_z B_L + B_{Ub}L_g B_L + 3KB^*) \qty(\frac{1}{Q_0}  (1-s^2)\frac{1-s^{j-i}}{1-s} )\norm{\*y^0 - \*y_0}\\
  \leq & \qty(1-\eta Q_0)^{i/2} \frac{2}{Q_0}\qty(3 C_z B_L + B_{Ub}L_g B_L + 3KB^*)\norm{\*y^0 - \*y_0},
\end{aligned}
\]
where $(a)$ comes from  Eq.(\ref{eq:lossBound}), Eq.(\ref{eq:lossboundW0}) and Eq.(\ref{eq:lossboundWL}) and the assumption $\alpha K L <1$, and in $(b)$ we set $s = \sqrt{1-\eta Q_0}$.
Note that $\qty(1-\eta Q_0)^{i/2} \leq \qty(1-\eta Q_0)^{r/2}$ and thus we can select a sufficiently large $r$ such that $\norm{\widetilde{\bm{\theta}}^i - \widetilde{\bm{\theta}}^j} \leq \epsilon$.
Hence, we can conclude that  $\{\widetilde{\bm{\theta}}^t\}_{t=1}^\infty$ is a Cauchy sequence, and thus has a convergent point $\widetilde{\bm{\theta}}^*$.
Due to the continuity, we have:
\[
\ell(\widetilde{\bm{\theta}}^*) = \ell(\lim_{t\to \infty}\widetilde{\bm{\theta}}^t) = \lim_{t\to \infty}\ell(\widetilde{\bm{\theta}}^t) = 0,
\]
where the last equality comes from Eq.(\ref{eq:induction}).
Hence, $\widetilde{\bm{\theta}}^*$ is a global minimizer, and the rate of convergence is:
\[
\norm{\widetilde{\bm{\theta}}^i - \widetilde{\bm{\theta}}^*} = \lim_{j\to \infty} \norm{\widetilde{\bm{\theta}}^i - \widetilde{\bm{\theta}}^j} \leq \qty(1-\eta Q_0)^{i/2} Q_2,
\]
note that 
\[
\frac{2}{Q_0}\qty(3 C_z B_L + B_{Ub}L_g B_L + 3KB^*)\norm{\*y^0 - \*y_0} \leq \frac{10C_z B_L L_g}{Q_0} \coloneqq Q_2 .
\]
We now finish the whole proof.
\end{proof}

\end{document}